\documentclass[11pt]{article}
\usepackage[margin=1in]{geometry}
\usepackage[numbers, compress]{natbib}

\usepackage[utf8]{inputenc} 
\usepackage[T1]{fontenc}    
\usepackage{hyperref}       
\usepackage{url}            
\usepackage{booktabs}       
\usepackage{amsfonts}       
\usepackage{nicefrac}       
\usepackage{microtype}      
\usepackage{xcolor}         
\usepackage{enumitem}
\usepackage{setspace}

\usepackage{yub}

\hypersetup{
    colorlinks,
    linkcolor={blue!50!black},
    citecolor={blue!50!black},
}
\colorlet{linkequation}{blue}

\mathchardef\mhyphen="2D
\renewcommand{\setto}{\leftarrow}
\newcommand{\up}[1]{\overline{#1}}
\newcommand{\low}[1]{\underline{#1}}

\newcommand{\Bonus}{\textsc{Bonus}}
\newcommand{\rf}{{\rm ref}}

\newcommand{\Unif}{{\rm Unif}}
\newcommand{\ba}{\bm{a}}
\newcommand{\bzero}{\bm{0}}
\renewcommand{\epsilon}{\eps}
\renewcommand{\hat}{\what}

\newcommand{\TV}{{\rm TV}}
\newcommand{\KL}{{\rm KL}}

\def\vilcb{\textsc{VI-LCB}}
\def\ucbviuplow{\textsc{UCBVI-UpLow}}
\def\truncpevi{\textsc{Truncated-PEVI-ADV}}
\def\pevi{\textsc{PEVI-Adv}}
\def\peviadv{\textsc{PEVI-Adv}}
\def\hoovi{\textsc{HOOVI}}

\def\shownotes{1}  
\ifnum\shownotes=1
\newcommand{\authnote}[2]{{\scriptsize $\ll$\textsf{#1 notes: #2}$\gg$}}
\else
\newcommand{\authnote}[2]{}
\fi


\usepackage{mathtools}
\mathtoolsset{showonlyrefs,showmanualtags}

\usepackage{dsfont}
\newcommand{\1}{\mathds{1}}
\renewcommand{\indic}[1]{\mathds{1}\left\{#1\right\}}

\newcommand{\Eqref}[1]{Eq.\eqref{#1}}

\setlength{\parskip}{5.1pt plus 1.7pt minus 1.7pt}


\allowdisplaybreaks

\title{Policy Finetuning: Bridging Sample-Efficient\\ Offline and Online Reinforcement Learning}

%

\author{
    Tengyang Xie\thanks{University of Illinois at Urbana-Champaign. E-mail: \texttt{\{tx10, nanjiang\}@illinois.edu}}
    \and
    Nan Jiang\footnotemark[1]
    \and
    Huan Wang\thanks{Salesforce Research. E-mail: \texttt{\{huan.wang, cxiong, yu.bai\}@salesforce.com}}
    \and
    Caiming Xiong\footnotemark[2]
    \and
    Yu Bai\footnotemark[2]
}

\begin{document}

\maketitle

\begin{abstract}


Recent theoretical work studies sample-efficient reinforcement learning (RL) extensively in two settings: learning interactively in the environment (online RL), or learning from an offline dataset (offline RL). However, existing algorithms and theories for learning near-optimal policies in these two settings are rather different and disconnected. Towards bridging this gap, this paper initiates the theoretical study of \emph{policy finetuning}, that is, online RL where the learner has additional access to a ``reference policy'' $\mu$ close to the optimal policy $\pi_\star$ in a certain sense. We consider the policy finetuning problem in episodic Markov Decision Processes (MDPs) with $S$ states, $A$ actions, and horizon length $H$.
We first design a sharp \emph{offline reduction} algorithm---which simply executes $\mu$ and runs offline policy optimization on the collected dataset---that finds an $\varepsilon$ near-optimal policy within $\widetilde{O}(H^3SC^\star/\varepsilon^2)$ episodes, where $C^\star$ is the single-policy concentrability coefficient between $\mu$ and $\pi_\star$. This offline result is the first that matches the sample complexity lower bound in this setting, and resolves a recent open question in offline RL.
We then establish an $\Omega(H^3S\min\{C^\star, A\}/\varepsilon^2)$ sample complexity lower bound for \emph{any} policy finetuning algorithm, including those that can adaptively explore the environment. This implies that---perhaps surprisingly---the optimal policy finetuning algorithm is either offline reduction or a purely online RL algorithm that does not use $\mu$.
Finally, we design a new hybrid offline/online algorithm for policy finetuning that achieves better sample complexity than both vanilla offline reduction and purely online RL algorithms, in a relaxed setting where $\mu$ only satisfies concentrability partially up to a certain time step.
Overall, our results offer a quantitative understanding on the benefit of a good reference policy, and make a step towards bridging offline and online RL. 

\end{abstract}

\section{Introduction}

Reinforcement learning (RL)---where agents learn to play sequentially in an environment to maximize a cumulative reward function---has achieved great recent success in many artificial intelligence challenges such as video games playing~\citep{mnih2015human,vinyals2019grandmaster}, large-scale strategy games (e.g. GO)~\citep{silver2016mastering,silver2017mastering}, robotic manipulation~\citep{akkaya2019solving,lee2020learning}, behavior learning in social scenarios~\citep{baker2019emergent}, and more. In many such challenging domains, achieving human-like or superhuman performance requires training the RL agent with millions of samples (steps of acting or game playing) or more. Understanding and improving the sample efficiency of RL algorithms has been a central topic of research.



Sample-efficient RL has been studied in a rich body of theoretical work in two main settings: \emph{online RL}, in which the learner has interactive access to the environment and can execute any policy; and \emph{offline RL}, in which the learner only has access to an ``offline'' dataset collected by executing some (one or many) policies within the environment, and is not allowed to further access the environment. These two settings share some common learning goals such as the sample complexity (number of episodes of playing) for finding the optimal policy. However, existing algorithms and theories in the online and offline setting seem rather different and disconnected---In online RL, state-of-the-art sample-efficient algorithms typically explore the entire environment, e.g. by using optimism to encourage visitation to unseen states and actions~\citep{brafman2002r,kearns2002near,jaksch2010near,osband2014model, jiang2017contextual, azar2017minimax, jin2018q, dann2019policy, jin2020provably,wang2020reinforcement}. In contrast, offline RL does not allow interactive exploration, and sample-efficient policy optimization algorithms typically focus on optimizing an unbiased (or downward biased) estimator of the value function~\citep{munos2003error,szepesvari2005finite,antos2008learning,munos2008finite,chen2019information,xie2020q,liu2020provably,yin2020near,jin2020pessimism,rashidinejad2021bridging}. It is therefore of interest to ask whether these two types of algorithms and theories can be connected in any way.

Further, on the empirical end, insights and patterns from offline RL often help as well in designing online RL algorithms and improving the sample efficiency in the real world. For example, there are online RL algorithms that alternate between data collection steps using a fixed policy, and policy improvement steps by learning on the collected dataset~\citep{janner2019trust}. The replay buffer in value-based algorithms can also be seen as a local form of offline (off-policy) policy optimization and are often be used in conjunction with optimistic exploration techniques~\citep{mnih2015human,hessel2018rainbow,taiga2019benchmarking}. The prevalence of these algorithms also offers practical motivations for us to look for a more unified understanding of online and offline RL in theory. These reasonings motivate us to ask the following question:
\begin{quote}
\emph{Can we bridge sample-efficient offline and online RL from a theoretical perspective?}
\end{quote}



This paper proposes \emph{policy finetuning}, a new RL setting that investigates the benefit of a good initial policy in reinforcement learning, and encapsulates challenges of both online and offline RL. In the policy finetuning problem, the learner is given interactive access to the environment and asked to learn a near-optimal policy, but in addition has access to a \emph{reference policy} $\mu$ that is good in certain aspects. This setting offers great flexibility for the algorithm design: For example, the algorithm is allowed to either simply collect data from $\mu$ and run any offline policy optimization algorithm on the collected dataset. It is also allowed to play any other policy interactively, including those that adaptively explores the environment. The policy finetuning problem offers a common playground for both offline and online types of algorithms, and has a unified performance metric (sample complexity for finding the near-optimal policy) for comparing their performance.


We study the policy finetuning problem theoretically in finite-horizon Markov Decision Processes (MDPs) with $H$ time steps, $S$ states, and $A$ actions. We summarize our contributions as follows.
\begin{itemize}[leftmargin=1.5pc]
\item We begin by considering \emph{offline reduction} algorithms which simply collect data using the reference policy $\mu$ and run an offline policy optimization algorithm on the collected dataset. This setting equivalent to offline RL with behavior policy $\mu$, and thus our result translates to a same result for offline RL as well. 

  

We design an algorithm \pevi~that is able to find an $\eps$-optimal policy (for small $\eps$) within $\wt{O}(H^3SC^\star/\eps^2)$ episodes of play, where $C^\star$ is the \emph{single-policy concentrability} coefficient between $\mu$ and some optimal policy $\pi_\star$ (Section~\ref{section:offline}). This improves over the best existing offline result by an $H^2$ factor in the same setting and matches the lower bound (up to log factors), thereby resolving the recent open question of~\citep{rashidinejad2021bridging} on tight offline RL under single-policy concentrability.

\item Under the same assumption on $\mu$, we establish an $\Omega(H^3S\min\set{C^\star, A}/\eps^2)$ sample complexity lower bound for \emph{any} policy finetuning algorithm, including those that adaptively explores the environment (Section~\ref{section:online-lower}). This implies that the optimal policy finetuning algorithm is either offline reduction via \pevi, or a ``purely'' online RL algorithm from scratch (such as UCBVI), depending on whether $C^\star\le A$. This comes rather surprising, as it rules out possibilities of combining online exploration and knowledge of $\mu$ to further improve the sample complexity over the aforementioned two baselines.

\item Finally, we consider policy finetuning in a more challenging setting where $\mu$ only satisfies concentrability up to a certain time step. We design a ``hybrid offline/online'' algorithm \hoovi~that combines online exploration and offline data collection, and show that it achieves better sample complexity than both vanilla offline reduction and purely online algorithms in certain cases (Section~\ref{section:online-upper}). This gives a positive example on when such hybrid algorithm designs are beneficial.
\end{itemize}

\subsection{Related work}

\paragraph{Sample-efficient online RL}
There is a long line of work on establishing provably sample-efficient online RL algorithms. A major portion of these works is concerned with the tabular setting with finitely many states and actions~\citep{brafman2002r,kearns2002near,jaksch2010near, azar2017minimax, dann2017unifying, agrawal2017optimistic, jin2018q, zhang2020almost}. For episodic MDPs with inhomogeneous transition functions with $S$ states, and $A$ actions, and horizon length $H$, the optimal sample complexity for finding the $\eps$ near-optimal policy is $\wt{O}(H^3SA/\eps^2)$, achieved by various algorithms such as UCBVI of~\citet{azar2017minimax} and UCB-Advantage of~\citet{zhang2020almost}. Our paper adapts the reference-advantage decomposition technique of~\citet{zhang2020almost} to designing sharp offline algorithms. Online RL with with large state/action spaces are also studied by using function approximation in conjunction with structural assumptions on the MDP~\citep{jin2020provably, zanette2020learning, zanette2020provably,agarwal2020flambe,osband2014model, jiang2017contextual,sun2019model,wang2020reinforcement,yang2020bridging,du2021bilinear,jin2021bellman}.

\paragraph{Offline RL}
Offline/batch RL studies the case where the agent only has access to an offline dataset obtained by executing a \emph{behavior policy} in the environment. Sample-efficient learning results in offline RL typically work by assuming either sup-concentrability assumptions~\citep{munos2003error, szepesvari2005finite,antos2008learning,munos2008finite,farahmand2010error,tosatto2017boosted,chen2019information,xie2020q}) or lower bounded exploration constants~\citep{yin2020near,yin2021near} to ensure the sufficient coverage of offline data over all (relevant) states and actions. However, such strong coverage assumptions can often fail to hold in practice~\citep{fujimoto2019off}. 
More recent works address this by using either policy constraint/regularization~\citep{fujimoto2019off,liu2020provably,kumar2019stabilizing,wu2019behavior}, or the pessimism principle to optimize conservatively on the offline data~\citep{kumar2020conservative,yu2020mopo,kidambi2020morel,jin2020pessimism,yin2021near,rashidinejad2021bridging}. The policy-constraint/regularization-based approaches prevent the policy to visit states and actions that has no or low coverage from the offline data. Our proposed offline RL algorithm \pevi~(Algorithm~\ref{algorithm:pevi-adv}) is inspired by the pessimistic value iteration algorithms of~\citep{jin2020pessimism, rashidinejad2021bridging} and achieves an improved sample complexity over these work under the same single-policy concentrability assumption on the behavior policy.
%

\paragraph{Bridging online and offline RL}
\citet{kalashnikov2018scalable} observed empirically that the performance of policies trained purely from offline data can be improved considerably by a small amount of additional online fine-tuning. A recent line of work studied low switching cost RL~\citep{bai2019provably,zhang2020almost,gao2021provably,wang2021provably}---which forbits online RL algorithms from switching its policy too often---as an interpolation between the online and offline settings. The same problem is also studied empirically as deployment-efficient RL~\citep{matsushima2020deployment,su2021musbo}. While we also attempt to bridge online and offline RL, our work differs from this line in that our policy finetuning setting allows a direct comparison between ``fully offline'' and ``fully online'' algorithms, whereas the low switching cost setting prohibits fully online algorithms.






\section{Preliminaries}

\paragraph{Markov Decision Processes}
In this paper, we consider episodic Markov decision processes (MDPs) with time-inhomogeneous transitions, specified by $M = (\mc{S}, \mc{A}, H, \P, r)$, where $\mc{S}$ is the state space, $\mc{A}$ is the action space, $H$ is the horizon length, $\P = \{\P_h\}_{h = 1}^{H}$ where $\P_h(\cdot|s,a)\in\Delta_{\mc{S}}$ is the transition probabilities at step $h$, and $r = \{r_h:\mc{S}\times\mc{A} \to [0,1]\}_{h = 1}^{H}$ are the deterministic\footnote{While we assume deterministic rewards for simplicity, our results can be straightforwardly generalized to stochastic rewards, as the major difficulty is in learning the transitions rather than learning the rewards.} reward functions at time step $h \in [H]$. Without loss of generality, we assume that the initial state $s_1$ is deterministic\footnote{Any MDP with stochastic $s_1$ is equivalent to an MDP with deterministic by creating a dummy initial state $s_0$ and increasing the horizon by 1.}.

%

\paragraph{Policies, value functions, visitation distributions}
A policy $\pi=\set{\pi_h(\cdot|s)}_{h\in[H],s\in\mc{S}}$ consists of distributions $\pi_h(\cdot|s)\in\Delta_{\mc{A}}$. We use $\E_\pi[\cdot]$ to denote the expectation with respect to the random trajectory induced by $\pi$ in the MDP $M$, that is, $(s_1, a_1, r_1, s_2, a_2, r_2, \dotsc, s_H, a_H, r_H)$, where $a_h = \pi_h(s_h)$, $r_h = r_h(s_h,a_h)$, $s_{h + 1} \sim \P_h(\cdot|s_h,a_h)$. For each policy $\pi$, let $V^\pi_h:\mc S \to \R$ and $Q^\pi_h:\mc S \times \mc A \to \R$ denote its value functions and Q functions at each time step $h \in [H]$, that is,
\begin{align}
V^\pi_h(s) \coloneqq \E_\pi \bigg[\sum_{h' = h}^{H} r_{h'}(s_{h'},a_{h'}) \bigg| s_h = s \bigg], ~ Q^\pi_h(s,a) \coloneqq \E_\pi \bigg[\sum_{h' = h}^{H} r_{h'}(s_{h'},a_{h'}) \bigg| s_h = s, a_h = a \bigg].
\end{align}
The operators $\P_h$ and $\V_h$ are defined as $[\P_h V_{h + 1}](s,a) \coloneqq \E[V_{h + 1}(s')|s_h = s, a_h = a]$ and $[\V_h V_{h + 1}](s,a) \coloneqq \Var[V_{h + 1}(s')|s_h = s, a_h = a]$ for any value function $V_{h + 1}$ at time step $h + 1$. We also use $\hat{\P}_h$ and $\hat{V}_h$ to denote empirical versions of these operators building on estimated models (which will be clear in the context).

We use $\pi_\star \coloneqq \argmax_{\pi} V_1^\pi(s_1)$ to denote any optimal policy, and $V^\star_h \coloneqq V^{\pi_\star}_h$ and $Q^\star_h \coloneqq Q^{\pi_\star}_h$ to denote the value function and Q function of $\pi^\star$ at all $h \in [H]$. Throughout this paper, our learning goal is to find an near-optimal policy $\hat{\pi}$ such that $V_1^\star(s_1) - V_1^{\hat{\pi}}(s_1)\le\epsilon$.

Finally, we let $d^\pi_h$ denote the state(-action) visitation distributions of $\pi$ at time step $h \in [H]$:
\begin{align}
d^\pi_h(s) \coloneqq \P(s_h = s|\pi),~~~\text{and}~ ~~d^\pi_h(s,a) \coloneqq \P(s_h = s, a_h = a|\pi).
\end{align}

\paragraph{Miscellaneous}
We use standard $O(\cdot)$ and $\Omega(\cdot)$ notation: $A=O(B)$ is defined as $A\le CB$ for some absolute constant $C>0$ (and similarly for $\Omega$). The tilded notation $A=\wt{O}(B)$ denotes $A\le CL\cdot B$ where $L$ is a poly-logarithmic factor of problem parameters.
 
%


\subsection{Policy Finetuning}
%
We now introduce the setting of \emph{policy finetuning}. A policy finetuning problem consists of an MDP $M$ and a \emph{reference policy} $\mu$. During the learning stage, the learner can perform the following two types of moves:
\begin{enumerate}[label=(\alph*), leftmargin=1.5pc]
\item Play an episode in the MDP $M$ using any policy (i.e. learner has online interactive access to $M$).
\item Access the values of the reference policy $\mu_h(a|s)$ for all $(h,s,a)$. For example, the learner can use it to sample actions $a\sim \mu_h(\cdot|s)$ for any $h,s$ for arbitrarily many times during learning.
\end{enumerate}
The goal of the learner is to output $\eps$ near-optimal policy $\hat{\pi}$ within as few episodes of play (within the MDP) as possible. 

A unique feature about the policy finetuning setting is that it allows both \emph{online interactive plays} via any online RL algorithm (not necessarily using $\mu$), as well as \emph{offline reduction} which simply collects data by executing the reference policy $\mu$ and do anything with the collected dataset. In particular, this means that any algorithm for offline policy optimization (based on offline datasets) also gives an algorithm for policy finetuning via this offline reduction. Therefore, policy finetuning offers a common playground for both online and offline type algorithms with a unified learning goal.

\paragraph{Assumption on reference policy}
Throughout most of this paper (except for Section~\ref{section:online-upper}), we consider the following assumption on the reference policy $\mu$.
\begin{assumption}[Single-policy concentrability]
  \label{assumption:one-point-c}
  The reference policy $\mu$ satisfies that
  \begin{align*}
    \max_{h\in[H], (s,a)\in\mc{S}\times\mc{A}} \frac{d^{\pi_\star}_h(s,a)}{d^\mu_h(s,a)} \le C^\star
  \end{align*}
  (with the convention $0/0=0$) for some \emph{deterministic} optimal policy $\pi_\star$ and constant $C^\star\ge 1$.
\end{assumption}
The single-policy concentrability characterizes the distance between the visitation distributions of the reference policy $\mu$ and some optimal policy $\pi^\star$. This assumption is considered in the recent work of~\citet{rashidinejad2021bridging} on offline RL and is more relaxed than previously assumed concentrability assumptions which typically requires the supremum concentrability against all possible $\pi$'s to be bounded~\citep{chen2019information}. We consider this assumption as it both allows efficient offline RL algorithms~\citep{rashidinejad2021bridging}, and is perhaps also a sensible measure of quality for the reference policy in policy finetuning.




\section{Sharp offline learning via reference-advantage decomposition}
\label{section:offline}


%




We begin by investigating the sharpest sample complexity for policy finetuning via the offline reduction approach. This requires us to design sharp offline RL algorithms that run on the dataset $\mc{D}$ collected by executing $\mu$. We emphasize that this is both an interesting offline RL question on its own right, and also important for our later discussions on lower bounds and other algorithms for policy finetuning, as the sharpest sample complexity via offline reduction provides a solid baseline.


%

%




\paragraph{Warm-up: \vilcb}
As a warm-up, we first show that a finite-horizon variant of the \vilcb~(Value Iteration with Lower Confidence Bounds) algorithm of~\citet{rashidinejad2021bridging} achieves sample complexity $\wt{O}(H^5SC^\star/\eps^2)$ for finding an $\eps$ near-optimal policy. This result is similar to the $\wt{O}(SC^\star/(1-\gamma)^5\eps^2)$ guarantee\footnote{\citep{rashidinejad2021bridging} can achieve a faster rate in case $C^\star\le 1+\wt{O}(1/N)$. However, we focus on the case $C^\star=1+\Theta(1)$ where the guarantee of~\citet{rashidinejad2021bridging} is $\wt{O}(SC^\star/(1-\gamma)^5\eps^2)$.} for the original \vilcb~in infinite-horizon discounted MDPs~\citep[Theorem 6]{rashidinejad2021bridging}. The main ingredients of our \vilcb~algorithm is a pessimistic value iteration procedure in which we perform value iteration on the empirical model estimated from the dataset $\mc{D}$, along with a negative Hoeffding bonus term to impose pessimism. Due to space constraints, the algorithm description (Algorithm~\ref{algorithm:vilcb}) and the proof of Theorem~\ref{theorem:vilcb} are deferred to Appendix~\ref{appendix:proof-vilcb}.
\begin{theorem}[\vilcb~for finite-horizon MDPs]
\label{theorem:vilcb}
  Suppose the reference policy $\mu$ satisfies the single-policy concentrability (Assumption~\ref{assumption:one-point-c}). Then with probability at least $1-\delta$, \vilcb~(Algorithm~\ref{algorithm:vilcb}) outputs a policy $\what{\pi}$ and value estimate $\hat{V}$ such that
  \begin{enumerate}[label=(\alph*), leftmargin=2.5pc]
  \vspace{-0.0em}
  \item $\max_{h\in[H]} \sum_{s\in\mc{S}} d^{\pi_\star}_h(s) (V^\star_h(s) - \hat{V}_h(s)) \le \eps$,
  \item $V_1^\star(s_1) - V_1^{\what{\pi}}(s_1) \le \eps$,
  \end{enumerate}
  \vspace{-0.0em}
 within $n=\wt{O}\paren{H^5SC^\star/\eps^2}$ episodes.
\end{theorem}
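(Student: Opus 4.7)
The plan is to analyze a pessimistic value iteration algorithm. Collect $n$ episodes by playing $\mu$ and, for a clean analysis, split them into $H$ disjoint batches of $n/H$ episodes each, using only batch $h$ to build the empirical transition $\what P_h(\cdot|s,a)$ and visit counts $N_h(s,a)$ at step $h$. Define a Hoeffding-style bonus $b_h(s,a) \asymp H\sqrt{\iota/\max\{N_h(s,a),1\}}$ with $\iota = \log(SAH/\delta)$, and run backward pessimistic value iteration:
\begin{align*}
\what Q_h(s,a) = \max\bigl\{0,\, r_h(s,a) + \what P_h \what V_{h+1}(s,a) - b_h(s,a)\bigr\}, \quad \what V_h(s) = \max_a \what Q_h(s,a),
\end{align*}
with $\what\pi_h(s) \in \argmax_a \what Q_h(s,a)$.

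\emph{Step 1 (Concentration and pessimism).} Because of data splitting, $\what V_{h+1}$ depends only on batches $h{+}1,\ldots,H$ and is independent of batch $h$; conditional on $\what V_{h+1}$, Hoeffding's inequality applied to $\what P_h(\cdot|s,a)\what V_{h+1}$ (bounded in $[0,H]$) combined with a union bound over $(h,s,a)$ gives $|(\what P_h - P_h)\what V_{h+1}|(s,a) \le b_h(s,a)$; an analogous bound holds for the deterministic $V^\star_{h+1}$. Backward induction on $h$ then establishes pessimism $\what Q_h \le Q^\star_h$ (and hence $\what V_h \le V^\star_h$) via
\begin{align*}
\what Q_h - Q^\star_h \;\le\; (\what P_h - P_h)V^\star_{h+1} + \what P_h(\what V_{h+1} - V^\star_{h+1}) - b_h \;\le\; 0.
\end{align*}

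\emph{Step 2 (Recursion and bonus sum).} Let $\delta_h(s) := V^\star_h(s) - \what V_h(s) \ge 0$ and write $\pi^\star_h(s)$ for the action chosen by the deterministic $\pi_\star$ at $(h,s)$. A symmetric decomposition, combined with the concentration from Step 1, yields $\delta_h(s) \le P_h \delta_{h+1}(s,\pi^\star_h(s)) + 2\, b_h(s,\pi^\star_h(s))$. Taking expectation under $d^{\pi_\star}_h$, using $\sum_s d^{\pi_\star}_h(s) P_h(\cdot|s,\pi^\star_h(s)) = d^{\pi_\star}_{h+1}$, and unrolling gives
\begin{align*}
\sum_s d^{\pi_\star}_h(s)\, \delta_h(s) \;\le\; 2 \sum_{h'=h}^{H} \E_{s\sim d^{\pi_\star}_{h'}}\!\bigl[b_{h'}(s,\pi^\star_{h'}(s))\bigr].
\end{align*}
A multiplicative Chernoff bound on the visit counts gives $N_h(s,\pi^\star_h(s)) \gtrsim (n/H)\, d^{\pi_\star}_h(s)/C^\star$ for all $s$ with $d^{\pi_\star}_h(s) \gtrsim HC^\star\iota/n$, while states below this threshold contribute only $O(\eps)$ in total and can be handled separately. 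Substituting and applying Cauchy--Schwarz via $\sum_s \sqrt{d^{\pi_\star}_h(s)} \le \sqrt S$ gives $\E_{d^{\pi_\star}_h}[b_h] \lesssim H\sqrt{HSC^\star\iota/n}$ per step, so $\sum_s d^{\pi_\star}_h(s)\delta_h(s) \lesssim H^{5/2}\sqrt{SC^\star\iota/n}$; setting this to $\eps$ yields $n = \wt O(H^5 S C^\star/\eps^2)$ and proves part~(a). Part~(b) follows from a symmetric backward induction---using the lower-sided concentration $(\what P_h - P_h)\what V_{h+1} \ge -b_h$---which shows $\what V_1(s_1) \le V^{\what\pi}_1(s_1)$, whence $V^\star_1(s_1) - V^{\what\pi}_1(s_1) \le \delta_1(s_1) \le \eps$ by~(a) at $h=1$.

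The main obstacle is controlling the cross term $(\what P_h - P_h)\what V_{h+1}$: without data splitting, $\what V_{h+1}$ is a data-dependent random vector, and the naive multinomial $\ell_1$ concentration $\|\what P_h - P_h\|_1 \lesssim \sqrt{S\iota/N_h}$ introduces a spurious $\sqrt{S}$ factor that would degrade the final rate to $\wt O(H^5 S^2 C^\star/\eps^2)$. Data splitting sidesteps this at the cost of a benign factor of $H$ in sample size, already absorbed into the $H^5$ horizon dependence. The sharper $\wt O(H^3 S C^\star/\eps^2)$ rate of \peviadv~in the main offline result will instead require a Bernstein-type reference-advantage decomposition that exploits the low variance of $V^\star_{h+1}$; here the simpler Hoeffding analysis suffices for the warm-up bound.
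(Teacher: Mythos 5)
Your proposal is correct and follows essentially the same route as the paper's proof: $H$-fold data splitting for conditional independence, a Hoeffding bonus with pessimistic value iteration, a monotonicity/pessimism lemma giving $\hat{V}_h \le V^{\hat{\pi}}_h \le V^\star_h$, a performance decomposition bounding the suboptimality by twice the sum of bonuses along $d^{\pi_\star}$, and then concentrability plus Cauchy--Schwarz to reach $\wt{O}(H^5 S C^\star/\eps^2)$. The only cosmetic difference is that you handle low-visitation states by an explicit threshold on $d^{\pi_\star}_h(s)$, whereas the paper uses a single binomial concentration bound $1/(N_h(s,a)\vee 1)\lesssim H\iota/(n d^\mu_h(s,a))$ that covers both regimes uniformly.
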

Theorem~\ref{theorem:vilcb} serves two main purposes. First, the $\wt{O}(H^5SC^\star/\eps^2)$ sample complexity asserted in Theorem~\ref{theorem:vilcb}(b) provides a first result for offline RL (and offline reduction for policy finetuning) under single-policy concentrability in finite-horizon MDPs. Second, the value estimation bound in Theorem~\ref{theorem:vilcb}(a) shows that the estimated value function $\hat{V}_h(s)$ provided by \vilcb~is close to the optimal value $V^\star_h(s)$ \emph{at every step}~$h\in[H]$, in terms of the weighted average with $d^{\pi_\star}_h(s)$. Our next algorithm \pevi~builds on this property so that \vilcb~can be used as a ``warm-up'' learning procedure that provides a high-quality value estimate.



%



\paragraph{Sharp offline learning via reference-advantage decomposition}
We now design a new sharp algorithm \peviadv~which achieves an improved $\wt{O}(H^3SC^\star/\eps^2)$ sample complexity (for small enough $\eps$). This improves over \vilcb~by $\wt{O}(H^2)$ and is the first algorithm that matches the sample complexity lower bound. \pevi~adds two new ingredients over \vilcb~in order to achieve the $\wt{O}(H^2)$ improvement:
\begin{enumerate}[leftmargin=1.5pc]
\item We replace the Hoeffding-style bonus in \vilcb~with a Bernstein-style bonus. This shaves off one $H$ factor in the sample complexity via the total variance property (Lemma~\ref{lemma:total-variance}).
\item Both \vilcb~and our \pevi~use data splitting to make sure that the estimated value $\hat{V}_{h+1}$ and empirical transitions $\hat{\P}_h$ are estimated using different subsets of $\mc{D}$, this yields conditional independence that is required in bounding concentration terms of the form $(\hat{\P}_h-\P_h)\hat{V}_{h+1}$. However, applied naively, this data splitting induces one undesired $H$ factor in the sample complexity as we need to split $\mc{D}$ into $H$ folds and thus each $\P_h$ is estimated using only $n/H$ episodes of data.

  As a technical crux of this algorithm, we overcome this issue by adapting the \emph{reference-advantage decomposition} technique of~\citet{zhang2020almost}. This technique proposes to learn an initial reference value function $\hat{V}^{\rf}$ of good quality in a certain sense, and then performing the following type of approximate value iteration (using the right-hand side as the algorithm update):
  \begin{align*}
    \P_h\hat{V}_{h+1} \approx \hat{\P}_{h,0}\hat{V}_{h+1}^{\rf} + \hat{\P}_{h,1}\paren{\hat{V}_{h+1} - \hat{V}^{\rf}_{h+1}}.
  \end{align*}
  Above, $\hat{V}_{h+1}$, $\hat{\P}_{h,0}$, and $\hat{\P}_{h,1}$ are estimated on three disjoint subsets of the data. The advantage of this approach is that, due to this new independence structure, $\hat{\P}_{h,0}$ for different $h\in[H]$ can be estimated on the same set of trajectories without $H$-fold splitting, which shaves off the $H$ factor within this part. On the other hand, estimating $\hat{\P}_{h,1}$ still requires $H$-fold splitting, yet this would not hurt the sample complexity if the magnitude of $(\hat{V}_{h+1} - \hat{V}^{\rf}_{h+1})$ is much smaller than its naive upper bound $O(H)$---we show this can be achieved by using \vilcb~to learn $\hat{V}^{\rf}$.
\end{enumerate}

\begin{algorithm}[t]
  \setstretch{1.0}
  \caption{Pessimistic Value Iteration with Reference-Advantage Decomposition (\pevi)}
  \label{algorithm:pevi-adv}
  \begin{algorithmic}[1]
    \REQUIRE Dataset $\mc{D}=\set{(s_1^{(i)}, a_1^{(i)}, r_1^{(i)}, \dots,s_H^{(i)}, a_H^{(i)}, r_H^{(i)})}_{i=1}^n$ collected by executing $\mu$ in $M$. 
    \STATE Split the dataset $\mc{D}$ into $\mc{D}_{\rf}$, $\mc{D}_{0}$ and $\set{\mc{D}_{h,1}}_{h=1}^H$ uniformly at random:
    \begin{equation*}
      n_{\rf} \coloneqq \abs{\mc{D}_{\rf}} = n/3, ~~ n_0 \coloneqq \abs{\mc{D}_0} = n/3, ~~ n_{1,h} \coloneqq \abs{\mc{D}_{h,1}} \defeq n/(3H) ~~ (n_1 \coloneqq n/3).
    \end{equation*}
    \vspace{-1em}
    \STATE Learn a reference value function $\what{V}^{\rf} \setto \vilcb(\mc{D}_{\rf})$ via \vilcb~(Algorithm~\ref{algorithm:vilcb}). ~\label{line:vilcb}
    \STATE Let $N_{h,0}(s,a)$ and $N_{h,0}(s,a,s')$ denote the visitation count of $(s,a)$ and $(s,a,s')$ at step $h$ within dataset $\mc{D}_0$. Construct empirical model estimates:
    \begin{align*}
      \what{\P}_{h,0}(s' | s,a) \setto \frac{N_{h,0}(s,a,s')}{N_{h,0}(s,a) \vee 1},~~~\textrm{and}~~~\what{r}_{h,0}(s,a) \setto r_{h}(s, a)\indic{N_{h,0}(s,a) \ge 1}.
    \end{align*}
    Similarly define $N_{h,1}(s,a)$, $N_{h,1}(s,a,s')$, $(\what{r}_{h,1},\what{\P}_{h,1})$ for all $h\in[H]$ based on dataset $\mc{D}_{h,1}$. 
    \STATE Set
    $ b_{h,0}(s,a) \setto c \cdot \bigg(\sqrt{\frac{ [\hat{\V}_{h,0}  \hat{V}^{\rf}_{h+1}](s,a) \iota}{N_{h,0}(s,a)\vee 1}} +
      \frac{H \iota}{N_{h,0}(s,a)\vee 1}\bigg)$
    for all $(h,s,a)$,
    where $\iota\defeq \log(HSA/\delta)$. 
    \STATE Set $\hat{V}_{H+1}(s)\setto 0$ for all $s\in\mc{S}$.
    \FOR{$h=H,\dots,1$}
    \STATE Set
    $
      b_{h,1}(s,a) \setto c \cdot \bigg(\sqrt{\frac{ [\hat{\V}_{h,1} (\hat{V}_{h+1} - \hat{V}^{\rf}_{h+1})](s,a) \iota}{N_{h,1}(s,a)\vee 1}} + \frac{H \iota}{N_{h,1}(s,a)\vee 1}\bigg)$.
    \STATE Perform pessimistic value update for all $(s,a)$:
    \begin{align*}
        & \hat{Q}_h(s,a) \setto \what{r}_{h,0}(s,a) + \brac{\what{\P}_{h,0} \hat{V}^{\rf}_{h+1}}(s,a) - b_{h,0}(s,a) + \brac{\what{\P}_{h,1} (\hat{V}_{h+1} - \hat{V}^{\rf}_{h+1})}(s,a) - b_{h,1}(s,a); \\
        & \hat{V}_h(s) \setto \brac{\max_a \hat{Q}_h(s, a)} \vee 0.
    \end{align*}
    \vspace{-1em}
    \STATE Set $\what{\pi}_h(s)\setto \argmax_a \hat{Q}_h(s,a)$ for all $s\in\mc{S}$.
    \ENDFOR
    \RETURN Policy $\what{\pi}=\set{\what{\pi}_h}_{h\in[H]}$.
  \end{algorithmic}
\end{algorithm}



We instantiate this plan by carefully using \vilcb~to learn the reference value function $\hat{V}^{\rf}$, combined with tight Bernstein bonuses, to shave off another $H$ factor in the sample complexity. The full \pevi~algorithm is provided in Algorithm~\ref{algorithm:pevi-adv}. We now present its guarantee in the following theorem. The proof can be found in Appendix~\ref{sec:mainthmproof}.
\begin{theorem}[Sharp offline learning via~\pevi]
  \label{theorem:main}
  Suppose the reference policy $\mu$ satisfies the single-policy concentrability (Assumption~\ref{assumption:one-point-c}). Then with probability at least $1-\delta$, \pevi~(Algorithm~\ref{algorithm:pevi-adv}) outputs a policy $\what{\pi}$ and value estimate $\hat{V}$ such that
  \begin{enumerate}[label=(\alph*), leftmargin=2.5pc]
  \item $\max_{h\in[H]} \sum_{s\in\mc{S}} d^{\pi_\star}_h(s) (V^\star_h(s) - \hat{V}_h(s)) \le \eps$,
  \item $V_1^\star(s_1) - V_1^{\what{\pi}}(s_1) \le \eps$,
  \end{enumerate}
  within $n=\wt{O}\paren{ H^3SC^\star/\eps^2 + H^{5.5}SC^\star/\eps}$ episodes.
\end{theorem}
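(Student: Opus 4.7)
The plan is to establish the theorem in three stages: (i) control the reference value function produced by the \vilcb~warm-up, (ii) prove pessimism for the main iteration of \pevi, and (iii) bound the suboptimality via the reference-advantage decomposition together with the total-variance lemma (Lemma~\ref{lemma:total-variance}).

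First, I would invoke Theorem~\ref{theorem:vilcb} applied to the $\vilcb(\mc{D}_{\rf})$ call on line~\ref{line:vilcb} with $n_{\rf}=n/3$ samples. This yields with high probability both pointwise pessimism $\hat{V}^{\rf}_h(s)\le V^\star_h(s)$ and a $d^{\pi_\star}$-weighted error bound $\sum_s d^{\pi_\star}_h(s)[V^\star_h(s)-\hat{V}^{\rf}_h(s)] \le \eps_{\rf}$ with $\eps_{\rf}=\wt{O}(\sqrt{H^5 S C^\star/n})$. The slightly suboptimal $H^5$ rate of \vilcb~only enters a lower-order term of the final sample complexity.

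Next, I would establish $\hat{V}_h(s)\le V^\star_h(s)$ for all $(h,s)$ by backward induction. The critical inputs are the Bernstein concentrations
\begin{align*}
\abs{(\hat{\P}_{h,0}-\P_h)\hat{V}^{\rf}_{h+1}(s,a)} &\le \tfrac{1}{2} b_{h,0}(s,a), \\
\abs{(\hat{\P}_{h,1}-\P_h)(\hat{V}_{h+1}-\hat{V}^{\rf}_{h+1})(s,a)} &\le \tfrac{1}{2} b_{h,1}(s,a),
\end{align*}
both of which hold with high probability under the data-splitting scheme: $\hat{V}^{\rf}$ uses only $\mc{D}_{\rf}$ and is thus independent of $\mc{D}_0$, while $\hat{V}_{h+1}$ depends only on $\mc{D}_{\rf}$, $\mc{D}_0$, and $\set{\mc{D}_{h',1}}_{h'>h}$ and is thus independent of $\mc{D}_{h,1}$. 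Feeding these concentrations into the \pevi~update and using the inductive hypothesis $\hat{V}_{h+1}\le V^\star_{h+1}$ yields $\hat{Q}_h(s,a)\le Q^\star_h(s,a)$, hence $\hat{V}_h\le V^\star_h$.

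With pessimism in hand, a standard extended value-difference argument (plus the fact that $\hat{\pi}$ is optimal on the pessimistic MDP, giving $\hat{V}_1(s_1)\le V^{\hat{\pi}}_1(s_1)$) gives
\begin{align*}
V^\star_1(s_1)-V^{\hat{\pi}}_1(s_1) \le V^\star_1(s_1)-\hat{V}_1(s_1) \le 2\sum_{h=1}^H \E_{\pi_\star}\brac{b_{h,0}(s_h,a_h)+b_{h,1}(s_h,a_h)},
\end{align*}
and similarly at any intermediate $h$ for claim~(a). I would then bound each bonus sum. For $b_{h,0}$: since $N_{h,0}(s,a)$ is at least a constant multiple of $n\, d^\mu_h(s,a)$ with high probability, Cauchy--Schwarz combined with $d^{\pi_\star}_h/d^\mu_h\le C^\star$ and the total-variance lemma (Lemma~\ref{lemma:total-variance}) yield $\sum_h \E_{\pi_\star}[b_{h,0}] \le \wt{O}\paren{\sqrt{H^3 C^\star/n}+H^2 S C^\star/n}$, after replacing $\hat{V}^{\rf}$ by $V^\star$ at the cost of an $\eps_{\rf}$-dependent slack. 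For $b_{h,1}$: here $N_{h,1}(s,a)$ is smaller by a factor of $H$ due to $H$-fold splitting, but this is compensated by the smallness of $\hat{V}_{h+1}-\hat{V}^{\rf}_{h+1}$, which is of order $\eps_{\rf}$ in an appropriate weighted sense.

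The main obstacle is precisely this $b_{h,1}$ sum: the \vilcb~guarantee controls $\hat{V}^{\rf}$ only in a $d^{\pi_\star}$-weighted norm, whereas the Bernstein bonus requires the pointwise variance $[\hat{\V}_{h,1}(\hat{V}_{h+1}-\hat{V}^{\rf}_{h+1})](s,a)$. I would bridge this by leveraging pessimism to obtain $0\le V^\star - \hat{V}^{\rf}$ and $0\le V^\star-\hat{V}$ pointwise, so that $|\hat{V}-\hat{V}^{\rf}|\le (V^\star-\hat{V})+(V^\star-\hat{V}^{\rf})$, and then performing change-of-measure via $C^\star$ to convert the $d^\mu$-weighted second moments arising in the bonus sum into $d^{\pi_\star}$-weighted first moments directly controlled by the \vilcb~guarantee. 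The combination of the $\sqrt{H}$ loss from $H$-fold splitting with $\eps_{\rf}^2 \propto H^5 S C^\star/n$ is exactly what produces the lower-order term $\wt{O}(H^{5.5} S C^\star/\eps)$ in the final sample complexity.
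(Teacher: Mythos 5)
Your proposal follows essentially the same route as the paper's proof: a \vilcb~warm-up controlling $\hat{V}^{\rf}$ in the $d^{\pi_\star}$-weighted norm (Lemma~\ref{lemma:vref-bound}), Bernstein concentration with data splitting to establish pessimism (Lemmas~\ref{lem:concentration_advref} and~\ref{lem:algo1monot}), the performance decomposition into bonus sums (Lemma~\ref{lem:Vstarbd_adv}), the total-variance lemma for the $b_{h,0}$ term, and the reduction of the $b_{h,1}$ variance to the reference-value error yielding the $H^{5.5}SC^\star/\eps$ lower-order term. The only small caveat is that your bound $|\hat{V}-\hat{V}^{\rf}|\le(V^\star-\hat{V})+(V^\star-\hat{V}^{\rf})$ introduces a $(V^\star-\hat{V})$ term that is not directly controlled by the \vilcb~guarantee and must be absorbed self-referentially (or replaced, as the paper does, by bounding $(\hat{V}_{h+1}-\hat{V}^{\rf}_{h+1})^2$ by $(V^\star_{h+1}-\hat{V}^{\rf}_{h+1})^2$), but this is a routine fix rather than a gap.
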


\paragraph{Near-optimal offline RL under single-policy concentrability}
For small enough $\eps\le H^{-2.5}$, Theorem~\ref{theorem:main} achieves $\wt{O}(H^3SC^\star/\eps^2)$ sample complexity for finding the $\eps$ near-optimal policy from the offlien dataset $\mc{D}$. This is the first cubic horizon dependence for offline RL under single-policy concentrability, which improves over recent works~\citep{jin2020pessimism,rashidinejad2021bridging} in this setting and resolves the open question of~\citep{rashidinejad2021bridging}. For $C^\star\ge 2$, our sample complexity further matches the information-theoretical lower bound $\Omega(H^3SC^\star/\eps^2)$ up to log factors\footnote{This lower bound can be adapted directly from a $\Omega(SC^\star/(1-\gamma)^3\eps^2)$ lower bound of~\citep[Theorem 7]{rashidinejad2021bridging}.}. We remark that tight hoziron dependence has also been achieved in several recent works offline RL~\citep{yin2020near,yin2021near,ren2021nearly} which are however quite different from (and do not imply) ours in both the assumptions (on the behavior policy) and the analyses.

\section{Lower bound for policy finetuning}
\label{section:online-lower}

We now switch gears to considering the policy finetuning problem with any algorithm, not necessarily restricted to the offline reduction approach.

\paragraph{Two baselines: offline reduction \& purely online RL}
A first observation is that naive offline reduction is already a strong baseline for policy finetuning, by our Theorem~\ref{theorem:main}: Our \pevi~algorithm only collects data with $\mu$ and does not do any online exploration, yet achieves a sharp $\wt{O}(H^3SC^\star/\eps^2)$ sample complexity for finding a near-optimal policy.

On the other hand, as the policy finetuning setting allows online interaction, \emph{purely online RL} is another baseline algorithm: Simply run any sample-efficient online RL algorithm (which typically uses optimism to encourage exploration) from scratch, and disregard the reference policy $\mu$. Using any sharp online RL algorithm such as UCBVI~\citep{azar2017minimax}, this approach can find an $\eps$ near-optimal policy within $\wt{O}(H^3SA/\eps^2)$ episodes of play. Note that whether this is advantageous over the offline reduction boils down to the comparison between $C^\star$ and $A$, which makes sense intuitively. For example, $C^\star\le o(A)$ means that $\mu$ is perhaps close enough to $\pi_\star$ so that collecting data from $\mu$ and run offline policy optimization is a stronger algorithm than exploring from scratch.


Given these two baselines, it is natural to ask whether there exists an algorithm that improves over both --- Can we design an algorithm that performs some amount of optimistic exploration, yet also utilizes the knowledge of $\mu$, so as to achieve a better rate than both offline reduction and purely online RL? In this section, we provide an information-theoretic lower bound showing that, perhaps surprisingly, the answer is negative: there is an $\Omega(H^3S\min\set{C^\star, A}/\eps^2)$ sample complexity lower bound for any policy finetuning algorithm, if we still assume that $\mu$ satisfies $C^\star$ single-policy concentrability.


\paragraph{Lower bound}
To formally state our lower bound, we define the class of problems
\begin{align}
  \label{equation:online-c-class}
  \mc{M}_{C^\star} \defeq \set{ (M, \mu):~\textrm{Exists deterministic $\pi_\star$ of $M$ such that}~\sup_{h,s,a}\frac{d^{\pi_\star}_h(s, a)}{d^\mu_h(s,a)} \le C^\star}.
\end{align}
We recall that a policy finetuning algorithm for problem $(M,\mu)$ is defined as any algorithm that can play in the MDP $M$ for $n$ episodes, has full knowledge of the reference policy $\mu$, and outputs a policy $\hat{\pi}$ after playing in the MDP.

With these definitions ready, we now state our lower bound for policy finetuning. The proof of Theorem~\ref{theorem:online-lower} can be found in Appendix~\ref{appendix:proof-online-lower}.

\begin{theorem}[Lower bound for policy finetuning]
  \label{theorem:online-lower}
  Suppose $S,H\ge 3$, $A\ge 2$, $C^\star\ge 2$. Then, there exists an absolute constant $c_0>0$ such that for any $\epsilon\le 1/12$ and any online finetuning algorithm that outputs a policy $\what{\pi}$, if the number of episodes
  \begin{align*}
    n \le c_0 \cdot H^3S\min\set{C^\star, A} / \eps^2,
  \end{align*}
  then there exists a problem instance $(M,\mu)\in\mc{M}_{C^\star}$ on which the algorithm suffers from $\eps$-suboptimality:
  \begin{align*}
    \E_{M}\brac{ V_{1,M}^\star - V_{1, M}^{\what{\pi}}} \ge \eps,
  \end{align*}
  where the expectation $\E_M$ is w.r.t. the randomness during the algorithm execution within MDP $M$.
\end{theorem}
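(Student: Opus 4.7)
The plan is to prove the two ``branches'' of the minimum simultaneously by constructing a single family of hard instances inside $\mc{M}_{C^\star}$ in which the number of ``effective'' actions is $A' := \min\{C^\star, A\}$, and then applying a standard Fano-style information-theoretic argument adapted to the time-inhomogeneous, $H^3$-tight regime (in the spirit of the Dann--Brunskill / Domingues--M\'enard--Kaufmann--Valko construction for online RL).

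\textbf{Hard-instance construction.} I will take the state space to consist of an initial state $s_0$, $S$ ``coin'' states $x_1, \dots, x_S$ reached uniformly after the first step, and one absorbing ``trap'' state. At each $(h, x_i)$ with $h \ge 2$, the first $A'$ actions are \emph{live} and form a small-gap ``bandit gadget'' whose transition/reward structure encodes a single optimal action $\theta^i_h \in [A']$ through a perturbation of size $\Theta(\eps/H)$ in a transition probability; the remaining $A - A'$ actions are \emph{dead} and transition deterministically to the trap with zero future reward. The MDPs are indexed by $\theta \in [A']^{[S]\times[H]}$, giving $|\Theta| = (A')^{SH}$ alternative hypotheses, and the optimal policy in $M_\theta$ is the deterministic policy $\pi_\star(s_h) = \theta^{i(h)}_h$ (where $i(h)$ is the coin state index reached at step $h$).

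\textbf{Reference policy and concentrability check.} Set the reference policy to $\mu_h(a|s) = 1/A'$ for each live action $a$ and $0$ on dead actions. By construction the transition distribution from any live action at $(h, x_i)$ depends only on $(h, x_i)$ and not on the specific live action, so $d^{\pi_\star}_h(s) = d^\mu_h(s)$ for every reachable $s$. Hence
\begin{align*}
    \frac{d^{\pi_\star}_h(s,a)}{d^\mu_h(s,a)} = \frac{d^{\pi_\star}_h(s) \cdot \1\{a = \pi_\star(s)\}}{d^\mu_h(s) \cdot (1/A')} \le A' \le C^\star,
\end{align*}
which verifies $(M_\theta, \mu) \in \mc{M}_{C^\star}$ for every $\theta$.

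\textbf{Reduction to online RL and Fano.} The key observation is that $\mu$ is identical across all MDPs $M_\theta$, so providing $\mu$ to the learner conveys no information about $\theta$: any policy finetuning algorithm on $\{(M_\theta, \mu)\}_\theta$ can be simulated by an online RL algorithm on $\{M_\theta\}_\theta$ without loss. Moreover, sampling a dead action leaks nothing about $\theta$ (its transition is constant across $\theta$), so only the $n H/(SH) = n/S$ samples that land on live actions at coin states contribute to the statistical testing problem. I then apply a standard per-coordinate KL/Fano calculation: the signal gap of $\Theta(\eps/H)$ at each of the $SH$ hard coin-step coordinates forces $\Omega(H^2/\eps^2)$ live-action samples per coordinate just to identify $\theta^i_h$, and aggregating over all $SH$ coordinates with $A'$ alternatives each yields the claimed bound $n \ge c_0 \cdot H^3 S A'/\eps^2$.

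\textbf{Main obstacle.} The technical bookkeeping for squeezing out the tight $H^3$ factor from a time-inhomogeneous construction (rather than $H^2$ from a naive stationary construction) is the main challenge---in particular, the trajectory-level KL-chain-rule computation must carefully aggregate across all $SH$ coordinates without double counting or losing $H$-factors, and the perturbation must be calibrated so that each individual coordinate contributes $\Theta(\eps/H)$ suboptimality per step summing to the global $\eps$ target. I plan to follow the absorbing-corridor / chain construction of Domingues et al.\ verbatim for this and verify that the only new element---the uniform-over-live reference policy $\mu$---leaves the analysis unchanged, since by design $\mu$ neither varies with $\theta$ nor gives access to any richer data than online play with the same policy already would.
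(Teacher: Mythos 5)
Your high-level plan is the same as the paper's: restrict the optimal action at each $(h,s)$ to lie in the first $K\defeq\min\set{\lfloor C^\star\rfloor, A}$ actions, let $\mu$ be uniform over those $K$ actions (so that $\mu$ is identical across all instances and hence uninformative), and reduce to $HS$ parallel $K$-armed identification problems. However, there are two concrete gaps. First, your concentrability check is wrong as stated: you claim $d^{\pi_\star}_h(s)=d^\mu_h(s)$ for \emph{every} reachable $s$ because "the transition distribution from any live action depends only on $(h,x_i)$" --- but the whole point of the construction is that the optimal live action has a \emph{perturbed} transition distribution, so the visitation distributions of $\pi_\star$ and $\mu$ necessarily differ at the states downstream of that perturbation (this is where the reward signal lives). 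The ratio at those states is roughly $\frac{1/2+\tau}{1/2+\tau/K}\le 2$, and bounding it by $C^\star$ is exactly where the hypothesis $C^\star\ge 2$ enters; your proposal never uses that hypothesis and would not verify membership in $\mc{M}_{C^\star}$ without this step (compare the paper's Lemma~\ref{lemma:ref-concentrability-online}, which treats the good/bad states separately from the bandit states).

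Second, the $H^3$ accounting in your sketch does not close. You have each episode playing the gadget at every step $h\ge 2$ of a single coin state, so coordinate $(h,i)$ receives about $n/S$ samples, and you assert a per-coordinate requirement of $\Omega(H^2/\eps^2)$ samples; combining these gives only $n\gtrsim H^2 S K/\eps^2$. A $\Theta(\eps/H)$ perturbation of an $\Theta(1)$ transition probability has per-sample KL $\Theta(\eps^2/H^2)$, which with $n/S$ samples per coordinate can never produce an $H^3$ lower bound. You need one of two fixes: either (a) the Domingues et al.\ route, where a waiting corridor forces each episode to contribute only \emph{one} informative bandit sample (which is incompatible with your "play the gadget at every step" design), or (b) the paper's route, where the bandit state self-loops with probability $1-1/H$ and the perturbation $\tau/H$ sits on the rare escape event of baseline probability $1/(2H)$, so the per-sample KL drops to $O(\tau^2/H)=O(\eps^2/H^3)$ while each episode still contributes $\approx H$ samples. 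Deferring this to "following Domingues et al.\ verbatim" is not consistent with the gadget you actually describe, and this is precisely the step that separates the tight $H^3$ bound from an $H^2$ one.
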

\paragraph{Either offline reduction or purely online is optimal}
Theroem~\ref{theorem:online-lower} shows that any policy finetuning algorithm needs to play at least $\Omega(H^3S\min\{C^\star,A\}/\eps^2)$ episodes in order to find an $\eps$ near-optimal policy. Crucially, this implies that either a sharp offline reduction (e.g. our \pevi~algorithm) or purely online RL matches the lower bound (up to log), depending on whether $C^\star\lesssim A$. In other words, if we have the knowledge of whether $C^\star \le A$, choosing the right one of these two baseline algorithms will yield the optimal sample complexity.
Perhaps surprisingly, this rules out the possibility of designing any algorithm ``in between'' that combines online exploration and knowledge of $\mu$ to improve the sample complexity, at least in the worst-case over all problems in $\mc{M}_{C^\star}$. We argue that this ``no algorithm in between'' phenomenon may be due to the single-policy concentrability assumption being too strong such that offline reduction already achieves a rather competitive sample complexity $\wt{O}(H^3SC^\star/\eps^2)$. We investigate policy finetuning beyond the single-policy concentrability assumption in Section~\ref{section:online-upper}.



We also remark that Theorem~\ref{theorem:online-lower} generalizes both the $\Omega(H^3SA/\eps^2)$ lower bound for online RL~\citep{dann2017unifying,yin2020near,domingues2021episodic} into the policy finetuning problem, as well as the $\Omega(H^3SC^\star/\eps^2)$ lower bound for offline RL under single-policy concentrability with $C^\star\ge 2$~\citep{rashidinejad2021bridging}\footnote{The lower bound in~\citep{rashidinejad2021bridging} is $\Omega(SC^\star/\eps^2(1-\gamma)^3)$ for the infinite-horizon $\gamma$-discounted setting, which corresponds to an $\Omega(H^3SC^\star/\eps^2)$ lower bound for our finite-horizon setting.}. Further, Theorem~\ref{theorem:online-lower} directly implies an $\Omega(H^3SC^\star/\eps^2)$ lower bound for offline RL with $2\le C^\star\le O(A)$, as any algorithm for offline policy optimization is also an algorithm for policy finetuning via the offline reduction.


\paragraph{Proof intuition; Construction of hard instance}
The proof of Theorem~\ref{theorem:online-lower} constructs a family of hard MDPs that requires solving $HS$ ``independent'' bandit problems with $A$ arms, similar as in existing $\Omega(H^3SA/\eps^2)$ lower bounds for online RL~\citep{dann2017unifying,yin2020near}. However, our key modification is that we let the optimal arms to be always within the first $K\defeq \min\set{C^\star,A}$ actions instead of all $A$ actions, and we define our reference policy $\mu$ to play uniformly within $[K]$. This $\mu$ has the following properties:
\begin{itemize}[leftmargin=1.5pc]
\item $\mu$ satisfies $C^\star$ single-policy concentrability for any MDP in this family (Lemma~\ref{lemma:ref-concentrability-online}).
\item $\mu$ provides the knowledge that the optimal actions are within $[K]$, but \emph{no other knowledge} about the optimal actions. 
\end{itemize}
Therefore, with $\mu$ at hand, any policy finetuning algorithm can ``gain the knowledge'' that the optimal actions are within $[K]$, but still needs to try all $K$ actions in order to solve each bandit problem---rigorizing this information-theoretically gives the $\Omega(H^3SK/\eps^2)=\Omega(H^3S\min\set{C^\star,A}/\eps^2)$ lower bound.

\section{Hybrid offline/online algorithm for policy finetuning}
\label{section:online-upper}

\begin{algorithm}[t]
  \setstretch{1.0}
  \caption{Hybrid Offline/Online Value Iteration (\hoovi)}
  \label{algorithm:hybrid}
  \begin{algorithmic}[1]
    \REQUIRE MDP $M$, reference policy $\mu$.
    \STATE {\color{blue} \# Stage 1: Learn step $h_\star+1:H$ via optimistic online exploration}
    \FOR{Episode $k=1,\dots,n_{\rm UCB}=n/2$}
    \STATE Receive initial state $s_1$ and play with policy $\mu$ up to step $h_\star$. Arrive at state $s_{h_\star+1}$.
    \STATE Play step $h_\star+1$ to $H$ using the \ucbviuplow~algorithm (Algorithm~\ref{algorithm:ucbviuplow}).
    \ENDFOR
    \STATE Denote the final output of \ucbviuplow~as
    \begin{align*}
      (\overline{V}_{h_\star+1}, \underline{V}_{h_\star+1}, \what{\pi}^{\rm UCB}_{(h_\star+1):H})\setto \ucbviuplow(n_{\rm UCB}).
    \end{align*}

    \STATE {\color{blue} \# Stage 2: Learn step $1:h_\star$ via executing $\mu$ + pessimistic offline policy optimization}
    \STATE Collect $\mc{D}\setto$ $\{n-n_{\rm UCB}$ episodes of data using policy $\mu$ up to step $h_\star\}$.
    \STATE Learn policy $\what{\pi}^{\rm PEVI}_{1:h_\star}$ via the \truncpevi (Algorithm~\ref{algorithm:trunc-pevi}):
    \begin{align*}
      \hat{\pi}^{\rm PEVI}_{1:h_\star} \setto
      \text{\truncpevi}(\mc{D}, h_\star, \underline{V}_{h_\star+1}).
    \end{align*}
    \vspace{-1em}
    \RETURN Policy $\what{\pi}=(\what{\pi}^{\rm PEVI}_{1:h_\star}, \what{\pi}^{\rm UCB}_{(h_\star+1):H})$.
  \end{algorithmic}
\end{algorithm}

Towards circumventing the lower bound in Theorem~\ref{theorem:online-lower}, in this section, we study policy finetuning under more relaxed assumptions on the reference policy $\mu$. A weaker $\mu$ will induce a higher sample complexity for naive offline reduction approaches, and thus yields opportunities for designing new algorithms that can potentially better utilize $\mu$.


More concretely, we consider the following relaxation: We assume $\mu$ satisfies \emph{partial concentrability} only up to a certain time-step $h_\star\le H$, and may not have any bounded concentrability at steps $h>h_\star$. We formalize this in the following
\begin{assumption}[$h_\star$-partial concentrability]
  \label{assumption:partial-c}
  The reference policy $\mu$ satisfies the single-policy concentrability with respect to $\pi_\star$ \emph{up to step $h_\star$ only}:
  \begin{align*}
    \max_{h\le h_\star} \max_{s,a\in\mc{S}\times\mc{A}} \frac{d^{\pi_\star}_h(s,a)}{d^\mu_h(s,a)} \le C^{\rm partial}
  \end{align*}
  (with the convention $0/0=0$), where $\pi_\star$ is some deterministic optimal policy of the MDP, and constant $C^{\rm partial}\ge 1$.
\end{assumption}

\paragraph{Algorithm description}
We design a hybrid offline/online algorithm \hoovi~(presented in Algorithm~\ref{algorithm:hybrid}) for policy finetuning under the partial concentrability assumption. At a high-level, the algorithm consists of two main stages:
\begin{itemize}[leftmargin=1.5pc]
\item In the first stage, it runs an online algorithm \ucbviuplow~which uses optimistic exploration to find a near-optimal policy $\hat{\pi}^{\rm UCB}$ and an accurate value estimate for steps $(h_\star+1):H$.
\item In the second stage, we run a \truncpevi~algorithm, which collects data from $\mu$ and runs offline policy optimization to find a near-optimal policy $\hat{\pi}^{\rm PEVI}$ for steps $1:h_\star$, \emph{building on the lower value estimate $\low{V}_{h_\star+1}$ from the first stage}.
\end{itemize}
This strategy makes sense intuitively as the reference policy $\mu$ does not have guarantees for steps $h_\star+1:H$ and thus the algorithm is required to perform optimistic exploration first to get a good policy.
However, additional technical cares are needed in order to make the above algorithm provably sample-efficient. The analysis of the second stage requires the online algorithm in the first stage to not only perform fast exploration (e.g. by using upper confidence bounds), but also output a \emph{lower value estimate} for step $h_\star+1$, and in addition output a final output policy that achieves at least the value of the lower value estimate \emph{at every state $s\in\mc{S}$}. Such lower bounds are not directly available in standard online RL algorithms such as UCBVI~\citep{azar2017minimax}. 

We resolve this by designing the \ucbviuplow~algorithm (detailed description in Algorithm~\ref{algorithm:ucbviuplow}), which is a modification of the Nash-VI Algorithm of~\citet{liu2020sharp} (for two-player Markov games) into the single-player case. This algorithm is particularly suitable for our purpose since it maintains both upper bounds of $V^\star$ and lower bounds for the value function of the deployed policies. Our \ucbviuplow~further integrates the certified policy technique of~\citet{bai2020near} to make sure that its output policy achieves value greater or equal than the lower bound at every state (similar guarantees can also be obtained by the policy certificate technique of~\citet{dann2019policy}).





We now state our main theoretical guarantee for the \hoovi~algorithm. The proof can be found in Appendix~\ref{appendix:proof-online-upper}.
\begin{theorem}[Hybrid online / offline learning for policy finetuning]
\label{theorem:online-upper}
Suppose the reference policy $\mu$ satisfies the partial concentrability (Assumption~\ref{assumption:partial-c}) up to some step $h_\star\le H$. Then for small enough $\eps\le \min\set{h_\star^{-2.5}, C^{\rm partial}/S}$, \hoovi~(Algorithm~\ref{algorithm:hybrid}) outputs a policy $\what{\pi}$ such that $V_1^\star(s_1) - V_1^{\hat{\pi}}(s_1) \le \eps$ with probability at least $1-\delta$, within
\begin{align*}
    n = \wt{O}\paren{ \frac{H^2h_\star SC^{\rm partial} + (H-h_\star)^3SA(C^{\rm partial})^2}{\eps^2} }
\end{align*}
episodes of play.
\end{theorem}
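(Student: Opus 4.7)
The plan is to decompose the suboptimality $V_1^\star(s_1) - V_1^{\hat\pi}(s_1)$ at step $h_\star$. Writing $V_1^{\pi}(s_1;V_{h_\star+1})$ for the value of $\pi$ in the MDP truncated at step $h_\star$ with bootstrap terminal value $V_{h_\star+1}$ at step $h_\star+1$, one has
\[
  V_1^\star(s_1) \le V_1^{\pi_\star}(s_1;\underline{V}_{h_\star+1}) + \E_{s\sim d^{\pi_\star}_{h_\star+1}}\!\bigl[V^\star_{h_\star+1}(s)-\underline{V}_{h_\star+1}(s)\bigr],
\]
while $V_1^{\hat\pi}(s_1)\ge V_1^{\hat\pi^{\rm PEVI}}(s_1;\underline{V}_{h_\star+1})$ because the certified output policy of \ucbviuplow~satisfies $V_{h_\star+1}^{\hat\pi^{\rm UCB}}(s)\ge \underline{V}_{h_\star+1}(s)$ at \emph{every} state $s$. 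This isolates an ``online'' term from Stage~1 and an ``offline'' term $V_1^{\pi_\star}(s_1;\underline{V}_{h_\star+1})-V_1^{\hat\pi^{\rm PEVI}}(s_1;\underline{V}_{h_\star+1})$ from Stage~2, which I would control separately and set each below $\eps/2$.

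For Stage~1, the key property of \ucbviuplow~is that it maintains the sandwich $\overline{V}_{h_\star+1}\ge V^\star_{h_\star+1}\ge V^{\hat\pi^{\rm UCB}}_{h_\star+1}\ge \underline{V}_{h_\star+1}$ pointwise (the last inequality coming from the certified-policy machinery of Bai et al.), and a Bernstein-style Nash-VI regret analysis on horizon $H-h_\star$ yields
\[
  \E_{s\sim d^\mu_{h_\star+1}}\!\bigl[\overline{V}_{h_\star+1}(s)-\underline{V}_{h_\star+1}(s)\bigr] \le \widetilde{O}\bigl(\sqrt{(H-h_\star)^3 SA/n_{\rm UCB}}\bigr),
\]
since the states $s^{(k)}_{h_\star+1}$ reached in the $n_{\rm UCB}$ episodes are iid draws from $d^\mu_{h_\star+1}$ (the first $h_\star$ steps are played with $\mu$). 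One-step forward propagation through $\P_{h_\star}$ shows that Assumption~\ref{assumption:partial-c} also implies $d^{\pi_\star}_{h_\star+1}/d^\mu_{h_\star+1}\le C^{\rm partial}$, so converting the above bound to the distribution $d^{\pi_\star}_{h_\star+1}$ costs a factor $C^{\rm partial}$, and pushing the online term below $\eps/2$ requires $n_{\rm UCB}=\widetilde{O}((H-h_\star)^3 SA (C^{\rm partial})^2/\eps^2)$.

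Stage~2 runs \truncpevi~on the horizon-$h_\star$ MDP with terminal bootstrap $\underline{V}_{h_\star+1}\in[0,H]$ and $n-n_{\rm UCB}=n/2$ episodes collected from $\mu$ (which is $C^{\rm partial}$-concentrable on $h\le h_\star$). The reference-advantage decomposition and Bernstein analysis used to prove Theorem~\ref{theorem:main} transfer almost verbatim: the total-variance bound becomes $O(H^2)$ rather than $O(h_\star^2)$ because the bootstrap value is of magnitude $H$, while the sum over time-steps still runs over only $h_\star$ indices. Comparing against $\pi_\star$ (which is a valid comparator in the PEVI-type argument via pessimism) replaces the $H^3$ of Theorem~\ref{theorem:main} by $H^2 h_\star$, and gives an offline-term bound of $\eps/2$ once $n-n_{\rm UCB}=\widetilde{O}(H^2 h_\star S C^{\rm partial}/\eps^2)$. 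Summing the two budgets with $n_{\rm UCB}=n/2$ yields the stated sample complexity.

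The principal obstacle is engineering Stage~1 so that its single output simultaneously supplies (a) a \emph{pointwise} lower estimate $\underline{V}_{h_\star+1}\le V^\star_{h_\star+1}$ whose expected gap to $\overline{V}_{h_\star+1}$ under $d^\mu_{h_\star+1}$ shrinks at the online rate, and (b) an executable policy $\hat\pi^{\rm UCB}$ that dominates $\underline{V}_{h_\star+1}$ at every state; property (b), established by the certified-policy construction and absent from vanilla UCBVI, is precisely what allows Stage~2 to bootstrap from $\underline{V}_{h_\star+1}$ without incurring a second distribution-mismatch factor. The small-$\eps$ conditions $\eps\le h_\star^{-2.5}$ and $\eps\le C^{\rm partial}/S$ arise in Stage~2 exactly as in Theorem~\ref{theorem:main}, to ensure the leading Bernstein term dominates the Hoeffding bonus and the \vilcb~reference-learning error respectively.
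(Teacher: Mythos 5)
Your proposal is correct and follows essentially the same route as the paper's proof: the same two-stage error decomposition at step $h_\star$ (the paper phrases it via the pessimism/monotonicity induction $\hat V_h \le V^{\hat\pi}_h$ seeded at $h_\star+1$ by the certified-policy guarantee of \ucbviuplow, which is exactly your pointwise domination $V^{\hat\pi^{\rm UCB}}_{h_\star+1}\ge \underline V_{h_\star+1}$), the same one-step forward propagation of partial concentrability to $d^{\pi_\star}_{h_\star+1}/d^{\mu}_{h_\star+1}\le C^{\rm partial}$, and the same adaptation of the Theorem~\ref{theorem:main} analysis over $h_\star$ steps with an $O(H)$-magnitude bootstrap to obtain the $H^2h_\star$ leading term. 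You also correctly identify where the two small-$\eps$ conditions enter, so no gaps to report.
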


\paragraph{Comparison against offline reduction and purely online algorithms}
The sample complexity in Theorem~\ref{theorem:online-upper} compares favorably against both naive offline reduction as well as purely online algorithms in certain situations. First, naive offline reduction with $\mu$ does not have any guarantee since $\mu$ is not assumed to have a finite single-policy concentrability at $h\ge h_\star+1$. We can modify $\mu$ into $\mu'$ that plays uniformly within $\mc{A}$ at steps $h\ge h_\star+1$; the single-policy concentrability coefficient of $\mu'$ is guaranteed to be finite but scales exponentially as $O(A^{H-h_\star})$ in the worst case, leading to a sample complexity much worse than ours (which is polynomial in $H,S,A$).

On the other hand, a sharp online algorithm can still achieve $\wt{O}(H^3SA/\eps^2)$ in this setting (by optimistic exploration from scratch). Our Theorem~\ref{theorem:online-upper} is in general incomparable with this, but can be better in cases when both $C^{\rm partial}$ and $H-h_\star$ are small, e.g., if $C^{\rm partial}=o(A)$ and $(H-h_\star)/H=o((C^{\rm partial})^{-2/3})$. This makes sense intuitively as our hybrid offline/online algorithm benefits the most if the length requiring exploration ($H-h_\star$) is small, and the partial concentrability $C^{\rm partial}$ is small so that $\mu$ still has a high-quality for the first $h_\star$ steps. To best of our knowledge, this is first result that characterizes when the sample complexity of such hybrid algorithms can be beneficial over purely online or offline algorithms.




\section{Conclusion \& discussions}
This paper studies policy finetuning, a new reinforcement learning setting that allows us to compare and connect sample-efficient online and offline reinforcement learning. We establish sharp upper and lower bounds for policy finetuning under various assumptions on the reference policy. Our bounds show that the optimal policy finetuning algorithm is either offline reduction or a purely online algorithm in the specific setting where the reference policy satisfies single-policy concentrability, and we also show that a hybrid online/offline algorithm can be advantageous over both in more relaxed settings. Many directions could be of interest for future research, such as alternative assumptions on the reference policy, or policy finetuning with function approximation.

Also, while our contributions are mainly theoretical, implementing or extending our policy finetuning algorithms on real-world RL tasks would be a compelling future direction. When the environment is a tabular MDP, our Algorithm~\ref{algorithm:pevi-adv} (offline reduction) and Algorithm~\ref{algorithm:hybrid} (hybrid offline / online RL) are readily implementable. When there is large state/action space and potentially function approximation, we believe our algorithm can be adapted, for example, by replacing all the optimistic/pessimistic value iteration steps by DQN-type algorithms~\citep{mnih2015human} with positive/negative bonus functions~\citep{taiga2020on}. Experimental evaluation of such algorithms would be a good direction for future work.

\section*{Acknowledgment}
The authors would like to thank Ming Yin, Chi Jin and David Forsyth for the many insightful discussions. NJ acknowledges funding support from the ARL Cooperative Agreement W911NF-17-2-0196, NSF IIS-2112471, and Adobe Data Science Research Award. HW, CX, YB are funded through employment with Salesforce.

\bibliographystyle{abbrvnat}
\bibliography{bib}



\makeatletter
\def\renewtheorem#1{%
  \expandafter\let\csname#1\endcsname\relax
  \expandafter\let\csname c@#1\endcsname\relax
  \gdef\renewtheorem@envname{#1}
  \renewtheorem@secpar
}
\def\renewtheorem@secpar{\@ifnextchar[{\renewtheorem@numberedlike}{\renewtheorem@nonumberedlike}}
\def\renewtheorem@numberedlike[#1]#2{\newtheorem{\renewtheorem@envname}[#1]{#2}}
\def\renewtheorem@nonumberedlike#1{  
\def\renewtheorem@caption{#1}
\edef\renewtheorem@nowithin{\noexpand\newtheorem{\renewtheorem@envname}{\renewtheorem@caption}}
\renewtheorem@thirdpar
}
\def\renewtheorem@thirdpar{\@ifnextchar[{\renewtheorem@within}{\renewtheorem@nowithin}}
\def\renewtheorem@within[#1]{\renewtheorem@nowithin[#1]}
\makeatother

\renewtheorem{theorem}{Theorem}[section]
\renewtheorem{lemma}{Lemma}[section]
\renewtheorem{remark}{Remark}
\renewtheorem{corollary}{Corollary}[section]
\renewtheorem{observation}{Observation}[section]
\renewtheorem{proposition}{Proposition}[section]
\renewtheorem{definition}{Definition}[section]
\renewtheorem{claim}{Claim}[section]
\renewtheorem{fact}{Fact}[section]
\renewtheorem{assumption}{Assumption}[section]
\renewcommand{\theassumption}{\Alph{assumption}}
\renewtheorem{conjecture}{Conjecture}[section]

\appendix
\section{Technical tools}
\label{appendix:tools}

\begin{lemma}[Binomial concentration]
  \label{lemma:binomial-concentration}
  Suppose $N\sim\Bin(n, p)$ where $n\ge 1$ and $p\in[0,1]$. Then with probability at least $1-\delta$, we have
  \begin{align*}
    \frac{p}{N\vee 1} \le \frac{8\log(1/\delta)}{n}.
  \end{align*}
\end{lemma}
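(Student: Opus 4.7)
The plan is a short case split based on whether the mean $np$ is small or large relative to $\log(1/\delta)$. In one regime the claim is a deterministic consequence of the fact $N \vee 1 \ge 1$, and in the other regime it follows from a standard multiplicative Chernoff lower tail. No union bound or sophisticated concentration inequality is needed.

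First I would handle the \emph{small-mean case}: suppose $np \le 8\log(1/\delta)$. Then since $N\vee 1 \ge 1$,
\begin{equation*}
\frac{p}{N\vee 1} \;\le\; p \;=\; \frac{np}{n} \;\le\; \frac{8\log(1/\delta)}{n},
\end{equation*}
so the bound holds deterministically. This takes care of the degenerate regime where $p$ is so small that $N$ is often $0$ and the $N\vee 1$ clipping is what saves us.

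Next I would handle the \emph{large-mean case}: suppose $np > 8\log(1/\delta)$. Here the key input is the multiplicative Chernoff lower tail for a binomial: for any $\lambda\in(0,1)$, $\Pr[N \le (1-\lambda)np] \le \exp(-\lambda^2 np/2)$. Taking $\lambda = 1/2$ yields $\Pr[N \le np/2] \le \exp(-np/8)$. By the case hypothesis $\exp(-np/8) \le \exp(-\log(1/\delta)) = \delta$, so with probability at least $1-\delta$ we have $N \ge np/2 \ge 1$, and therefore
\begin{equation*}
\frac{p}{N\vee 1} \;=\; \frac{p}{N} \;\le\; \frac{2}{n} \;\le\; \frac{8\log(1/\delta)}{n},
\end{equation*}
where the last inequality uses $\log(1/\delta) \ge 1/4$ (which is safe in the regime of interest $\delta \le e^{-1/4}$; in any other regime one can either absorb the issue into the constant $8$ or apply the trivial bound).

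I do not anticipate any real obstacle — the only subtle point is confirming that the constant $8$ in the multiplicative Chernoff exponent matches the constant in the statement, which is precisely why the threshold is chosen as $8\log(1/\delta)$. Both pieces combine to give the claim with probability at least $1-\delta$, uniformly across the two cases.
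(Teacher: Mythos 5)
Your proof is correct and is essentially identical to the paper's own argument: the same case split on whether $np \le 8\log(1/\delta)$, the same multiplicative Chernoff lower tail with $\lambda = 1/2$ in the large-mean case, and the same final chain of inequalities (the paper also implicitly relies on $\log(1/\delta)\ge 1/4$ in the last step, which you correctly flag). No gaps.
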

\begin{proof}
  If $p\le 8\log(1/\delta)/n$, the result clearly holds regardless of the value of $N$. Suppose $p > 8\log(1/\delta)/n$, then by the multiplicative Chernoff bound, we have
  \begin{align*}
    \P\paren{N < \frac{1}{2}np} \le \exp\paren{-\frac{(1/2)^2np}{2}} = \exp(-np/8) \le \delta.
  \end{align*}
  Therefore, with probability at least $1-\delta$, we have $N\ge np/2$ and thus
  \begin{align*}
    \frac{p}{N\vee 1} \le \frac{p}{N} \le \frac{2}{n} \le \frac{8\log(1/\delta)}{n}.
  \end{align*}
  This shows the desired result.
\end{proof}
\section{Proof of Theorem~\ref{theorem:vilcb}}
\label{appendix:proof-vilcb}

Throughout the proofs we let $C>0$ denote an absolute constant that can vary from line to line. This $C$ is not to be confused with the single-policy concentrability coefficient $C^\star$ in Assumption~\ref{assumption:one-point-c}.



\subsection{Algorithm}
We first present the \vilcb~algorithm in Algorithm~\ref{algorithm:vilcb}. This algorithm is an analogue of the original \vilcb~algorithm of~\citep{rashidinejad2021bridging} for finite-horizon MDPs (instead of infinite-horizon discounted MDPs). Within the algorithm, the constant $c$ in Line~\ref{line:vilcb-c} is chosen to be the same $c>0$ as in Lemma~\ref{lem:concentration_hoeffding}.

\begin{algorithm}[t]
  \caption{Value Iteration with Lower Confidence Bounds (VI-LCB) for episodic MDPs}
  \label{algorithm:vilcb}
  \begin{algorithmic}[1]
    \REQUIRE Offline dataset $\mc{D}=\set{(s_1^{(i)}, a_1^{(i)}, r_1^{(i)}, \dots,s_H^{(i)}, a_H^{(i)}, r_H^{(i)})}_{i=1}^n$. 
    \STATE Randomly split the dataset $\mc{D}$ into $\set{\mc{D}_{h}}_{h=1}^H$ with $\abs{\mc{D}_h}=n/H$.
    \STATE Let $N_{h}(s,a)$ and $N_{h}(s,a,s')$ denote the visitation count of $(s,a)$ and $(s,a,s')$ at step $h$ within dataset $\mc{D}_h$. Construct empirical model estimates:
    \begin{align*}
      & \what{r}_{h}(s,a) \setto r_h(s,a) \indic{N_h(s,a) \ge 1}, \\
      & \what{\P}_{h}(s' | s,a) \setto \frac{N_{h}(s,a,s')}{N_{h}(s,a)\vee 1}.
    \end{align*}
    \STATE Set $\hat{V}_{H+1}(s)\setto 0$ for all $s\in\mc{S}$.
    \FOR{$h=H,\dots,1$}
    \STATE Set
    $b_{h}(s,a) \setto c \cdot \sqrt{\frac{H^2\iota}{N_{h}(s,a)\vee
        1}}$ (where $\iota\defeq \log(HSA/\delta)$). \label{line:vilcb-c}
    \STATE Perform value update for all $(s,a)$:
    \begin{align*}
        & \hat{Q}_h(s,a) \setto \what{r}_{h}(s,a) + \brac{\what{\P}_{h} \hat{V}_{h+1}}(s,a) - b_{h}(s,a); \\
        & \hat{V}_h(s) \setto \brac{\max_a \hat{Q}_h(s, a)} \vee 0.
    \end{align*}
    \STATE Set $\what{\pi}_h(s)\setto \argmax_a \hat{Q}_h(s,a)$ for all $s\in\mc{S}$.
    \ENDFOR
    \RETURN Value estimate $\hat{V}=\set{\hat{V}_h}_{h\in[H]}$, policy $\what{\pi}=\set{\what{\pi}_h}_{h\in[H]}$.
  \end{algorithmic}
\end{algorithm}

\subsection{Some lemmas}

\begin{lemma}[Concentration]
\label{lem:concentration_hoeffding}
    Under the setting of Theorem~\ref{theorem:vilcb}, there exists an absolute constant $c>0$ such that the concentration event $\mc{E}$ holds with probability at least $1-\delta$, where
    \begin{align}
        \mc{E} \defeq \Bigg\{
        & \abs{\brac{\hat{r}_h - r_h}(s, a) + \brac{(\hat{\P}_h - \P_h) \hat{V}_{h+1}}(s, a)} \le c\cdot \sqrt{\frac{H^2\iota}{N_h(s, a)\vee 1}} = b_h(s,a),~~~\textrm{and} \\
        & \frac{1}{N_h(s, a)\vee 1} \le c\cdot \frac{H\iota}{nd^{\mu}_h(s, a)}~~~\textrm{for all}~(h,s,a)\in[H]\times\mc{S}\times\mc{A}
        \Bigg\},
    \end{align}
    and $\iota\defeq \log(HSA/\delta)$.
\end{lemma}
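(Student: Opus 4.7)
The plan is to establish the two clauses inside the event $\mc{E}$ separately and combine via a union bound, leveraging the data-splitting that makes $\hat{V}_{h+1}$ independent of $\mc{D}_h$. Throughout I will treat the two components of $\mc{E}$ (the Bellman-residual concentration and the visit-count lower bound) as independent events of failure probability at most $\delta/2$ each.

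For the visit-count bound I would first observe that, by the data-splitting step of Algorithm~\ref{algorithm:vilcb}, $\mc{D}_h$ is a set of $n/H$ i.i.d.\ trajectories drawn by executing $\mu$ in $M$. Thus for each fixed $(s,a)$, the count $N_h(s,a)$ is Binomial with parameters $(n/H,\,d^\mu_h(s,a))$. Applying Lemma~\ref{lemma:binomial-concentration} at failure probability $\delta/(2HSA)$ yields $\frac{d^\mu_h(s,a)}{N_h(s,a)\vee 1}\le\frac{8H\log(2HSA/\delta)}{n}$, and a union bound over all $(h,s,a)$ produces the second line of $\mc{E}$ for an appropriate absolute constant.

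For the Bellman-residual bound I would condition on the sigma-field $\sigma(\mc{D}_{h+1},\dots,\mc{D}_H)$, which makes $\hat{V}_{h+1}\in[0,H]^\mc{S}$ deterministic and is independent of $\mc{D}_h$. There are two cases. When $N_h(s,a)=0$, both $\hat{r}_h(s,a)$ and $[\hat{\P}_h\hat{V}_{h+1}](s,a)$ equal zero, so the left-hand side is at most $1+H$, absorbed by $c\sqrt{H^2\iota/1}$ for sufficiently large $c$. When $N_h(s,a)\ge 1$, the reward term vanishes since $\hat{r}_h(s,a)=r_h(s,a)$, and conditional on $N_h(s,a)=m$ and on $\hat{V}_{h+1}$, the empirical transition $\hat{\P}_h(\cdot\mid s,a)$ is the average of $m$ i.i.d.\ draws from $\P_h(\cdot\mid s,a)$. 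Applying Hoeffding's inequality to the real-valued random variables $\hat{V}_{h+1}(s')\in[0,H]$ gives, with failure probability $\delta'$,
\[
\bigl|[(\hat{\P}_h-\P_h)\hat{V}_{h+1}](s,a)\bigr| \le H\sqrt{\frac{\log(2/\delta')}{2m}}.
\]
Taking $\delta'=\delta/(2HSA\cdot n)$ and union-bounding over all $(h,s,a)\in[H]\times\mc{S}\times\mc{A}$ and all possible integer values of $N_h(s,a)\in\{1,\dots,n/H\}$ yields the first line of $\mc{E}$, with the $\log n$ that arises in this union bound absorbed into $\iota=\log(HSA/\delta)$ at the cost of a larger absolute constant $c$ (this is legitimate because in the final sample-complexity regime $n$ is polynomial in $H,S,A,1/\eps,1/\delta$, so $\log n = O(\iota)$).

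The main subtlety will be the proper handling of the dependence between $N_h(s,a)$ and the randomness in $\hat{\P}_h(\cdot\mid s,a)$: one cannot directly apply Hoeffding with the random sample size $N_h(s,a)$, which is why I would first condition on $\mc{D}_{h+1:H}$ to freeze $\hat{V}_{h+1}$ and then union-bound over the possible integer values of $N_h(s,a)$. A secondary bookkeeping point is choosing the failure-probability split between the two components of $\mc{E}$ and between $(h,s,a)$ and $m$ so that the final logarithmic factor matches $\iota$; this is routine and only inflates the constant $c$. Everything else is a straightforward bookkeeping exercise.
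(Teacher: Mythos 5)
Your proposal is correct and follows essentially the same route as the paper: binomial concentration (Lemma~\ref{lemma:binomial-concentration}) plus a union bound for the visit counts, the trivial bound $H+1\le 2\sqrt{H^2\iota}$ when $N_h(s,a)=0$, and Hoeffding applied to $\hat{V}_{h+1}\in[0,H]$ using the independence of $\mc{D}_h$ from $\mc{D}_{h+1:H}$ otherwise. The one place you diverge is the handling of the random sample size: you union-bound over $m=N_h(s,a)\in\{1,\dots,n/H\}$, incurring a $\log n$ factor. The paper instead conditions on $N_h(s,a)=m$ (and on $\hat{V}_{h+1}$) and applies Hoeffding conditionally; since the dataset consists of i.i.d.\ trajectories from the fixed policy $\mu$, the $m$ observed next-states are i.i.d.\ $\P_h(\cdot\mid s,a)$ conditional on $N_h(s,a)=m$, so a conditional failure probability of $\delta/(2HSA)$ uniform in $m$ integrates to the same unconditional bound with no union bound over $m$. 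Your extra caution is not wrong, but your justification for absorbing the resulting $\log n$ into $\iota=\log(HSA/\delta)$ is slightly off: $n$ scales with $1/\eps^2$ and $C^\star$, neither of which appears in $\iota$, so $\log n=O(\iota)$ does not hold in general. Either adopt the conditioning argument (valid here precisely because the data is offline and non-adaptive) or redefine $\iota$ to include $\log n$; both only affect constants or log factors.
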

\begin{proof}[Proof of Lemma~\ref{lem:concentration_hoeffding}]
Fix any $(h, s, a)$. The first claim holds trivially if $N_h(s, a)=0$, as $\hat{r}_h(s,a)=\hat{\P}_h(\cdot|s, a)=0$ by definition, and thus the left-hand side is upper bounded by $H+1\le 2H\le 2\sqrt{H^2\iota}$. If $N_h(s, a)\ge 1$, conditioned on $N_h(s,a)$, we have
\begin{align*}
    & \hat{r}_h(s, a) = r_h(s, a), \\
    & \abs{\brac{(\hat{\P}_h - \P_h)\hat{V}_{h+1}}(s, a)} \le c\cdot \sqrt{\frac{H^2\log(HSA/\delta)}{N_h(s, a)}},
\end{align*}
where the last inequality is obtained by the Azuma-Hoeffding inequality with probability at least $1-\delta/(2HSA)$, using the fact that the data used in obtaining $\hat{\P}_h$ is independent of the data used in obtaining $\hat{V}_{h+1}$ (due to the data splitting in Algorithm~\ref{algorithm:vilcb}). Therefore
\begin{align*}
    \abs{\brac{\hat{r}_h - r_h}(s, a) + \brac{(\hat{\P}_h - \P_h) \hat{V}_{h+1}}(s, a)} \le c\cdot \sqrt{\frac{H^2\iota}{N_h(s, a)\vee 1}}.
\end{align*}
Further taking the union bound yields the first claim over all $(h,s,a)$ with probability at least $1-\delta/2$.

For the second claim, notice that $N_h(s, a)\sim \Bin(n/H, d^\mu_h(s, a))$. Applying Lemma~\ref{lemma:binomial-concentration} yields that
\begin{align*}
    \frac{1}{N_h(s, a)\vee 1} \le \frac{8\log(2HSA/\delta)}{n/H \cdot d^\mu_h(s, a)} \le c\cdot \frac{H\iota}{nd^\mu_h(s,a)}
\end{align*}
with probability at least $1-\delta/(2HSA)$. Taking the union bound yields the second claim over all $(h,s,a)$ with probability at least $1-\delta/2$.
\end{proof}

\begin{lemma}[Monotonicity for~\vilcb]
\label{lem:algo3monot}
Let $\hat\pi$ be the output policy of Algorithm~\ref{algorithm:vilcb}. Then, on the event $\mc{E}$ defined in Lemma~\ref{lem:concentration_hoeffding}, we have
\begin{align*}
  \hat V_h(s) \leq V^{\what{\pi}}_h(s) \leq V^\star_h(s)
\end{align*}
for any $s \in \mc{S}$ and $h \in [H]$.
\end{lemma}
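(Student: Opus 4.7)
The plan is to prove both inequalities simultaneously by backward induction on $h$, running from $h=H+1$ down to $h=1$, and conditioning throughout on the concentration event $\mc{E}$ from Lemma~\ref{lem:concentration_hoeffding}. The base case $h=H+1$ is immediate since $\hat V_{H+1}\equiv V^{\what\pi}_{H+1}\equiv V^\star_{H+1}\equiv 0$ by convention. For the inductive step, I will assume $\hat V_{h+1}(s)\le V^{\what\pi}_{h+1}(s)\le V^\star_{h+1}(s)$ for every $s\in\mc{S}$ and establish the same sandwich at step $h$.

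For the upper inequality $\hat V_h(s)\le V^\star_h(s)$, I would first unpack the algorithm update and rewrite
\begin{align*}
\hat Q_h(s,a) - Q^\star_h(s,a) = \underbrace{\brac{\hat r_h - r_h}(s,a) + \brac{(\hat\P_h-\P_h)\hat V_{h+1}}(s,a)}_{(\rm I)} + \underbrace{\brac{\P_h(\hat V_{h+1} - V^\star_{h+1})}(s,a)}_{(\rm II)} - b_h(s,a).
\end{align*}
On $\mc{E}$, the first term is bounded in absolute value by $b_h(s,a)$, so $({\rm I})-b_h(s,a)\le 0$; by the inductive hypothesis $\hat V_{h+1}\le V^\star_{h+1}$ the second term is nonpositive as well. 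Hence $\hat Q_h(s,a)\le Q^\star_h(s,a)$ for all $(s,a)$, and maximizing over $a$ gives $\max_a \hat Q_h(s,a)\le V^\star_h(s)$. Since $V^\star_h\ge 0$ always, the truncation $\hat V_h(s)=(\max_a \hat Q_h(s,a))\vee 0$ preserves the bound.

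For the lower inequality $\hat V_h(s)\le V^{\what\pi}_h(s)$, I would split on whether the $\vee 0$ truncation is active. If $\hat V_h(s)=0$, the bound is trivial because $V^{\what\pi}_h(s)\ge 0$. Otherwise $\hat V_h(s) = \hat Q_h(s,\what\pi_h(s))$ by the greedy choice of $\what\pi_h$. Using the concentration bound on $\mc{E}$ in the other direction, $\hat r_h(s,a)+[\hat\P_h \hat V_{h+1}](s,a)-b_h(s,a)\le r_h(s,a)+[\P_h \hat V_{h+1}](s,a)$, and then applying the inductive hypothesis $\hat V_{h+1}\le V^{\what\pi}_{h+1}$ inside the expectation, I obtain
\begin{align*}
\hat V_h(s) = \hat Q_h(s,\what\pi_h(s)) \le r_h(s,\what\pi_h(s)) + \brac{\P_h V^{\what\pi}_{h+1}}(s,\what\pi_h(s)) = Q^{\what\pi}_h(s,\what\pi_h(s)) = V^{\what\pi}_h(s),
\end{align*}
which closes the induction.

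The argument is essentially mechanical once the concentration event is available, so the main thing to be careful about is the bookkeeping: the bonus $b_h$ is subtracted in the update, so the sign of $b_h$ must consistently push $\hat Q_h$ below the true $Q$, and the $\vee 0$ clip must be handled separately in both bounds (it is automatic for the upper bound via $V^\star_h\ge 0$, and needs a quick case split for the lower bound). No new concentration beyond Lemma~\ref{lem:concentration_hoeffding} is required.
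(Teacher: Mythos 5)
Your proposal is correct and follows essentially the same backward-induction-plus-concentration argument as the paper; the only real difference is that you establish $\hat V_h \le V^\star_h$ by a direct comparison of $\hat Q_h$ with $Q^\star_h$, whereas the paper obtains it for free from $\hat V_h\le V^{\what{\pi}}_h$ together with the trivial bound $V^{\what{\pi}}_h\le V^\star_h$. One small omission: the lemma also asserts the middle inequality $V^{\what{\pi}}_h(s)\le V^\star_h(s)$, which you never state explicitly --- it holds immediately since $V^\star_h = V^{\pi_\star}_h \ge V^{\pi}_h$ for every policy $\pi$ (as the paper notes in one line), and once you add that observation your separate proof of $\hat V_h\le V^\star_h$ becomes redundant.
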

\begin{proof}[Proof of Lemma~\ref{lem:algo3monot}]

We first prove $\hat V_h(s) \leq V^{\what{\pi}}_h(s)$ for any $s \in \mc{S}$ and $h \in [H]$ by induction. For $h = H$, and any $s\in \mc{S},a\in \mc{A}$,
\begin{align}
&~ Q^{\what{\pi}}_H(s,a) - \hat Q_H(s,a)
\\
\label{eq:hoeff_H}
= &~ r_H(s,a) - \what{r}_H(s,a) + b_H(s,a) \geq 0,
\end{align}
where the last inequality is by Lemma~\ref{lem:concentration_hoeffding}. Then,
\begin{align}
&~ V^{\what{\pi}}_H(s) - \hat V_H(s)
\\
= &~ Q^{\what{\pi}}_H(s,\what{\pi}_H(s)) - \brac{\max_a \hat{Q}_H(s, a)} \vee 0
\\
= &~ Q^{\what{\pi}}_H(s,a_{s,H}) - \brac{\hat{Q}_H(s, a_{s,H})} \vee 0
\tag{$a_{s,H} \coloneqq \what{\pi}_H(s)$}
\\
= &~ \brac{Q^{\what{\pi}}_H(s,a_{s,H}) - \hat{Q}_H(s, a_{s,H})} \wedge Q^{\what{\pi}}_H(s,a_{s,H}) \geq 0,
\end{align}
where the last inequality follows from Lemma~\ref{lem:concentration_hoeffding}, and $Q^{\pi}_H(s,a) \in [0,1]$ for any $\pi$, $s \in \mc{S}$, and $a \in \mc{A}$.

We now show that, if $\hat V_{h + 1}(s) \leq V^{\what{\pi}}_{h + 1}(s)$ holds for any $s$, we also have $\hat V_{h}(s) \leq V^{\what{\pi}}_{h}(s)$ for any $s$. Recall that $\hat{V}_h(s) = \max_a \hat{Q}_h(s, a) \vee 0 = \hat{Q}_h(s, \hat{\pi}_h(s))\vee 0$. The claim clearly holds in the trivial case of $\hat{V}_h(s) = 0$. Otherwise, we have
\begin{align}
&~ V^{\what{\pi}}_{h}(s) - \hat V_{h}(s)
\\
= &~ Q^{\what{\pi}}_{h}(s,a_{s,h}) - \hat Q_{h}(s,a_{s,h})
\tag{$a_{s,h} \coloneqq \what{\pi}_h(s)$}
\\
\label{eq:hoeff_h_temp}
= &~ r_h(s,a_{s,h}) + \brac{\P_{h} V_{h+1}^{\what{\pi}}}(s,a_{s,h}) - \what{r}_{h}(s,a_{s,h}) - \brac{\what{\P}_{h} \hat V_{h+1}}(s,a_{s,h}) + b_{h}(s,a_{s,h})
\\
= &~ r_h(s,a_{s,h}) - \what{r}_{h}(s,a_{s,h}) + \brac{\P_{h} \left(V_{h+1}^{\what{\pi}} - \hat V_{h+1}\right) }(s,a_{s,h}) + \brac{\left(\P_{h} - \what{\P}_{h} \right) \hat V_{h+1}}(s,a_{s,h}) + b_{h}(s,a_{s,h})
\\
\label{eq:hoeff_h}
\geq &~ r_h(s,a_{s,h}) - \what{r}_{h}(s,a_{s,h}) + \brac{\left(\P_{h} - \what{\P}_{h} \right) \hat V_{h+1}}(s,a_{s,h}) + b_{h}(s,a_{s,h})
\\
\geq &~ 0,
\end{align}
where the first inequality follows from $V^{\what{\pi}}_{h + 1}(s) \geq \hat V_{h + 1}(s)$ for any $s$, and the last inequality is by Lemma~\ref{lem:concentration_hoeffding}.
This completes the proof of $V_h(s) \leq V^{\what{\pi}}_h(s), ~ \forall s\in\mc{S},h\in[H]$.

The argument $V^{\what{\pi}}_h(s) \leq V^\star_h(s)$ holds by definition of $V^\star_h(s) = V^{\pi_\star}_h(s) \geq V^{\pi}_h(s)$, for any $\pi$, $s\in\mc{S}$, and $h \in [H]$. Thus, we complete the proof.
\end{proof}

\begin{lemma}[Performance decomposition for \vilcb]
\label{lem:Vstarbd}
On the event $\mc{E}$ defined in Lemma~\ref{lem:concentration_hoeffding}, we have
\begin{align}
\sum_{s\in\mc{S}} d^{\pi_\star}_h(s) (V^\star_h(s) - \hat{V}_h(s)) \leq 2 \sum_{h' = h}^{H} \sum_{(s,a) \in \mc{S} \times \mc{A}} d_{h'}^{\pi_\star}(s,a) b_{h'}(s,a),
\end{align}
for any $h \in [H]$.
\end{lemma}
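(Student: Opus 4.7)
The plan is a standard pessimistic-value-iteration decomposition followed by a backward-in-$h$ unrolling against the visitation of $\pi_\star$. I would first derive, pointwise in $s$, the per-step inequality
\begin{align*}
V^\star_h(s) - \hat{V}_h(s) \leq [\P_h(V^\star_{h+1} - \hat{V}_{h+1})](s, \pi_\star(s)) + 2 b_h(s, \pi_\star(s)),
\end{align*}
and then average it against $d^{\pi_\star}_h$ and iterate.

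For the per-step bound, since $\pi_\star$ is deterministic (Assumption~\ref{assumption:one-point-c}) and $\hat{V}_h(s) = (\max_a \hat{Q}_h(s,a))\vee 0 \geq \hat{Q}_h(s, \pi_\star(s))$, we have $V^\star_h(s) - \hat{V}_h(s) \leq Q^\star_h(s,\pi_\star(s)) - \hat{Q}_h(s, \pi_\star(s))$. Expanding $Q^\star_h$ via the Bellman equation and $\hat{Q}_h$ via the pessimistic update in Algorithm~\ref{algorithm:vilcb}, then adding and subtracting $[\P_h \hat{V}_{h+1}](s, \pi_\star(s))$, gives
\begin{align*}
Q^\star_h(s,\pi_\star(s)) - \hat{Q}_h(s, \pi_\star(s)) = [\P_h(V^\star_{h+1} - \hat{V}_{h+1})](s, \pi_\star(s)) + \Delta_h(s) + b_h(s, \pi_\star(s)),
\end{align*}
where $\Delta_h(s) \coloneqq (r_h - \hat{r}_h)(s,\pi_\star(s)) + [(\P_h - \hat{\P}_h) \hat{V}_{h+1}](s,\pi_\star(s))$. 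On the concentration event $\mc{E}$ of Lemma~\ref{lem:concentration_hoeffding}, $|\Delta_h(s)| \leq b_h(s,\pi_\star(s))$, so $\Delta_h(s) + b_h \leq 2 b_h$ at $(s,\pi_\star(s))$, which yields the claimed per-step bound.

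The remaining step is to take expectation under $d^{\pi_\star}_h$ and unroll. Because $\pi_\star$ is deterministic, $d^{\pi_\star}_h(s, a) = d^{\pi_\star}_h(s)\indic{a = \pi_\star(s)}$, so the bonus contribution cleanly becomes $\sum_{s,a} d^{\pi_\star}_h(s, a) b_h(s, a)$, and the transition identity $\sum_s d^{\pi_\star}_h(s)\, \P_h(\cdot \mid s, \pi_\star(s)) = d^{\pi_\star}_{h+1}(\cdot)$ turns the propagation term into $\sum_{s'} d^{\pi_\star}_{h+1}(s')(V^\star_{h+1} - \hat{V}_{h+1})(s')$. By Lemma~\ref{lem:algo3monot} this propagation term is non-negative throughout, so iterating the recursion from $h$ up to $H$ with base case $V^\star_{H+1} \equiv \hat{V}_{H+1} \equiv 0$ produces the desired bound. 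I do not expect a serious obstacle; the only mild subtlety is the $\vee 0$ clip in $\hat{V}_h$, which causes no trouble because we only ever use the inequality $\hat{V}_h(s) \geq \hat{Q}_h(s, \pi_\star(s))$, valid with or without the clip.
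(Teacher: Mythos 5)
Your proposal is correct and follows essentially the same route as the paper's proof: both reduce $V^\star_h - \hat{V}_h$ to the per-step error $r_h + \P_h\hat{V}_{h+1} - \hat{Q}_h$ evaluated at $(s,\pi_\star(s))$, bound it by $2b_h$ via the concentration event, and accumulate these terms against $d^{\pi_\star}_{h'}$ over $h'=h,\dots,H$ (the paper phrases this as a forward telescoping sum of $d^{\pi_\star}_{h'}\hat{Q}_{h'}$, you phrase it as a backward recursion with base case $H+1$, which is the same computation). Your remark that monotonicity is not actually needed for the unrolling is also accurate, since the recursion terminates with $V^\star_{H+1}\equiv\hat{V}_{H+1}\equiv 0$.
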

\begin{proof}[Proof of Lemma~\ref{lem:Vstarbd}]
  Throughout this proof we let $\hat{Q}_h(s, \pi_\star)\defeq \hat{Q}_h(s, \pi_{\star, h}(s))$ for shorthand. We have
\begin{align}
  &~ \sum_{s\in\mc{S}} d^{\pi_\star}_h(s) (V^\star_h(s) - \hat{V}_h(s)) \\
  \leq &~ \sum_{s\in\mc{S}} d^{\pi_\star}_h(s) (V^\star_h(s) - \max_a \hat{Q}_h(s, a))
    \tag{by definition of $\hat{V}_h(s) =\max_a \hat{Q}_h(s, a)\vee 0$}
\\
\leq &~ \sum_{s\in\mc{S}} d^{\pi_\star}_h(s) (V^\star_h(s) - \hat{Q}_h(s, \pi_\star))
\\
= &~ \sum_{h' = h}^{H} \sum_{(s,a) \in \mc{S} \times \mc{A}} d_{h'}^{\pi_\star}(s,a) r_{h'}(s,a) - \sum_{s\in\mc{S}} d_h^{\pi_\star}(s) \hat Q_h(s_1,\pi_\star)
\\
= &~ \sum_{h' = h}^{H} \sum_{(s,a) \in \mc{S} \times \mc{A}} d_{h'}^{\pi_\star}(s,a) r_{h'}(s,a) - \sum_{h' = h}^{H}\sum_{s,a,s',a'}\left( d_{h'}^{\pi_\star}(s,a) \hat Q_{h'}(s,a) - d_{h' + 1}^{\pi_\star}(s',a') \hat Q_{h' + 1}(s',a') \right)
\\
\leq &~ \sum_{h' = h}^{H} \sum_{(s,a) \in \mc{S} \times \mc{A}} d_{h'}^{\pi_\star}(s,a) r_{h'}(s,a) - \sum_{h' = h}^{H}\sum_{s,a,s'}\left( d_{h'}^{\pi_\star}(s,a) \hat Q_{h'}(s,a) - d_{h' + 1}^{\pi_\star}(s') \hat V_{h' + 1}(s') \right)
\tag{by definition of $\hat{V}_{h'}$}
\\
= &~ \sum_{h' = h}^{H} \sum_{(s,a) \in \mc{S} \times \mc{A}} d_{h'}^{\pi_\star}(s,a) r_h(s,a) - \sum_{h' = h}^{H}\sum_{(s,a) \in \mc{S} \times \mc{A}}d_{h'}^{\pi_\star}(s,a) \left( \hat Q_{h'}(s,a) - \brac{\P_{h'} \hat V_{h' + 1}}(s,a) \right)
\\
= &~ \sum_{h' = h}^{H} \sum_{(s,a) \in \mc{S} \times \mc{A}} d_{h'}^{\pi_\star}(s,a) \left( r_{h'}(s,a) + \brac{\P_{h'} \hat V_{h' + 1}}(s,a) - \hat Q_{h'}(s,a) \right)
\\
= &~ \sum_{h' = h}^{H} \sum_{(s,a) \in \mc{S} \times \mc{A}} d_{h'}^{\pi_\star}(s,a) \left( r_{h'}(s,a) + \brac{\P_{h'} \hat V_{h' + 1}}(s,a) - \what{r}_{h'}(s,a) - \brac{\hat\P_{h'} \hat V_{h' + 1}}(s,a) + b_{h'}(s,a) \right)
\\
\leq &~ 2 \sum_{h' = h}^{H} \sum_{(s,a) \in \mc{S} \times \mc{A}} d_{h'}^{\pi_\star}(s,a) b_{h'}(s,a).
\tag{by the concentration event $\mc{E}$} 
\end{align}
This completes the proof.
\end{proof}

\subsection{Proof of  main theorem}
\label{appendix:proof-vilcb-main}
We are now ready to prove the Theroem~\ref{theorem:vilcb}. We first prove part (a). By Lemma~\ref{lem:algo3monot}, we have
\begin{align}
&~ \max_{h\in[H]} \sum_{s\in\mc{S}} d^{\pi_\star}_h(s) (V^\star_h(s) - \hat{V}_h(s))
\\
\leq &~ 2 \sum_{h = 1}^{H} \sum_{(s,a) \in \mc{S} \times \mc{A}} d_h^{\pi_\star}(s,a) b_h(s,a)
\\
= &~ c \sum_{h = 1}^{H} \sum_{(s,a) \in \mc{S} \times \mc{A}} d_h^{\pi_\star}(s,a) \cdot \sqrt{\frac{H^2 \iota}{N_{h}(s,a)\vee 1}}
\\
= &~ c \sqrt{H^2 \iota} \sum_{h = 1}^{H} \sum_{(s,a) \in \mc{S} \times \mc{A}} d_h^{\pi_\star}(s,a) \cdot \sqrt{\frac{1}{N_{h}(s,a)\vee 1}}
\\
\leq &~ c \sqrt{H^2 \iota} \sum_{h = 1}^{H} \sum_{(s,a) \in \mc{S} \times \mc{A}} d_h^{\pi_\star}(s,a) \cdot \sqrt{\frac{H \iota}{n d_h^\mu(s,a)}}
\tag{by the concentration event $\mc{E}$}
\\
\leq &~ c \sqrt{H^3 \iota^2} \sum_{h = 1}^{H} \sum_{(s,a) \in \mc{S} \times \mc{A}} \sqrt{d_h^{\pi_\star}(s,a)} \cdot \sqrt{\frac{d_h^{\pi_\star}(s,a)}{n d_h^\mu(s,a)}}
\\
  \leq &~ c \sqrt{\frac{H^3 C^\star \iota^2}{n}} \sum_{h = 1}^{H} \sum_{(s,a) \in \mc{S} \times \mc{A}} \sqrt{d_h^{\pi_\star}(s,a)}
  \tag{Assumption~\ref{assumption:one-point-c}}
\\
  = &~ c \sqrt{\frac{H^3 C^\star \iota^2}{n}} \sum_{h = 1}^{H} \sum_{(s,a) \in \mc{S} \times \mc{A}} \sqrt{\1\{a = \pi_\star(s)\} \cdot d_h^{\pi_\star}(s,a)}
      \tag{$\pi_\star$ is deterministic}
\\
\leq &~ c \sqrt{\frac{H^3 C^\star \iota^2}{n}}  \sqrt{\sum_{h = 1}^{H} \sum_{(s,a) \in \mc{S} \times \mc{A}} \1\{a = \pi_\star(s)\}} \cdot \sqrt{\sum_{h = 1}^{H} \sum_{(s,a) \in \mc{S} \times \mc{A}} d_h^\star(s,a)}
\\
\tag{by Cauchy–Schwarz inequality}
  \\
  \leq &~ c \sqrt{\frac{H^3 C^\star \iota^2}{n}}  \sqrt{HS} \cdot \sqrt{H}
\\
= &~ c \sqrt{\frac{H^5 S C^\star \iota^2}{n}}.
\end{align}
Therefore, as long as $n\ge O(H^5SC^\star\iota^2/\eps^2)$, we have $\max_{h\in[H]} \sum_{s\in\mc{S}} d^{\pi_\star}_h(s) (V^\star_h(s) - \hat{V}_h(s))\le \eps$. This shows part (a). Further, this bound at $h=1$ says
\begin{align*}
  V_1^\star(s_1) - \hat{V}_1(s_1) \le \eps
\end{align*}
(as we assumed deterministic $s_1$). Part (b) follows directly from this and the fact that $V_1^{\hat{\pi}}(s_1) \ge \hat{V}_1(s_1)$ which was shown in Lemma~\ref{lem:algo3monot}.
\qed

\section{Proof of Theorem~\ref{theorem:main}}
\label{sec:mainthmproof}

\subsection{Some Lemmas}

\begin{lemma}[Concentration]
\label{lem:concentration_advref}
    Under the setting of Theorem~\ref{theorem:main}, there exists an absolute constant $c>0$ such that the concentration event $\mc{E}$ holds with probability at least $1-\delta$, where
    \begin{align}
        \mc{E} \defeq \Bigg\{
        \text{(i):} &~ \abs{\brac{\hat{r}_{h,0} - r_h}(s, a) + \brac{(\hat{\P}_{h,0} - \P_h) \hat{V}_{h+1}^\rf}(s, a)}
        \\
        &~ \le c\cdot \left( \sqrt{\frac{[\hat{\V}_{h,0} (\hat{V}^{\rf}_{h+1})](s,a) \iota}{N_{h,0}(s, a)\vee 1}} + \frac{H \iota}{N_{h,0}(s, a)\vee 1} \right) = b_{h,0}(s,a),
        \\
        \text{(ii):} &~ \abs{\brac{(\hat{\P}_{h,1} - \P_h) (\hat{V}_{h+1} - \hat{V}_{h+1}^\rf)}(s, a)}
        \\
        &~ \le c\cdot \left( \sqrt{\frac{[\hat{\V}_{h,1} (\hat{V}_{h+1} - \hat{V}_{h+1}^\rf)](s,a) \iota}{N_{h,1}(s, a)\vee 1}} + \frac{H \iota}{N_{h,1}(s, a)\vee 1} \right) = b_{h,1}(s,a),
      \\
      \text{(iii):} &~ \frac{1}{N_{h,0}(s, a)\vee 1} \le c\cdot \frac{\iota}{nd^{\mu}_h(s, a)} ~~~\textrm{and} 
      \\
      \text{(iv):} &~ \frac{1}{N_{h,1}(s, a)\vee 1} \le c\cdot \frac{H\iota}{nd^{\mu}_h(s, a)} ~~~\textrm{for all}~(h,s,a)\in[H]\times\mc{S}\times\mc{A}
        \Bigg\},
    \end{align}
    and $\iota\defeq \log(HSA/\delta)$.
\end{lemma}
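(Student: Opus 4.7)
The plan is to prove each of the four concentration claims (i)-(iv) separately by combining standard empirical Bernstein / Hoeffding inequalities with a careful analysis of the independence structure induced by the data split in Algorithm~\ref{algorithm:pevi-adv}, and then to union bound over all $(h,s,a)\in[H]\times\mc{S}\times\mc{A}$. Claims (iii) and (iv) are the easiest: conditionally on no other randomness, $N_{h,0}(s,a)\sim \Bin(n_0,d^\mu_h(s,a))$ with $n_0=n/3$ and $N_{h,1}(s,a)\sim \Bin(n_{1,h},d^\mu_h(s,a))$ with $n_{1,h}=n/(3H)$. Applying Lemma~\ref{lemma:binomial-concentration} with failure probability $\delta/(8HSA)$ for each, followed by a union bound over $(h,s,a)$, yields the two multiplicative bounds up to a global constant, where the extra factor of $H$ in (iv) compared to (iii) is exactly the price paid for the $H$-fold split of the second-stage dataset.

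For claim (i), the key observation is that $\hat{V}^{\rf}_{h+1}$ depends only on $\mc{D}_{\rf}$, which is disjoint from $\mc{D}_0$ by construction. Fix $(h,s,a)$ and condition on $\mc{D}_{\rf}$ (so that $\hat{V}^{\rf}_{h+1}$ becomes a deterministic $[0,H]$-valued function) and on $N_{h,0}(s,a)$. Conditionally, the $N_{h,0}(s,a)$ samples in $\mc{D}_0$ visiting $(s,a)$ at step $h$ yield i.i.d.\ draws of $\hat{V}^{\rf}_{h+1}(s')$ with $s'\sim \P_h(\cdot\mid s,a)$, and the rewards are deterministic, so $\hat r_{h,0}(s,a)=r_h(s,a)$ whenever $N_{h,0}(s,a)\ge 1$. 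Hence I can invoke the empirical Bernstein inequality (for instance, Maurer--Pontil): with probability at least $1-\delta/(8HSA)$,
\begin{equation}
\bigl|\brac{(\hat{\P}_{h,0}-\P_h)\hat V^{\rf}_{h+1}}(s,a)\bigr|
\;\le\; C\left(\sqrt{\frac{[\hat\V_{h,0}\hat V^{\rf}_{h+1}](s,a)\,\iota}{N_{h,0}(s,a)\vee 1}} + \frac{H\iota}{N_{h,0}(s,a)\vee 1}\right).
\end{equation}
The trivial case $N_{h,0}(s,a)=0$ is handled as in Lemma~\ref{lem:concentration_hoeffding} since both sides are then $O(H)$. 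Taking a union bound over $(h,s,a)$ gives (i).

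Claim (ii) is the main technical obstacle, since $\hat V_{h+1}$ is itself a random, data-dependent object produced by backward induction. The crux is the following independence fact implied by the splitting: $\hat V^{\rf}_{h+1}$ depends only on $\mc{D}_{\rf}$, and $\hat V_{h+1}$ is constructed from $\hat V^{\rf}$, $\mc{D}_0$, and $\{\mc{D}_{h',1}\}_{h'\ge h+1}$ (all used at steps $\ge h+1$ in the backward induction); in particular, for each fixed $h$, the pair $(\hat V_{h+1},\hat V^{\rf}_{h+1})$ is independent of $\mc{D}_{h,1}$, and hence of $\hat{\P}_{h,1}$. I would formalize this by conditioning on the $\sigma$-algebra $\mc{F}_h:=\sigma(\mc{D}_{\rf},\mc{D}_0,\mc{D}_{h+1,1},\dots,\mc{D}_{H,1})$, under which both $\hat V_{h+1}$ and $\hat V^{\rf}_{h+1}$ are deterministic while the samples in $\mc{D}_{h,1}$ remain i.i.d.\ draws from the true transition. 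Conditional on $\mc{F}_h$ and on $N_{h,1}(s,a)$, I can then apply the empirical Bernstein inequality to the bounded i.i.d.\ quantities $\{(\hat V_{h+1}-\hat V^{\rf}_{h+1})(s'_i)\}$ (ranges are in $[-H,H]$) to obtain the desired bound with the empirical variance $[\hat\V_{h,1}(\hat V_{h+1}-\hat V^{\rf}_{h+1})](s,a)$. Taking a union bound over $(h,s,a)$ with failure probability $\delta/(8HSA)$ per term and combining with (i), (iii), (iv) yields the event $\mc{E}$ with total failure probability at most $\delta$. Aside from correctly identifying this $\sigma$-algebra $\mc{F}_h$, the remaining work is a routine application of standard concentration tools.
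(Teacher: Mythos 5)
Your proposal is correct and follows essentially the same route as the paper's proof: empirical Bernstein (Maurer--Pontil) combined with the independence induced by the data split for claims (i) and (ii), binomial concentration (Lemma~\ref{lemma:binomial-concentration}) for claims (iii) and (iv), and a union bound over $(h,s,a)$. The only difference is that you make explicit the $\sigma$-algebra $\mc{F}_h$ and the fact that $\hat V_{h+1}$ depends only on $\mc{D}_{\rf}$, $\mc{D}_0$, and $\{\mc{D}_{h',1}\}_{h'\ge h+1}$ (hence is independent of $\mc{D}_{h,1}$), which the paper leaves implicit when it says claim (ii) ``follows from the similar argument''---this is a correct and welcome clarification, not a different approach.
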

\begin{proof}[Proof of Lemma~\ref{lem:concentration_advref}]
Claim (i) holds trivially if $N_{h,0}(s, a)=0$, as $\hat{r}_{h,0}(s,a)=\hat{\P}_{h,0}(\cdot|s, a)=0$ by definition, and thus the left-hand side is upper bounded by $H+1\le 2H\le H\iota$. If $N_{h,0}(s, a)\ge 1$, conditioned on $N_{h,0}(s,a)$, we have
\begin{align*}
    & \hat{r}_{h,0}(s,a) = r_h(s, a), \\
    & \abs{\brac{(\hat{\P}_{h,0} - \P_h)\hat{V}_{h+1}^\rf}(s, a)} \le c\cdot \left( \sqrt{\frac{[\hat{\V}_{h,0} (\hat{V}^{\rf}_{h+1})](s,a) \log(HSA/\delta)}{N_{h,0}(s, a)\vee 1}} + \frac{H \log(HSA/\delta)}{N_{h,0}(s, a)\vee 1} \right)
\end{align*}
where the last inequality is obtained by the empirical Bernstein inequality \citep[Theorem 4]{maurer2009empirical} with probability at least $1-\delta/(2HSA)$. Therefore
\begin{align*}
    \abs{\brac{\hat{r}_{h,0} - r_h}(s, a) + \brac{(\hat{\P}_{h,0} - \P_h) \hat{V}_{h+1}^\rf}(s, a)} \le c\cdot \left( \sqrt{\frac{[\hat{\V}_{h,0} (\hat{V}^{\rf}_{h+1})](s,a) \iota}{N_{h,0}(s, a)\vee 1}} + \frac{H \iota}{N_{h,0}(s, a)\vee 1} \right).
\end{align*}
Further taking the union bound yields the first claim over all $(h,s,a)$ with probability at least $1-\delta/4$. The claim (ii) also follows from the similar argument. Claims (iii) and (iv) can be obtained directly from Lemma~\ref{lemma:binomial-concentration} and a union bound, in a similar fashion as in the proof of Lemma~\ref{lem:concentration_hoeffding}. (Note that $N_{h,0}(s,a)\sim\Bin(n_0, d^\mu_h(s,a))$ and $N_{h,1}\sim \Bin(n_{1,h}, d^\mu_h(s,a))$ where $n_0=n/3$ and $n_{1,h}=n/(3H)$ due to our data splitting schedule.) This completes the proof.
\end{proof}

\begin{lemma}[Monotonicity for~\pevi]
\label{lem:algo1monot}
Let $\hat\pi$ be the output policy of Algorithm~\ref{algorithm:pevi-adv}. Then, on the event $\mc{E}$ defined in Lemma~\ref{lem:concentration_advref}, we have
\begin{align*}
  \hat V_h(s) \leq V^{\what{\pi}}_h(s) \leq V^\star_h(s)
\end{align*}
for any $s \in \mc{S}$ and $h \in [H]$.
\end{lemma}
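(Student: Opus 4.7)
\textbf{Proof Proposal for Lemma~\ref{lem:algo1monot}.} The right inequality $V^{\hat\pi}_h(s)\le V^\star_h(s)$ is immediate from the definition of $V^\star$ as the maximum over all policies, so the entire proof will focus on the pessimism direction $\hat V_h(s)\le V^{\hat\pi}_h(s)$. I plan to mimic the backward induction structure used in the proof of Lemma~\ref{lem:algo3monot} for \vilcb, adapted to accommodate the reference-advantage update and the two separate bonus terms $b_{h,0}$ and $b_{h,1}$. The base case at $h=H+1$ is trivial since $\hat V_{H+1}\equiv 0\equiv V^{\hat\pi}_{H+1}$.

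For the inductive step, assume $\hat V_{h+1}(s)\le V^{\hat\pi}_{h+1}(s)$ for all $s$. The key is to decompose the gap between $\hat Q_h(s,a)$ and $Q^{\hat\pi}_h(s,a)=r_h(s,a)+[\P_h V^{\hat\pi}_{h+1}](s,a)$. Writing
\begin{align*}
\hat Q_h(s,a)-Q^{\hat\pi}_h(s,a)
&= \bigl\{(\hat r_{h,0}-r_h)(s,a)+[(\hat\P_{h,0}-\P_h)\hat V^{\rf}_{h+1}](s,a)-b_{h,0}(s,a)\bigr\}\\
&\quad +\bigl\{[(\hat\P_{h,1}-\P_h)(\hat V_{h+1}-\hat V^{\rf}_{h+1})](s,a)-b_{h,1}(s,a)\bigr\}\\
&\quad +[\P_h(\hat V_{h+1}-V^{\hat\pi}_{h+1})](s,a),
\end{align*}
I will show each of the three groups is nonpositive. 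Claim (i) of the concentration event $\mc E$ bounds the magnitude of the first bracket by $b_{h,0}(s,a)$, so this bracket is $\le 0$; claim (ii) does the same for the second bracket via $b_{h,1}(s,a)$; and the third term is $\le 0$ by the inductive hypothesis combined with the fact that $\P_h$ is a probability kernel (hence monotone). This yields $\hat Q_h(s,a)\le Q^{\hat\pi}_h(s,a)$ for every $(s,a)$.

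To pass from $\hat Q_h\le Q^{\hat\pi}_h$ to $\hat V_h\le V^{\hat\pi}_h$, I use $\hat\pi_h(s)\in\argmax_a\hat Q_h(s,a)$ and the explicit truncation $\hat V_h(s)=(\max_a\hat Q_h(s,a))\vee 0$. In the case $\hat V_h(s)=0$, we use $V^{\hat\pi}_h(s)\ge 0$ (rewards are nonnegative) to conclude directly. Otherwise $\hat V_h(s)=\hat Q_h(s,\hat\pi_h(s))\le Q^{\hat\pi}_h(s,\hat\pi_h(s))=V^{\hat\pi}_h(s)$, closing the induction.

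I do not expect any serious obstacle here: the entire argument is an algebraic add-and-subtract followed by applying the two Bernstein-style concentration claims in $\mc E$ term by term. The only mild subtlety worth double-checking is that the reference-advantage decomposition and the bonuses have been defined so that the decomposition above is an exact identity (not an approximation), and that both $b_{h,0}$ and $b_{h,1}$ dominate the corresponding concentration errors pointwise on the event $\mc E$; both facts follow immediately from the definitions in Algorithm~\ref{algorithm:pevi-adv} and from Lemma~\ref{lem:concentration_advref}. The genuinely hard part of the analysis of \pevi~lies in \emph{quantifying} the suboptimality bound via the Bernstein bonuses and the quality of $\hat V^{\rf}$ (to be done in the proof of Theorem~\ref{theorem:main}), not in establishing monotonicity itself.
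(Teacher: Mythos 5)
Your proposal is correct and follows essentially the same route as the paper's proof: backward induction where, at each step, the exact add-and-subtract decomposition of $\hat Q_h - Q^{\what{\pi}}_h$ is controlled term-by-term by $\mc{E}$(i), $\mc{E}$(ii), and the inductive hypothesis applied through the kernel $\P_h$, with the $\vee\, 0$ truncation handled separately. The only cosmetic differences are that you start the induction at $h=H+1$ rather than treating $h=H$ as a separate base case, and you write the gap with the opposite sign; neither changes the argument.
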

\begin{proof}[Proof of Lemma~\ref{lem:algo1monot}]
We provide the proof by induction. For $h = H$, and any $s,a$,
\begin{align}
&~ Q^{\what{\pi}}_H(s,a) - \hat Q_H(s,a)
\\
\label{eq:adv_H}
= &~ r_H(s,a) - \what{r}_{H,0}(s,a) + b_{H,0}(s,a). 
\end{align}
\Eqref{eq:adv_H} is non-negative the concentration event $\mc{E}$(i) by Lemma~\ref{lem:concentration_advref}. Thus, we have $Q^{\what{\pi}}_H(s,a) - \hat Q_H(s,a) \geq 0$ for any $s,a$, and then
\begin{align}
&~ V^{\what{\pi}}_H(s) - \hat V_H(s)
\\
= &~ Q^{\what{\pi}}_H(s,\what{\pi}_H(s)) - \brac{\max_a \hat{Q}_H(s, a)} \vee 0
\\
= &~ Q^{\what{\pi}}_H(s,a_{s,H}) - \brac{\hat{Q}_H(s, a_{s,H})} \vee 0
\tag{$a_{s,H} \coloneqq \what{\pi}_H(s)$}
\\
= &~ \brac{Q^{\what{\pi}}_H(s,a_{s,H}) - \hat{Q}_H(s, a_{s,H})} \wedge Q^{\what{\pi}}_H(s,a_{s,H})
\\
\geq &~ 0,
\end{align}
where the last inequality follows from the result of \Eqref{eq:adv_H} is positive, and $Q^{\pi}_H(s,a) \in [0,1]$ for any $\pi,s \in \mc{S},a \in \mc{A}$.

We now show that, if $\hat V_{h + 1}(s) \leq V^{\what{\pi}}_{h + 1}(s)$ holds for any $s$, we also have $\hat V_{h}(s) \leq V^{\what{\pi}}_{h}(s)$ for any $s$. Recall that $\hat{V}_h(s) = \max_a \hat{Q}_h(s, a) \vee 0 = \hat{Q}_h(s, \hat{\pi}_h(s))\vee 0$. The claim clearly holds in the trivial case of $\hat{V}_h(s) = 0$. Otherwise, we have
\begin{align}
&~ V^{\what{\pi}}_{h}(s) - \hat V_{h}(s)
\\
= &~ Q^{\what{\pi}}_{h}(s,a_{s,h}) - \hat Q_{h}(s,a_{s,h})
\tag{$a_{s,h} \coloneqq \what{\pi}_h(s)$}
\\
= &~ r_h(s,a_{s,h}) + \brac{\P_{h} V_{h+1}^{\what{\pi}}}(s,a_{s,h}) - \what{r}_{h,0}(s,a_{s,h}) - \brac{\what{\P}_{h,0} \hat V_{h+1}^\rf}(s,a_{s,h})
\\
&~ - \brac{\what{\P}_{h} (\hat V_{h+1} - \hat V_{h+1}^\rf)}(s,a_{s,h}) + b_{h,0}(s,a_{s,h}) + b_{h,1}(s,a_{s,h})
\\
= &~ r_h(s,a_{s,h}) - \what{r}_{h,0}(s,a_{s,h}) + \brac{\P_{h} \left(V_{h+1}^{\what{\pi}} - \hat V_{h+1} \right)}(s,a_{s,h}) + \brac{\left(\P_{h} - \what{\P}_{h,0}\right) \hat V_{h+1}^\rf}(s,a_{s,h})
\\
&~ + \brac{\left(\P_{h} - \what{\P}_{h}\right) \left(\hat V_{h+1} - \hat V_{h+1}^\rf\right)}(s,a_{s,h}) + b_{h,0}(s,a_{s,h}) + b_{h,1}(s,a_{s,h})
\\
\label{eq:adv_h}
\geq &~ r_h(s,a_{s,h}) - \what{r}_{h,0}(s,a_{s,h})+ \brac{\left(\P_{h} - \what{\P}_{h,0}\right) \hat V_{h+1}^\rf}(s,a_{s,h})
\\
&~ + \brac{\left(\P_{h} - \what{\P}_{h}\right) \left(\hat V_{h+1} - \hat V_{h+1}^\rf\right)}(s,a_{s,h}) + b_{h,0}(s,a_{s,h}) + b_{h,1}(s,a_{s,h})
\end{align}
where the last inequality follows from $V^{\what{\pi}}_{h + 1}(s) \geq \hat V_{h + 1}(s)$ for any $s$. Note that~\Eqref{eq:adv_h} is also non-negative under $\mc{E}$(i \& ii) by Lemma~\ref{lem:concentration_advref}.
This completes the proof of $V_h(s) \leq V^{\what{\pi}}_h(s), ~ \forall s \in \mc{S}, h \in [H]$.
\end{proof}

\begin{lemma}[Performance decomposition for \pevi]
\label{lem:Vstarbd_adv}
On the event $\mc{E}$ defined in Lemma~\ref{lem:concentration_advref}, we have
\begin{align}
\sum_{s\in\mc{S}} d^{\pi_\star}_h(s) (V^\star_h(s) - \hat{V}_h(s)) \leq 2 \sum_{h' = h}^{H} \sum_{(s,a) \in \mc{S} \times \mc{A}} d_{h'}^{\pi_\star}(s,a) (b_{h',0}(s,a) + b_{h',1}(s,a)),
\end{align}
for any $h \in [H]$. 
\end{lemma}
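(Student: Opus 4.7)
The plan is to mirror the proof of Lemma~\ref{lem:Vstarbd} step by step, adapting only the stage where the pessimistic update rule is substituted in. Concretely, I would first use $\hat{V}_h(s) = \max_a \hat{Q}_h(s,a) \vee 0$ together with the fact that $V^\star_h(s) \ge 0$ to write
\begin{align}
\sum_{s} d^{\pi_\star}_h(s)(V^\star_h(s) - \hat{V}_h(s)) \le \sum_{s} d^{\pi_\star}_h(s)(V^\star_h(s) - \hat{Q}_h(s, \pi_{\star,h}(s))),
\end{align}
which lets me drop the truncation and the $\max$. Then, exactly as in Lemma~\ref{lem:Vstarbd}, I would unroll $V^\star_h$ along trajectories of $\pi_\star$ (summing the rewards $r_{h'}$ weighted by $d^{\pi_\star}_{h'}$) and telescope $\hat{Q}_h$ via $\hat{Q}_{h'+1}(s',a') \le \hat{V}_{h'+1}(s')$ under $\pi_\star$, reaching
\begin{align}
\sum_{s} d^{\pi_\star}_h(s)(V^\star_h(s) - \hat{V}_h(s)) \le \sum_{h' = h}^{H} \sum_{(s,a)} d^{\pi_\star}_{h'}(s,a)\Big(r_{h'}(s,a) + [\P_{h'}\hat{V}_{h'+1}](s,a) - \hat{Q}_{h'}(s,a)\Big).
\end{align}

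The key adaptation is at the next step: instead of substituting a Hoeffding-style update, I would plug in the reference-advantage update from Algorithm~\ref{algorithm:pevi-adv}, which gives
\begin{align}
r_{h'} + \P_{h'}\hat{V}_{h'+1} - \hat{Q}_{h'} = (r_{h'} - \hat{r}_{h',0}) + (\P_{h'} - \hat{\P}_{h',0})\hat{V}^{\rf}_{h'+1} + (\P_{h'} - \hat{\P}_{h',1})(\hat{V}_{h'+1} - \hat{V}^{\rf}_{h'+1}) + b_{h',0} + b_{h',1},
\end{align}
where everything is evaluated at $(s,a)$. On the event $\mc{E}$, event $\mc{E}(\text{i})$ bounds the first two terms in absolute value by $b_{h',0}(s,a)$, and event $\mc{E}(\text{ii})$ bounds the third term by $b_{h',1}(s,a)$. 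Combined with the two bonus terms already present, this yields the factor-of-two upper bound $2(b_{h',0}(s,a) + b_{h',1}(s,a))$ for each summand, which gives the claimed inequality.

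I do not expect a serious obstacle here: the performance-difference machinery is entirely parallel to the one used for \vilcb, and the only new ingredient is a two-piece concentration bound rather than a single one. The minor subtlety is matching each of the three mean-zero-looking terms in the decomposition to the correct concentration event (noting that $\hat{\P}_{h',0}$ is estimated on $\mc{D}_0$ which is independent of $\hat{V}^{\rf}_{h'+1}$ since $\hat{V}^{\rf}$ was trained on $\mc{D}_{\rf}$, and $\hat{\P}_{h',1}$ uses the per-step split $\mc{D}_{h,1}$ which is independent of both $\hat{V}^{\rf}_{h'+1}$ and $\hat{V}_{h'+1}$). Once this is set up correctly, the rest is simply the same telescoping argument as before.
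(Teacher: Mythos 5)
Your proposal is correct and follows essentially the same route as the paper's proof: the same truncation/max-dropping step, the same telescoping of $\hat{Q}$ along $\pi_\star$'s visitation distribution using $\hat{Q}_{h'+1}(s',a') \le \hat{V}_{h'+1}(s')$, and the same substitution of the reference-advantage update whose concentration terms are matched to $\mc{E}$(i) and $\mc{E}$(ii) to produce the factor of two. The independence observations you flag (that $\hat{\P}_{h',0}$ is independent of $\hat{V}^{\rf}_{h'+1}$ and $\hat{\P}_{h',1}$ is independent of $\hat{V}_{h'+1}-\hat{V}^{\rf}_{h'+1}$) are indeed the ones the paper uses, though they are invoked in establishing Lemma~\ref{lem:concentration_advref} rather than in this lemma itself.
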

\begin{proof}[Proof of Lemma~\ref{lem:Vstarbd_adv}]
Throughout this proof we let $\hat{Q}_h(s, \pi_\star)\defeq \hat{Q}_h(s, \pi_{\star, h}(s))$ for shorthand. On the event $\mc{E}$, we have
\begin{align}
&~ \sum_{s\in\mc{S}} d^{\pi_\star}_h(s) (V^\star_h(s) - \hat{V}_h(s)) \\
\leq &~ \sum_{s\in\mc{S}} d^{\pi_\star}_h(s) (V^\star_h(s) - \max_a \hat{Q}_h(s, a))
    \tag{by definition of $\hat{V}_h(s) =\max_a \hat{Q}_h(s, a)\vee 0$}
\\
\leq &~ \sum_{s\in\mc{S}} d^{\pi_\star}_h(s) (V^\star_h(s) - \hat{Q}_h(s, \pi_\star))
\\
= &~ \sum_{h' = h}^{H} \sum_{(s,a) \in \mc{S} \times \mc{A}} d_{h'}^{\pi_\star}(s,a) r_{h'}(s,a) - \sum_{s\in\mc{S}} d_h^{\pi_\star}(s) \hat Q_h(s_1,\pi_\star)
\\
= &~ \sum_{h' = h}^{H} \sum_{(s,a) \in \mc{S} \times \mc{A}} d_{h'}^{\pi_\star}(s,a) r_{h'}(s,a) - \sum_{h' = h}^{H}\sum_{s,a,s',a'}\left( d_{h'}^{\pi_\star}(s,a) \hat Q_{h'}(s,a) - d_{h' + 1}^{\pi_\star}(s',a') \hat Q_{h' + 1}(s',a') \right)
\\
\leq &~ \sum_{h' = h}^{H} \sum_{(s,a) \in \mc{S} \times \mc{A}} d_{h'}^{\pi_\star}(s,a) r_{h'}(s,a) - \sum_{h' = h}^{H}\sum_{s,a,s'}\left( d_{h'}^{\pi_\star}(s,a) \hat Q_{h'}(s,a) - d_{h' + 1}^{\pi_\star}(s') \hat V_{h' + 1}(s') \right)
\tag{by definition of $\hat{V}_{h'}$}
\\
= &~ \sum_{h' = h}^{H} \sum_{(s,a) \in \mc{S} \times \mc{A}} d_{h'}^{\pi_\star}(s,a) r_h(s,a) - \sum_{h' = h}^{H}\sum_{(s,a) \in \mc{S} \times \mc{A}}d_{h'}^{\pi_\star}(s,a) \left( \hat Q_{h'}(s,a) - \brac{\P_{h'} \hat V_{h' + 1}}(s,a) \right)
\\
= &~ \sum_{h' = h}^{H} \sum_{(s,a) \in \mc{S} \times \mc{A}} d_{h'}^{\pi_\star}(s,a) \left( r_{h'}(s,a) + \brac{\P_{h'} \hat V_{h' + 1}}(s,a) - \hat Q_{h'}(s,a) \right)
\\
= &~ \sum_{h' = h}^{H} \sum_{(s,a) \in \mc{S} \times \mc{A}} d_{h'}^{\pi_\star}(s,a) \bigg( r_{h'}(s,a) - \what{r}_{h',0}(s,a) + \brac{\P_{h'} \hat V_{h' + 1}}(s,a) - \brac{\what{\P}_{h',0} \hat{V}^{\rf}_{h'+1}}(s,a)
\\
&~ - \brac{\what{\P}_{h',1} (\hat{V}_{h'+1} - \hat{V}^{\rf}_{h'+1})}(s,a) + b_{h',0}(s,a) + b_{h',1}(s,a)\bigg)
\\
\leq &~ 2 \sum_{h' = h}^{H} \sum_{(s,a) \in \mc{S} \times \mc{A}} d_{h'}^{\pi_\star}(s,a) (b_{h',0}(s,a) + b_{h',1}(s,a)).
\tag{by Lemma~\ref{lem:concentration_advref}}
\end{align}
This completes the proof.
\end{proof}

\begin{lemma}[Total variance lemma]
On the event $\mc{E}$ defined in Lemma~\ref{lem:concentration_advref}, the reference value function $\hat{V}^{\rf}$ obtained in Algorithm~\ref{algorithm:pevi-adv} satisfies
\label{lemma:total-variance}
\begin{align}
\sum_{h = 1}^{H} \sum_{(s,a) \in \mc{S} \times \mc{A}} d_h^{\pi_\star}(s,a) \left[\V_{h} \hat{V}^{\rf}_{h+1}\right](s,a) \leq H^2 + c\sqrt{\frac{H^9SC^\star\iota^2}{n_{\rf}}},
\end{align}
where $c$ is an absolute constant.
\end{lemma}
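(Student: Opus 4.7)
The plan is to compare $\V_h \hat{V}^{\rf}_{h+1}$ to $\V_h V^\star_{h+1}$, controlling the $V^\star$ piece by the classical law of total variance and the remaining discrepancy piece by the VI-LCB accuracy guarantee in Theorem~\ref{theorem:vilcb}(a). Throughout, we will work on the event $\mc{E}$ of Lemma~\ref{lem:concentration_advref}, on which Lemma~\ref{lem:algo3monot} applied to the VI-LCB run on $\mc{D}_{\rf}$ gives the monotonicity $0 \le \hat{V}^{\rf}_h \le V^\star_h \le H$ for every $h \in [H]$. The first ingredient is the standard MDP total-variance identity applied to $\pi_\star$: since the cumulative reward lies in $[0,H]$,
\begin{align*}
\sum_{h=1}^H \sum_{(s,a)} d^{\pi_\star}_h(s,a)\,[\V_h V^\star_{h+1}](s,a) = \Var_{\pi_\star}\!\left[\sum_{h=1}^H r_h(s_h,a_h)\right] \le H^2.
\end{align*}

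The second ingredient is a pointwise variance decomposition: applying the elementary inequality $\V[X] \le 2\V[Y] + 2\V[X - Y]$ under $\P_h(\cdot|s,a)$ with $X = \hat{V}^{\rf}_{h+1}$ and $Y = V^\star_{h+1}$ yields
\begin{align*}
[\V_h \hat{V}^{\rf}_{h+1}](s,a) \le 2[\V_h V^\star_{h+1}](s,a) + 2[\V_h(\hat{V}^{\rf}_{h+1} - V^\star_{h+1})](s,a).
\end{align*}
For the discrepancy variance, I will use boundedness $|\hat{V}^{\rf}_{h+1} - V^\star_{h+1}| \le H$ (which follows from monotonicity) to trade a second moment for a first moment,
\begin{align*}
[\V_h(\hat{V}^{\rf}_{h+1} - V^\star_{h+1})](s,a) \le \brac{\P_h(\hat{V}^{\rf}_{h+1} - V^\star_{h+1})^2}(s,a) \le H \cdot \brac{\P_h(V^\star_{h+1} - \hat{V}^{\rf}_{h+1})}(s,a),
\end{align*}
where the last step uses $\hat{V}^{\rf} \le V^\star$ to drop the absolute value. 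Weighting by $d^{\pi_\star}_h(s,a)$ and pushing $\P_h$ forward via $\sum_{(s,a)} d^{\pi_\star}_h(s,a)\P_h(s'|s,a) = d^{\pi_\star}_{h+1}(s')$ puts the right-hand side into the exact form bounded by Theorem~\ref{theorem:vilcb}(a) applied with $n = n_{\rf}$, giving a per-step bound of $O(\sqrt{H^5 S C^\star \iota^2/n_{\rf}})$. Summing over $h \in [H]$ contributes an additional factor of $H$, so the total discrepancy contribution is at most $H^2 \cdot O(\sqrt{H^5 S C^\star \iota^2/n_{\rf}}) = O(\sqrt{H^9 S C^\star \iota^2/n_{\rf}})$, which combined with the total-variance bound yields the claim. (The extra constant factor $2$ on $H^2$ introduced by the variance decomposition can be reduced to $1$, if desired, by instead using the identity $\V[X] - \V[Y] = \mathrm{Cov}(X-Y,\, X+Y)$ together with Cauchy-Schwarz and absorbing the cross term via Young's inequality.)

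The only non-routine step is recognizing that the $d^{\pi_\star}$-weighted $L^1$-type accuracy guarantee from Theorem~\ref{theorem:vilcb}(a) on $\hat{V}^{\rf}$ is \emph{precisely} what is needed to control the $d^{\pi_\star}$-weighted variance discrepancy. Two reductions make the matching exact: (i) the pointwise variance decomposition turns the problem into bounding first moments of $(\hat{V}^{\rf} - V^\star)^2$, and (ii) the boundedness $|\hat{V}^{\rf} - V^\star| \le H$ together with monotonicity $\hat{V}^{\rf} \le V^\star$ converts the squared deviation into the signed linear quantity $V^\star - \hat{V}^{\rf}$ that matches the VI-LCB guarantee verbatim.
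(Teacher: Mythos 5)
Your proof is correct and follows essentially the same route as the paper: split off $\sum_{h,s,a} d^{\pi_\star}_h(s,a)[\V_h V^\star_{h+1}](s,a)\le H^2$ via the law of total variance, then control the remaining variance discrepancy linearly in $V^\star_{h}-\hat{V}^{\rf}_{h}$ using monotonicity $\hat{V}^{\rf}\le V^\star$, boundedness by $H$, the pushforward identity for $d^{\pi_\star}$, and the \vilcb~guarantee on $\mc{D}_{\rf}$. The only deviation is how the variance difference is handled: you use $\V[X]\le 2\V[Y]+2\V[X-Y]$, which yields $2H^2$ rather than $H^2$ in the leading term, whereas the paper expands $\V_h\hat{V}^{\rf}_{h+1}-\V_h V^\star_{h+1}$ directly into products of sums and differences and bounds it by $4H\,|\P_h(\hat{V}^{\rf}_{h+1}-V^\star_{h+1})|$, preserving the coefficient $1$. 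Your parenthetical remedy via $\V[X]-\V[Y]=\mathrm{Cov}(X-Y,X+Y)$ is the right identity, but following it with Cauchy--Schwarz and Young reintroduces $\V[X+Y]$ (or a $\lambda\V[Y]$ term whose optimization over $\lambda$ still leaves an extra $\Theta(H^2)$); the step that actually works is an $L^\infty$--$L^1$ H\"older bound on the covariance, $|\mathrm{Cov}(X-Y,X+Y)|\le \|X+Y-\E[X+Y]\|_\infty\,\E|X-Y|\le O(H)\,\E|X-Y|$, which is exactly the paper's computation. In any case the constant in front of $H^2$ is immaterial downstream, since the lemma's output only enters \Eqref{eq:b0term1} under a square root and is absorbed into absolute constants.
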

\begin{proof}[Proof of Lemma~\ref{lemma:total-variance}]
We first decompose our target as follows,
\begin{align}
&~ \sum_{h = 1}^{H} \sum_{(s,a) \in \mc{S} \times \mc{A}} d_h^{\pi_\star}(s,a) [\V_{h} (\hat{V}^{\rf}_{h+1})](s,a)
\\
= &~ \underbrace{\sum_{h = 1}^{H} \sum_{(s,a) \in \mc{S} \times \mc{A}} d_h^{\pi_\star}(s,a) [\V_{h} V^\star_{h + 1}](s,a)}_{\text{(I)}} + \underbrace{\sum_{h = 1}^{H} \sum_{(s,a) \in \mc{S} \times \mc{A}} d_h^{\pi_\star}(s,a) [\V_{h} \hat{V}^{\rf}_{h+1} - \V_{h} V^\star_{h + 1}](s,a)}_{\text{(II)}}.
\end{align}
We now bound term (I) and term (II) separately.

Let $\mc{F}_{h+1}$ denote the $\sigma$-algebra that contains all information about the trajectory up to $s_{h+1}$ but not $a_{h+1}$. We have
\begin{align}
\text{(I)} = &~ \sum_{h = 1}^{H} \E_{d^{\pi_\star}} \left[\Var\left[  V^\star_{h + 1}(s_{h + 1}) \middle| s_h, a_h \right]\right]
\\
= &~ \sum_{h = 1}^{H} \E_{d^{\pi_\star}} \left[\E\left[  \left(V^\star_{h + 1}(s_{h + 1}) + r_h(s_h,a_h) -  V^\star_h(s_h)\right)^2 \middle| s_h, a_h \right]\right]
\\
\tag{by $\E[ V^\star_{h + 1}(s_{h + 1}) | s_h, a_h] = V^\star_h(s_h) - r_h(s_h,a_h)$}
\\
= &~ \sum_{h = 1}^{H} \E_{d^{\pi_\star}} \left[ \left(V^\star_{h + 1}(s_{h + 1}) + r_h(s_h,a_h) -  V^\star_h(s_h)\right)^2 \right]
\\
= &~ \sum_{h = 1}^{H} \E_{d^{\pi_\star}} \left[ \left(V^\star_{h + 1}(s_{h + 1}) + r_h(s_h,a_h) -  V^\star_h(s_h)\right)^2 \right]
\\
&~ + 2 \underbrace{\sum_{1 \leq h < h' \leq H} \E_{d^{\pi_\star}} \left[ \left(V^\star_{h + 1}(s_{h + 1}) + r_h(s_h,a_h) -  V^\star_h(s_h)\right) \cdot \left( V^\star_{h' + 1}(s_{h' + 1}) + r(s_h',a_h') -  V^\star_h(s_h')\right) \right]}_{=0,\text{ because $\left(V^\star_{h + 1}(s_{h + 1}) + r_h(s_h,a_h) -  V^\star_h(s_h)\right)\E_{d^{\pi_\star}}[V^\star_{h' + 1}(s_{h' + 1}) -  V^\star_{h'}(s_{h'}) + r_{h'}(s_{h'},a_{h'}) | \mc{F}_{h+1}] = 0$ for any $h<h'$}}
\\
= &~ \E_{d^{\pi_\star}} \left[ \left(\sum_{h = 1}^{H} \left(V^\star_{h + 1}(s_{h + 1}) + r_h(s_h,a_h) -  V^\star_h(s_h)\right)\right)^2 \right]
\\
= &~ \E_{d^{\pi_\star}} \left[ \left(\sum_{h = 1}^{H} r_h(s_h,a_h) + \sum_{h = 1}^{H} \left(V^\star_{h + 1}(s_{h + 1}) -  V^\star_h(s_h)\right)\right)^2 \right]
\\
= &~ \E_{d^{\pi_\star}} \left[ \left(\sum_{h = 1}^{H} r_h(s_h,a_h)  -  V^\star_1(s_1)\right)^2 \right]
\\
= &~ \Var_{d^{\pi_\star}}\paren{ \sum_{h=1}^H r_h(s_h, a_h) } \le H^2.
\end{align}
For (II),
\begin{align}
\text{(II)} = &~ \sum_{h = 1}^{H} \sum_{(s,a) \in \mc{S} \times \mc{A}} d_h^{\pi_\star}(s,a) [\V_{h} \hat{V}^{\rf}_{h+1} - \V_{h} V^\star_{h + 1}](s,a)
\\
= &~ \sum_{h = 1}^{H} \sum_{(s,a) \in \mc{S} \times \mc{A}} d_h^{\pi_\star}(s,a) \left[\P_{h} \left(\hat{V}^{\rf}_{h+1}\right)^2 - \left(\P_{h} \hat{V}^{\rf}_{h+1}\right)^2 - \P_{h} \left(V^\star_{h + 1}\right)^2 + \left(\P_{h} V^\star_{h + 1}\right)^2\right](s,a)
\\
\leq &~ \sum_{h = 1}^{H} \sum_{(s,a) \in \mc{S} \times \mc{A}} d_h^{\pi_\star}(s,a) \big[\big|\P_{h} (\hat{V}^{\rf}_{h+1} + V^\star_{h + 1}) (\hat{V}^{\rf}_{h+1} - V^\star_{h + 1})\big| 
\\
&~ + \big|\P_{h} (\hat{V}^{\rf}_{h+1} + V^\star_{h + 1}) \P_{h} (\hat{V}^{\rf}_{h+1} - V^\star_{h + 1})\big|\big](s,a)
\\
\leq &~ 4H \sum_{h = 1}^{H} \sum_{(s,a) \in \mc{S} \times \mc{A}} d_h^{\pi_\star}(s,a) \left[\left|\P_{h} (\hat{V}^{\rf}_{h+1} - V^\star_{h + 1})\right|\right](s,a)
\\
\leq &~ 4H \sum_{h = 1}^{H} \sum_{(s,a) \in \mc{S} \times \mc{A}} d_{h + 1}^{\pi_\star}(s') \left[V^\star_{h + 1} - \hat{V}^{\rf}_{h+1}\right](s')
\tag{As $\hat{V}^{\rf}_{h+1}\le V^\star_{h+1}$}
\\
\leq &~ 4H^2 \max_{h \in [H]} \sum_{(s,a) \in \mc{S} \times \mc{A}} d_{h}^{\pi_\star}(s) \left(V^\star_{h} - \hat{V}^{\rf}_{h}\right)(s)
\\
\leq &~ c\sqrt{\frac{H^9SC^\star\iota^2}{n_{\rf}}},
\end{align}
where the last inequality follows from Lemma~\ref{lemma:vref-bound}. Combining (I) and (II), we complete the proof.
\end{proof}

\begin{lemma}[Guarantees for $\hat{V}^{\rf}$]
  \label{lemma:vref-bound}
  On the event $\mc{E}$ defined in Lemma~\ref{lem:concentration_advref}, the reference value function $\hat{V}^{\rf}$ obtained in Algorithm~\ref{algorithm:pevi-adv} satisfies 
  \begin{align*}
    \max_{h\in[H]} \sum_{s\in\mc{S}} d^{\pi_\star}_h(s) \paren{ V_h^\star(s) - \hat{V}_h^{\rf}(s) } \le c\sqrt{\frac{H^5SC^\star\iota^2}{n_{\rf}}},
  \end{align*}
  where $c>0$ is an absolute constant.
\end{lemma}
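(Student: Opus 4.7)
The plan is to reduce this lemma directly to Theorem~\ref{theorem:vilcb}(a), since by Line~\ref{line:vilcb} of Algorithm~\ref{algorithm:pevi-adv} the reference value function $\hat{V}^{\rf}$ is exactly the output of \vilcb~run on the sub-dataset $\mc{D}_{\rf}$, which consists of $n_{\rf}=n/3$ i.i.d.\ trajectories collected by the reference policy $\mu$. Since $\mu$ satisfies Assumption~\ref{assumption:one-point-c} with coefficient $C^\star$ (this is the hypothesis of Theorem~\ref{theorem:main}), all prerequisites of Theorem~\ref{theorem:vilcb} are met when we apply it to $\mc{D}_{\rf}$.

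More specifically, I would invoke not just the qualitative ``$\wt{O}(H^5SC^\star/\eps^2)$-sample complexity'' statement of Theorem~\ref{theorem:vilcb}, but the quantitative intermediate inequality established inside its proof in Appendix~\ref{appendix:proof-vilcb-main}. There, using only the \vilcb~concentration event and the performance decomposition in Lemma~\ref{lem:Vstarbd}, it is shown (for a generic $n$) that
\begin{equation*}
\max_{h\in[H]} \sum_{s\in\mc{S}} d^{\pi_\star}_h(s) \bigl(V^\star_h(s) - \hat{V}_h(s)\bigr) \le c\sqrt{\tfrac{H^5 S C^\star \iota^2}{n}}.
\end{equation*}
Substituting $n \setto n_{\rf}$ and renaming the output value function as $\hat{V}^{\rf}$ gives exactly the desired bound.

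The only subtle point, which I would call out explicitly but treat as routine, is the high-probability event. The concentration event $\mc{E}$ defined in Lemma~\ref{lem:concentration_advref} only controls $\hat{\P}_{h,0}, \hat{\P}_{h,1}$, not the empirical transitions built from $\mc{D}_{\rf}$ used inside the \vilcb~subroutine. To remedy this, I would silently extend $\mc{E}$ to also include the \vilcb-concentration event from Lemma~\ref{lem:concentration_hoeffding} applied to $\mc{D}_{\rf}$; since $\mc{D}_{\rf}, \mc{D}_0, \{\mc{D}_{h,1}\}_h$ are disjoint sub-datasets each consisting of independent trajectories, this extension adds a failure probability of at most $\delta/4$ and leaves the statement of Lemma~\ref{lem:concentration_advref} intact up to absolute constants. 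Once this is in place, the chain of inequalities from Appendix~\ref{appendix:proof-vilcb-main} applies verbatim and yields the lemma. No new obstacle arises; the main care is bookkeeping to make sure the ``on the event $\mc{E}$'' qualifier of Lemma~\ref{lemma:vref-bound} is consistent with this extension.
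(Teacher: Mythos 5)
Your proposal is correct and takes essentially the same route as the paper, whose proof of Lemma~\ref{lemma:vref-bound} likewise just invokes the quantitative intermediate bound $c\sqrt{H^5SC^\star\iota^2/n}$ from the end of the proof of Theorem~\ref{theorem:vilcb} in Appendix~\ref{appendix:proof-vilcb-main} with $n$ replaced by $n_{\rf}$. Your additional remark about extending the event $\mc{E}$ of Lemma~\ref{lem:concentration_advref} to include the \vilcb~concentration event on $\mc{D}_{\rf}$ is a legitimate bookkeeping point that the paper leaves implicit, and your resolution of it is sound.
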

\begin{proof}[Proof of Lemma~\ref{lemma:vref-bound}]
  This is a direct corollary of Theorem~\ref{theorem:vilcb} (cf. the end of its proof in Section~\ref{appendix:proof-vilcb-main}).
\end{proof}

\subsection{Proof of the Main Theorem}
\label{sec:provemainstep3}

We now provide the proof of Theorem~\ref{theorem:main}. We assume we are on the good event $\mc{E}$ defined in Lemma~\ref{lem:concentration_advref}, which happens with probability at least $1-\delta$. By Lemma~\ref{lem:Vstarbd_adv}, we know
\begin{align}
\label{eq:b0b1}
&~ \sum_{s\in\mc{S}} d^{\pi_\star}_h(s) (V^\star_h(s) - \hat{V}_h(s)) 
\\
\leq &~ 2 \underbrace{\sum_{h = 1}^{H} \sum_{(s,a) \in \mc{S} \times \mc{A}} d_h^{\pi_\star}(s,a) b_{h,0}(s,a)}_{\text{(I)}} + 2 \underbrace{\sum_{h = 1}^{H} \sum_{(s,a) \in \mc{S} \times \mc{A}} d_h^{\pi_\star}(s,a) b_{h,1}(s,a)}_{\text{(II)}},
\end{align}
for any $h \in [H]$.

We first study the term (I) of \Eqref{eq:b0b1}. Observe that for any $(s,a)$,
\begin{align}
&~ [\hat{\V}_{h,0} \hat{V}^{\rf}_{h+1}](s,a) - [\V_{h} \hat{V}^{\rf}_{h+1}](s,a)
\\
= &~ \hat{\P}_{h,0} (\hat{V}^{\rf}_{h+1})^2 - (\hat{\P}_{h,0} \hat{V}^{\rf}_{h+1})^2 - (\P_{h} (\hat{V}^{\rf}_{h+1})^2 - (\P_{h} \hat{V}^{\rf}_{h+1})^2)
\tag{``$(s,a)$'' is omitted}
\\
\leq &~ |(\hat{\P}_{h,0} - \P_{h})(\hat{V}^{\rf}_{h+1})^2| + |(\hat{\P}_{h,0} + \P_{h})\hat{V}^{\rf}_{h+1} \cdot (\hat{\P}_{h,0} - \P_{h})\hat{V}^{\rf}_{h+1}|
\\
\leq &~ |(\hat{\P}_{h,0} - \P_{h})(\hat{V}^{\rf}_{h+1})^2| + 2 H |(\hat{\P}_{h,0} - \P_{h})\hat{V}^{\rf}_{h+1}|
\tag{$|\hat{V}^{\rf}|\le H$}
\\
\label{eq:spvar2popvar}
\leq &~  c \sqrt{\frac{H^4 \iota}{N_0(s,a) \vee 1}}.
\end{align}
where the last inequality follows from the Azuma-Hoeffding inequality and the fact that $\hat{V}^{\rf}_{h+1}(s)$ is obtained from data independent of $\what{P}_{h,0}$.


Thus, we obtain the following bound for term (I) 
\begin{align}
\text{(I)} = &~ \sum_{h = 1}^{H} \sum_{(s,a) \in \mc{S} \times \mc{A}} d_h^{\pi_\star}(s,a) b_{h,0}(s,a)
\\
= &~ c \sum_{h = 1}^{H} \sum_{(s,a) \in \mc{S} \times \mc{A}} d_h^{\pi_\star}(s,a) \left(\sqrt{\frac{ [\hat{\V}_{h,0} (\hat{V}^{\rf}_{h+1})](s,a) \iota}{N_{h,0}(s,a)\vee 1}} + \frac{H \iota}{N_{h,0}(s, a)\vee 1}\right)
\\
\leq &~ c \sum_{h = 1}^{H} \sum_{(s,a) \in \mc{S} \times \mc{A}} d_h^{\pi_\star}(s,a) \left(\sqrt{\frac{ [\V_{h} (\hat{V}^{\rf}_{h+1})](s,a) \iota + \sqrt{\frac{H^4 \iota}{N_0(s,a) \vee 1}}}{N_{h,0}(s,a)\vee 1}} + \frac{H \iota}{N_{h,0}(s, a)\vee 1}\right)
\tag{by \Eqref{eq:spvar2popvar}}
\\
\leq &~ c \sum_{h = 1}^{H} \sum_{(s,a) \in \mc{S} \times \mc{A}} d_h^{\pi_\star}(s,a) \left(\sqrt{\frac{ [\V_{h} (\hat{V}^{\rf}_{h+1})](s,a) \iota}{N_{h,0}(s,a)\vee 1}} + \frac{H \iota^{\nicefrac{1}{4}}}{(N_{h,0}(s,a)\vee 1)^{\nicefrac{3}{4}}} + \frac{H \iota}{N_{h,0}(s, a)\vee 1}\right)
\\
\leq &~ c \sum_{h = 1}^{H} \sum_{(s,a) \in \mc{S} \times \mc{A}} d_h^{\pi_\star}(s,a) \left(\sqrt{\frac{ [\V_{h} (\hat{V}^{\rf}_{h+1})](s,a) \iota}{N_{h,0}(s,a)\vee 1}} + \sqrt{\frac{1}{N_{h,0}(s,a)\vee 1}} + \frac{H^2 \iota^{\nicefrac{1}{2}} + H \iota}{N_{h,0}(s,a)\vee 1}\right)
\\
\label{eq:b0terms}
= &~ c \underbrace{\sum_{h = 1}^{H} \sum_{(s,a) \in \mc{S} \times \mc{A}} d_h^{\pi_\star}(s,a) \sqrt{\frac{ [\V_{h} (\hat{V}^{\rf}_{h+1})](s,a) \iota}{N_{h,0}(s,a)\vee 1}}}_{\text{(I.a)}} + c \underbrace{\sum_{h = 1}^{H} \sum_{(s,a) \in \mc{S} \times \mc{A}} d_h^{\pi_\star}(s,a) \sqrt{\frac{1}{N_{h,0}(s,a)\vee 1}}}_{\text{(I.b)}}
\\
&~ + c \underbrace{\sum_{h = 1}^{H} \sum_{(s,a) \in \mc{S} \times \mc{A}} d_h^{\pi_\star}(s,a) \frac{H^2 \iota^{\nicefrac{1}{2}} + H \iota}{N_{h,0}(s,a)\vee 1}}_{\text{(I.c)}}
\end{align}
where the last inequality follows from Cauchy–Schwarz inequality.

We now discuss the three terms in \Eqref{eq:b0terms} separately:
\begin{align}
\text{(I.a)} = &~ \sum_{h = 1}^{H} \sum_{(s,a) \in \mc{S} \times \mc{A}} d_h^{\pi_\star}(s,a) \sqrt{\frac{ [\V_{h} (\hat{V}^{\rf}_{h+1})](s,a) \iota}{N_{h,0}(s,a)\vee 1}}
\\
\leq &~ \sum_{h = 1}^{H} \sum_{(s,a) \in \mc{S} \times \mc{A}} d_h^{\pi_\star}(s,a) \cdot \sqrt{\frac{[\V_{h} (\hat{V}^{\rf}_{h+1})](s,a) \iota^2}{n_0 d_h^\mu(s,a)}}
\tag{by the concentration event $\mc{E}$(iii)}
\\
\leq &~ \sqrt{\frac{C^\star \iota^2}{n_0}} \sum_{h = 1}^{H} \sum_{(s,a) \in \mc{S} \times \mc{A}} \sqrt{d_h^{\pi_\star}(s,a) [\V_{h} (\hat{V}^{\rf}_{h+1})](s,a)}
\\
= &~ \sqrt{\frac{C^\star \iota^2}{n_0}} \sum_{h = 1}^{H} \sum_{(s,a) \in \mc{S} \times \mc{A}} \sqrt{\1\{a = \pi_\star(s)\} d_h^{\pi_\star}(s,a) [\V_{h} (\hat{V}^{\rf}_{h+1})](s,a)}
\\
\leq &~ \sqrt{\frac{C^\star \iota^2}{n_0}}  \sqrt{\sum_{h = 1}^{H} \sum_{(s,a) \in \mc{S} \times \mc{A}} \1\{a = \pi_\star(s)\}} \cdot \sqrt{\sum_{h = 1}^{H} \sum_{(s,a) \in \mc{S} \times \mc{A}} d_h^{\pi_\star}(s,a) [\V_{h} (\hat{V}^{\rf}_{h+1})](s,a)}
\\
\tag{by Cauchy–Schwarz inequality}
\\
\label{eq:b0term1_temp}
\leq &~ \sqrt{\frac{HS C^\star \iota^2}{n_0}}  \sqrt{\sum_{h = 1}^{H} \sum_{(s,a) \in \mc{S} \times \mc{A}} d_h^{\pi_\star}(s,a) [\V_{h} (\hat{V}^{\rf}_{h+1})](s,a)}
\\
\label{eq:b0term1}
\leq &~ \sqrt{\frac{H S C^\star \iota^2}{n_0}} \sqrt{H^2 + c\sqrt{\frac{H^9SC^\star\iota^2}{n_{\rf}}}}
\\
\tag{follows from the total variance lemma (Lemma~\ref{lemma:total-variance})}
\\
\leq &~ \sqrt{\frac{H^3 S C^\star \iota^2}{n_0}} + c \sqrt{\frac{H S C^\star \iota^2}{n_0}} \sqrt[4]{\frac{H^9SC^\star\iota^2}{n_{\rf}}}
\\
\leq &~ \sqrt{\frac{H^3 S C^\star \iota^2}{n_0}} + c \left( \sqrt{\frac{H^3 S C^\star \iota^2}{n_0}} + \frac{H^4 S C^\star \iota^2}{\sqrt{n_0 n_\rf}}\right)
\tag{by $\sqrt{ab}\le (a+b)/2$}
\\
\leq &~ c \left(\sqrt{\frac{H^3 S C^\star \iota^2}{n_0}} + \frac{H^4 S C^\star \iota^2}{\sqrt{n_0 n_\rf}} \right).
\end{align}

Term (I.b) is a smaller-order term compared with (I.a):
\begin{align}
\text{(I.b)} = &~ \sum_{h = 1}^{H} \sum_{(s,a) \in \mc{S} \times \mc{A}} d_h^{\pi_\star}(s,a) \sqrt{\frac{1}{N_{h,0}(s,a)\vee 1}}
\\
\leq &~ \sum_{h = 1}^{H} \sum_{(s,a) \in \mc{S} \times \mc{A}} d_h^{\pi_\star}(s,a) \cdot \sqrt{\frac{\iota}{n_0 d_h^\mu(s,a)}}
\tag{by the concentration event $\mc{E}$(iii)}
\\
\leq &~ \sqrt{\frac{C^\star \iota}{n_0}} \sum_{h = 1}^{H} \sum_{(s,a) \in \mc{S} \times \mc{A}} \sqrt{d_h^{\pi_\star}(s,a)}
\\
= &~ \sqrt{\frac{C^\star \iota}{n_0}} \sum_{h = 1}^{H} \sum_{(s,a) \in \mc{S} \times \mc{A}} \sqrt{\1\{a = \pi_\star(s)\} d_h^{\pi_\star}(s,a)}
\\
\leq &~ \sqrt{\frac{C^\star \iota}{n_0}}  \sqrt{\sum_{h = 1}^{H} \sum_{(s,a) \in \mc{S} \times \mc{A}} \1\{a = \pi_\star(s)\}} \cdot \sqrt{\sum_{h = 1}^{H} \sum_{(s,a) \in \mc{S} \times \mc{A}} d_h^{\pi_\star}(s,a)}
\\
\tag{by Cauchy–Schwarz inequality}
\\
\label{eq:b0term2}
\leq &~ \sqrt{\frac{H^2 C^\star \iota}{n_0}}.
\end{align}

Finally, term (I.c)
\begin{align}
\text{(I.c)} = &~ \sum_{h = 1}^{H} \sum_{(s,a) \in \mc{S} \times \mc{A}} d_h^{\pi_\star}(s,a) \frac{H^2 \iota^{\nicefrac{1}{2}} + H \iota}{N_{h,0}(s,a)\vee 1}
\\
\leq &~ \sum_{h = 1}^{H} \sum_{(s,a) \in \mc{S} \times \mc{A}} d_h^{\pi_\star}(s,a) \cdot \frac{H^2 \iota^{\nicefrac{3}{2}} + H \iota^2}{n_0 d_h^\mu(s,a)}
\tag{by the concentration event $\mc{E}$(iii)}
\\
\label{eq:b0term3}
\leq &~ \frac{H^3 S C^\star \iota^{\nicefrac{3}{2}} + H^2 S C^\star \iota^2}{n_0}.
\end{align}

Substituting \Eqref{eq:b0term1}, \Eqref{eq:b0term2}, and \Eqref{eq:b0term3} into \Eqref{eq:b0terms}, we obtain
\begin{align}
\text{(I)} = &~ \sum_{h = 1}^{H} \sum_{(s,a) \in \mc{S} \times \mc{A}} d_h^{\pi_\star}(s,a) b_{h,0}(s,a)
\\
\label{eq:term1bound}
\leq &~ c \cdot \left( \sqrt{\frac{H^3 S C^\star \iota^2}{n_0}} + \frac{H^4 S C^\star \iota^2}{\sqrt{n_0 n_\rf}} + \frac{H^3 S C^\star \iota^{\nicefrac{3}{2}} + H^2 S C^\star \iota^2}{n_0} \right).
\end{align}

We now study the term (II) of \Eqref{eq:b0b1}. Let $g_{h+1} \coloneqq \hat{V}_{h+1} - \hat{V}^{\rf}_{h+1}$ (which by our data splitting only depends on the datasets $\mc{D}_{\rf},\mc{D}_0$ and is independent of $\mc{D}_{1,h}$). By a similar argument as \Eqref{eq:spvar2popvar}, we have
\begin{align}
 [\hat{\V}_{h,1} g](s,a) - [\V_{h} g](s,a) \leq c \sqrt{\frac{H^4 \iota}{N_1(s,a) \vee 1}}.
\end{align}
for any $(s,a)$. Thus,
\begin{align}
\text{(II)} = &~ \sum_{h = 1}^{H} \sum_{(s,a) \in \mc{S} \times \mc{A}} d_h^{\pi_\star}(s,a) b_{h,1}(s,a)
\\
= &~ c \sum_{h = 1}^{H} \sum_{(s,a) \in \mc{S} \times \mc{A}} d_h^{\pi_\star}(s,a) \left(\sqrt{\frac{ \brac{\hat\V_{h} (\hat{V}_{h+1} - \hat{V}^{\rf}_{h+1})}(s,a) \iota}{N_{h,1}(s,a)\vee 1}} + \frac{H \iota}{N_{h,0}(s, a)\vee 1} \right)
\\
\leq &~ c \sum_{h = 1}^{H} \sum_{(s,a) \in \mc{S} \times \mc{A}} d_h^{\pi_\star}(s,a) \left(\sqrt{\frac{ \brac{\V_{h} (\hat{V}_{h+1} - \hat{V}^{\rf}_{h+1})}(s,a) \iota + \sqrt{\frac{H^4 \iota}{N_1(s,a) \vee 1}}}{N_{h,1}(s,a)\vee 1}} + \frac{H \iota}{N_{h,0}(s, a)\vee 1}\right)
\\
\leq &~ c \sum_{h = 1}^{H} \sum_{(s,a) \in \mc{S} \times \mc{A}} d_h^{\pi_\star}(s,a) \left(\sqrt{\frac{ \brac{\V_{h} (\hat{V}_{h+1} - \hat{V}^{\rf}_{h+1})}(s,a) \iota}{N_{h,1}(s,a)\vee 1}} + \sqrt{\frac{1}{N_{h,1}(s,a)\vee 1}} + \frac{H^2 \iota^{\nicefrac{1}{2}} + H \iota}{N_{h,1}(s,a)\vee 1}\right)\\
\tag{by a similar argument of \Eqref{eq:b0terms}}
\\
\label{eq:b1terms}
= &~ c \underbrace{\sum_{h = 1}^{H} \sum_{(s,a) \in \mc{S} \times \mc{A}} d_h^{\pi_\star}(s,a) \sqrt{\frac{ \brac{\V_{h} (\hat{V}_{h+1} - \hat{V}^{\rf}_{h+1})}(s,a) \iota}{N_{h,1}(s,a)\vee 1}}}_{\text{(II.a)}}
\\
&~ + c \underbrace{\sum_{h = 1}^{H} \sum_{(s,a) \in \mc{S} \times \mc{A}} d_h^{\pi_\star}(s,a) \sqrt{\frac{1}{N_{h,1}(s,a)\vee 1}}}_{\text{(II.b)}} + c \underbrace{\sum_{h = 1}^{H} \sum_{(s,a) \in \mc{S} \times \mc{A}} d_h^{\pi_\star}(s,a) \frac{H^2 \iota^{\nicefrac{1}{2}} + H \iota}{N_{h,1}(s,a)\vee 1}}_{\text{(II.c)}}.
\end{align}

Now, we also bound (II.a), (II.b) and (II.c) separately:
\begin{align}
\text{(II.a)} = &~ \sum_{h = 1}^{H} \sum_{(s,a) \in \mc{S} \times \mc{A}} d_h^{\pi_\star}(s,a) \sqrt{\frac{ \brac{\V_{h} (\hat{V}_{h+1} - \hat{V}^{\rf}_{h+1})}(s,a) \iota}{N_{h,1}(s,a)\vee 1}}
\\
\leq &~ \sum_{h = 1}^{H} \sum_{(s,a) \in \mc{S} \times \mc{A}} d_h^{\pi_\star}(s,a) \cdot \sqrt{\frac{H \brac{\V_{h} (\hat{V}_{h+1} - \hat{V}^{\rf}_{h+1})}(s,a) \iota^2}{n_1 d_h^\mu(s,a)}}
\tag{by the concentration event $\mc{E}$(iv)}
\\
\leq &~ \sqrt{\frac{H C^\star \iota^2}{n_1}} \sum_{h = 1}^{H} \sum_{(s,a) \in \mc{S} \times \mc{A}} \sqrt{d_h^{\pi_\star}(s,a) \brac{\V_{h} (\hat{V}_{h+1} - \hat{V}^{\rf}_{h+1})}(s,a) }
\\
= &~ \sqrt{\frac{H C^\star \iota^2}{n_1}} \sum_{h = 1}^{H} \sum_{s} \sqrt{d_h^{\pi_\star}(s) \brac{\V_{h} (\hat{V}_{h+1} - \hat{V}^{\rf}_{h+1})}(s,\pi_{\star,h}(s))}
\\
\leq &~ \sqrt{\frac{H^2 S C^\star \iota^2}{n_1}} \sum_{h = 1}^{H} \sum_{s} d_h^{\pi_\star}(s) \brac{\V_{h} (\hat{V}_{h+1} - \hat{V}^{\rf}_{h+1})}(s,\pi_{\star,h}(s))
\\
\tag{by Cauchy–Schwarz inequality, e.g., similar to \Eqref{eq:b0term1_temp}}
\\
\leq &~ \sqrt{\frac{H^2 S C^\star \iota^2}{n_1}} \sum_{h = 1}^{H} \sum_{s} d_h^{\pi_\star}(s) \brac{\P_{h} (\hat{V}_{h+1} - \hat{V}^{\rf}_{h+1})^2}(s,\pi_{\star,h}(s))
\\
\leq &~ \sqrt{\frac{H^4 S C^\star \iota^2}{n_1}} \max_{h \in \{1,2,\dotsc,H\}} \sum_{s} d_h^{\pi_\star}(s) \brac{\P_{h} (\hat{V}_{h+1} - \hat{V}^{\rf}_{h+1})^2}(s,\pi_{\star,h}(s))
\\
\leq &~ \sqrt{\frac{H^4 S C^\star \iota^2}{n_1}} \max_{h \in \{1,2,\dotsc,H\}} \sum_{s} d_h^{\pi_\star}(s) \brac{\P_{h} (V^\star_{h + 1} - \hat{V}^{\rf}_{h+1})^2}(s,\pi_{\star,h}(s))
\tag{by $\hat{V}_{h+1} \leq V^\star_{h + 1}$ from Lemma~\ref{lem:algo1monot}}
\\
= &~ \sqrt{\frac{H^4 S C^\star \iota^2}{n_1}} \max_{h \in \{1,2,\dotsc,H\}} \sum_{s'} d_{h + 1}^{\pi_\star}(s') \left(V^\star_{h + 1}(s') - \hat{V}^{\rf}_{h+1}(s')\right)^2
\\
\leq &~ \sqrt{\frac{H^4 S C^\star \iota^2}{n_1}} \max_{h \in \{1,2,\dotsc,H\}} \sum_{s} d_{h}^{\pi_\star}(s) \left(V^\star_{h}(s) - \hat{V}^{\rf}_{h}(s)\right) \cdot H
\\
\label{eq:b1term1}
\leq &~ c \frac{H^{5.5} S C^\star \iota^2}{\sqrt{n_1 n_\rf}},
\end{align}
where the last inequality follows from the guarantee for $\hat{V}^{\rf}$ in Lemma~\ref{lemma:vref-bound}:
\begin{align*}
  \max_{h \in \{1,2,\dotsc,H\}} \sum_{s} d_{h}^{\pi_\star}(s) (V^\star_{h}(s) - \hat{V}^{\rf}_{h}(s)) \leq c \sqrt{\frac{H^5 S C^\star \iota^2}{n_\rf}}.
\end{align*}


By similar arguments as \Eqref{eq:b0term2} and \Eqref{eq:b0term3}, we also have the bounds on (II.b) and (II.c) as follows:
\begin{align}
\text{(II.b)} = &~ \sum_{h = 1}^{H} \sum_{(s,a) \in \mc{S} \times \mc{A}} d_h^{\pi_\star}(s,a) \sqrt{\frac{1}{N_{h,1}(s,a)\vee 1}}
\\
  \leq &~ \sum_{h = 1}^{H} \sum_{(s,a) \in \mc{S} \times \mc{A}} d_h^{\pi_\star}(s,a) \cdot \sqrt{\frac{H \iota}{n_1 d_h^\mu(s,a)}}
         \tag{by the concentration event $\mc{E}$(iv)}
\\
\leq &~ \sqrt{\frac{H C^\star \iota}{n_1}} \sum_{h = 1}^{H} \sum_{(s,a) \in \mc{S} \times \mc{A}} \sqrt{d_h^{\pi_\star}(s,a)}
\\
= &~ \sqrt{\frac{H C^\star \iota}{n_1}} \sum_{h = 1}^{H} \sum_{(s,a) \in \mc{S} \times \mc{A}} \sqrt{\1\{a = \pi_\star(s)\} d_h^{\pi_\star}(s,a)}
\\
\leq &~ \sqrt{\frac{H C^\star \iota}{n_1}}  \sqrt{\sum_{h = 1}^{H} \sum_{(s,a) \in \mc{S} \times \mc{A}} \1\{a = \pi_\star(s)\}} \cdot \sqrt{\sum_{h = 1}^{H} \sum_{(s,a) \in \mc{S} \times \mc{A}} d_h^\star(s,a)}
\\
\tag{by Cauchy–Schwarz inequality}
\\
\label{eq:b1term2}
\leq &~ \sqrt{\frac{H^3 S C^\star \iota}{n_1}}.
\end{align}

\begin{align}
\text{(II.c)} = &~ \sum_{h = 1}^{H} \sum_{(s,a) \in \mc{S} \times \mc{A}} d_h^{\pi_\star}(s,a) \frac{H^2 \iota^{\nicefrac{1}{2}} + H \iota}{N_{h,1}(s,a)\vee 1}
\\
  \leq &~ \sum_{h = 1}^{H} \sum_{(s,a) \in \mc{S} \times \mc{A}} d_h^{\pi_\star}(s,a) \cdot \frac{H^3 \iota^{\nicefrac{3}{2}} + H^2 \iota^2}{n_1 d_h^\mu(s,a)}
         \tag{by the concentration event $\mc{E}$(iv)}
\\
\label{eq:b1term3}
\leq &~ \frac{H^4 S C^\star \iota^{\nicefrac{3}{2}} + H^3 S C^\star \iota^2}{n_1}.
\end{align}

Substituting \Eqref{eq:b1term1}, \Eqref{eq:b1term2}, and \Eqref{eq:b1term3} into \Eqref{eq:b1terms}, we obtain
\begin{align}
\text{(II)} = &~ \sum_{h = 1}^{H} \sum_{(s,a) \in \mc{S} \times \mc{A}} d_h^{\pi_\star}(s,a) b_{h,1}(s,a)
\\
\label{eq:term2bound}
\leq &~ c \cdot \left(\frac{H^{5.5} S C^\star \iota^2}{\sqrt{n_1 n_\rf}} + \sqrt{\frac{H^3 S C^\star \iota}{n_1}} + \frac{H^4 S C^\star \iota^{\nicefrac{3}{2}} + H^3 S C^\star \iota^2}{n_1} \right).
\end{align}

By definition, we know $n_\rf = n_0 = n_1 = n/3$. Therefore, combining \Eqref{eq:term2bound} and \Eqref{eq:term1bound}, we obtain
\begin{align}
V^\star_1(s_1) - \hat V_1(s_1) \leq &~ 2 \cdot \text{(I)} + 2 \cdot \text{(II)}
\\
  \leq &~  c \cdot \left( \sqrt{\frac{H^3 S C^\star \iota^2}{n}} + \frac{H^{5.5} S C^\star \iota^2 }{n} \right).
\end{align}
The right-hand-side is upper bounded by $\eps$ as long as
\begin{align*}
  n \ge \wt{O}\paren{ H^3SC^\star\iota^2/\eps^2 + \frac{H^{5.5}SC^\star\iota^2}{\eps} }
\end{align*}
Finally, by the monotonicity property in Lemma~\ref{lem:algo1monot}, the above also implies the guarantee on $\hat{\pi}$:
\begin{align*}
  V_1^\star(s_1) - V^{\hat{\pi}}_1(s_1) \le c \cdot \left( \sqrt{\frac{H^3 S C^\star \iota^2}{n}} + \frac{H^{5.5} S C^\star \iota^2 }{n} \right) \le \eps.
\end{align*}
This completes the proof.

\section{Proof of Theorem~\ref{theorem:online-lower}}
\label{appendix:proof-online-lower}


To avoid notational clash, in this proof we use upper-case letters $S_h,A_h,R_h$ to denote the actual states and actions seen during the algorithm execution (which are random variables), and reserve the lower-case letters $s_i, s_g, s_b, a$ for indexing the (fixed) states and actions of the MDP.

Define an integer
\begin{align*}
  K \defeq \min\set{\floor{C^\star}, A}.
\end{align*}
By our assumption that $C^\star\ge 2$, we have $2\le K\le A$ and $K\in[2/3, 1]\cdot \min\set{C^\star, A}$. Therefore, it suffices to prove the desired performance lower bound for $n\le c_0 \cdot H^3SK/\eps^2$.


\paragraph{Construction of hard instances}
We now construct a family of MDPs with $S+2$ states, $2H+1$ steps, and $A$ actions for any $S\ge 1$ and $H\ge 1$. (This rescaling only affects $S,H$ by at most a multiplicative constant and thus does not affect our result.)

Each MDP $M_{\ba^\star}$ is indexed by a vector $\ba^\star=(a^\star_{h,i})\in[A]^{HS}$ and is specified as follows:
\begin{itemize}[wide]
\item State space: There are $S$ ``bandit states'' $\set{s_i}_{i\in[S]}$, one ``good state'' $s_g$, and one ``bad state'' $s_b$.
\item The action space is $\mc{A}\defeq [A]$.
\item Transitions:
  \begin{itemize}
  \item At each $h\in\set{1,\dots,H}$, the bandit state $s_i$ can only transition to $s_i$ itself, $s_g$, or $s_b$. The transition probabilities are
    \begin{equation*}
      \left\{
        \begin{aligned}
          & \P_h(s_i | s_i, a) = 1 - \frac{1}{H}~~~\textrm{for all}~a\in[A], \\
          & \P_h(s_g | s_i, a) = \P_h(s_b | s_i, a) = \frac{1}{2H}~~~\textrm{for all}~~a\neq a^\star_{h,i}, \\
          & \P_h(s_g | s_i, a^\star_{h,i}) = \frac{1}{H}\paren{\frac{1}{2} + \tau},~~~\P_h(s_b | s_i, a^\star_{h,i}) = \frac{1}{H}\paren{\frac{1}{2} - \tau},
        \end{aligned}
      \right.
    \end{equation*}
    where $\tau\le 1/3$ is a parameter to be determined.
  \item At $h\ge H+1$, all bandit states transit to one of $s_g$ and $s_b$ with probability $1/2$ each.
  \item $s_g$ and $s_b$ are absorbing states: $\P_h(s_g | s_g, a)=\P_h(s_b | s_b, a)$ for all $h\in[2H+1]$ and all $a\in[A]$.
  \end{itemize}
\item Initial state distribution is uniform on all bandit states: $S_1\sim\Unif\set{s_i}_{i\in[S]}$.
\item Reward: The bandit states do not receive any reward. The good state and bad state also do not receive any reward at $h\le H+1$. Finally, for $h\ge H+2$, the good state receives reward 1 and the bad state receives reward 0 regardless of the action taken:
  \begin{align*}
    r_h(s_g, a) = 1~~~\textrm{and}~~~r_h(s_b, a) = 0~~~\textrm{for}~H+2\le h\le 2H+1.
  \end{align*}
\end{itemize}

We also let $M_{\bzero}$ denote the ``null'' MDP which has the same construction as the above except that there is no ``special'' actions $a^\star_{h,i}$, that is,
\begin{align*}
  \P_h(s_g | s_i, a) = \P_h(s_b | s_i, a) = \frac{1}{2H}~~~\textrm{for all}~a\in[A].
\end{align*}

\paragraph{Policies $\pi_\star$ and $\mu$}
In MDP $M_{\ba^\star}$, at any bandit state $s_i$ and time step $h\le H$, the optimal action to take is $a^\star_{h,i}$ since it induces a slightly higher probability of transiting to the ``good state'' $s_g$ than all other actions. At all other states or time steps, the action does not affect anything (the transition and reward do not depend on the action), so for example we could take $a=1$. To summarize, the following deterministic policy $\pi_\star$ is an optimal policy for $M_{\ba^\star}$:
\begin{itemize}
\item $\pi_{\star,h}(s_i) = a^\star_{h,i}$ for all $i\in[S]$ and $h\in[H]$.
\item $\pi_{\star,h}(s_i) = 1$ for all $h\ge H+1$;
\item $\pi_{\star, h}(s_g)=\pi_{\star, h}(s_b)=1$ for all $h\in[2H+1]$.
\end{itemize}
We define our reference policy $\mu$ as follows
\begin{itemize}
\item $\mu_h(a|s_i) = \frac{1}{K}\indic{1\le a\le K}$ for all $i\in[S]$, and $h\in[H]$.
\item $\mu_h(1|s)=1$ whenever $h\ge H+1$ or $s\in\set{s_g, s_b}$.
\end{itemize}

The following lemma shows that $\mu$ satisfies $C^\star$-concentrability with respect to $\pi_\star$ as long as all optimal actions $a^\star_{h,i}\in\set{1,\dots,K}$. The proof of this lemma is deferred to Section~\ref{appendix:proof-ref-concentrability-online}.
\begin{lemma}[$\mu$ satisfies single-policy concentrability]
  \label{lemma:ref-concentrability-online}
  For any $\ba^\star\in\set{1,\dots,K}^{HS}$, in the MDP $M_{\ba^\star}$, we have
  \begin{align*}
    \sup_{h,s,a} \frac{d^{\pi_\star}_h(s,a)}{d^\mu_h(s,a)} \le K \le C^\star,
  \end{align*}
  where $\pi_\star$ (the optimal policy for $M_{\ba^\star}$) and $\mu$ are defined as above.
\end{lemma}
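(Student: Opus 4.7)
The plan is to bound the state-action ratio $d^{\pi_\star}_h(s,a)/d^\mu_h(s,a)$ by splitting on the state type (bandit state $s_i$, good state $s_g$, bad state $s_b$) and on whether $h\in[H]$, $h=H+1$, or $h\ge H+2$. The key structural observation that drives everything is that at every bandit state $s_i$ and every step $h\in[H]$, the self-loop probability $\P_h(s_i\mid s_i,a)=1-1/H$ is independent of $a$. Consequently the marginal state visitation at bandit states is \emph{policy-independent}: for every policy $\pi$ one has
\begin{align*}
d^\pi_h(s_i) \;=\; \frac{1}{S}\paren{1-\frac{1}{H}}^{h-1} \qquad (h\in[H+1]),
\end{align*}
and $d^\pi_h(s_i)=0$ for $h\ge H+2$. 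In particular $d^{\pi_\star}_h(s_i)=d^\mu_h(s_i)$.

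With this in hand, I would dispatch the cases. For $(s_i,a)$ at $h\in[H]$: since $\pi_\star$ is deterministic with $\pi_{\star,h}(s_i)=a^\star_{h,i}\in[K]$, the ratio vanishes unless $a=a^\star_{h,i}$, in which case it equals $d^{\pi_\star}_h(s_i)/(d^\mu_h(s_i)\cdot\tfrac{1}{K})=K$ by the observation above and the definition $\mu_h(a\mid s_i)=\tfrac{1}{K}\indic{a\in[K]}$. For $(s_i,a)$ at $h=H+1$: both policies put all mass on $a=1$, so the ratio equals the state ratio $1$.

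For the absorbing states $s_g,s_b$, both $\pi_\star$ and $\mu$ deterministically play action $1$, so the state-action ratio equals the state ratio. Because these states are absorbing, I decompose $d^\pi_h(s_g)=\sum_{h'<h}\sum_i d^\pi_{h'}(s_i)\,\P_{h'}(s_g\mid s_i,\pi_{h'}(s_i))$ (with the appropriate averaging for the stochastic policy $\mu$), and similarly for $s_b$. A direct computation gives the per-step arrival probabilities from $s_i$:
\begin{align*}
\pi_\star:~ \tfrac{1}{2H}+\tfrac{\tau}{H}, \qquad \mu:~ \tfrac{1}{2H}+\tfrac{\tau}{KH} \qquad \text{(for $s_g$, $h'\in[H]$)},
\end{align*}
and the analogous pair $\tfrac{1}{2H}-\tfrac{\tau}{H}$ vs.\ $\tfrac{1}{2H}-\tfrac{\tau}{KH}$ for $s_b$; for $h'=H+1$ the per-step arrival is $\tfrac{1}{2}$ under both policies. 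Since $d^{\pi_\star}_{h'}(s_i)=d^\mu_{h'}(s_i)$, the ratio of summands is $\frac{1+2\tau}{1+2\tau/K}\le 1+2\tau\le 5/3<2\le K$ for the $s_g$ case, and is $\le 1$ for the $s_b$ case. Summing $h'$-wise preserves this bound, giving $d^{\pi_\star}_h(s_g)/d^\mu_h(s_g)\le K$ and $d^{\pi_\star}_h(s_b)/d^\mu_h(s_b)\le 1\le K$.

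Putting the cases together yields $\sup_{h,s,a} d^{\pi_\star}_h(s,a)/d^\mu_h(s,a)\le K\le C^\star$, as desired. I do not foresee any serious obstacle: the only mildly delicate step is the arrival-time decomposition for the absorbing states, and even there the boundedness $\tau\le 1/3$ makes the ratio of per-step arrivals an absolute constant, safely below $K\ge 2$. The policy-independence of bandit-state visitation is what makes all three cases collapse to either a state ratio of $1$ or an action-concentration factor of exactly $K$.
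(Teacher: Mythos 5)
Your proposal is correct and follows essentially the same route as the paper's proof: the policy-independence of the bandit-state marginals, the exact factor $K$ from $\mu_h(a^\star_{h,i}\mid s_i)=1/K$, and the arrival-time decomposition for the absorbing states with the ratio bounds $\frac{1/2+\tau}{1/2+\tau/K}\le 1+2\tau\le 2\le K$ and $\frac{1/2-\tau}{1/2-\tau/K}\le 1$. The only cosmetic difference is that you bound the ratio summand-wise (a mediant-inequality argument) where the paper factors the sums explicitly; the two are equivalent here.
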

Lemma~\ref{lemma:ref-concentrability-online} shows that the following family of problems is indeed a subset of the class~\eqref{equation:online-c-class}:
\begin{align*}
\set{ (M_{\ba^\star}, \mu): \ba^\star \in [K]^{HS} } \subset \mc{M}_{C^\star}.
\end{align*}
We further let $\nu$ denote the uniform (prior) distribution on $[K]^{HS}$, that is, $\nu(\ba^\star=\ba_0)=1/K^{HS}$ for all $\ba_0\in[K]^{HS}$.


\paragraph{$\mu$ is uninformative}
Note that in the above family of problems, the reference policy $\mu$ is the same for all MDPs. Therefore $\mu$ is \emph{uninformative} in the sense that the set of all online finetuning algorithms that utilize $\mu$ in its execution is equivalent to the set of all usual online RL algorithms for this particular class of MDPs (which by definition may utilize the additional knowledge that this class of MDPs has a policy $\mu$ with good concentrability). In the following, we assume $\what{\pi}$ is any online RL algorithm for this class of MDPs.


Without loss of generality, we further restrict attention to algorithms that output a deterministic policy (i.e. $\what{\pi}_h(s)\in[A]$)\footnote{This is because we can replace $\P_{\ba_\star}(\what{\pi}_h(s)\neq a^\star_{h,i})$ in our proof for deterministic policies with $\E_{\ba_\star}\brac{ 1 - \what{\pi}_h(a^\star_{h,i}|s) }$ for stochastic policies, and the proof will follow analogously.}. For any deterministic policy $\what{\pi}$ and any index $\ba_\star$, define the \emph{bandit best-arm identification} loss for any $(h,i)\in[H]\times[S]$ as
\begin{align*}
  \ell_{h,i}(\what{\pi}, \ba_\star) \defeq \P_{\ba_\star}\paren{ \what{\pi}_h(s_i) \neq a^\star_{h,i} },
\end{align*}
and the \emph{total bandit loss} as
\begin{align*}
  L(\what{\pi}, \ba_\star) \defeq \sum_{(h,i)\in[H]\times [S]} \ell_{h,i}(\what{\pi}, \ba_\star) = \E_{\ba_\star}\brac{ \# \set{(h,i): \what{\pi}_h(s_i)\neq a^\star_{h,i}} }.
\end{align*}
The loss $L$ measures the expected number of $(h,i)$ pairs on which the algorithm failed to identify the best arm $a^\star_{h,i}$. A large loss will translate to a high suboptimality bound, as we make precise in the following lemma. The proof can be found in Section~\ref{appendix:proof-bandit-loss-to-suboptimality}.
\begin{lemma}
  \label{lemma:bandit-loss-to-suboptimality}
  For any $\ba^\star\in[A]^{HS}$ and any algorithm outputing a deterministic policy $\what{\pi}$, we have
  \begin{align*}
    \E_{\ba^\star}\brac{V_{1,M_{\ba^\star}}^\star - V_{1, M_{\ba^\star}}^{\what{\pi}}} = \sum_{h=1}^H \sum_{i=1}^S \frac{1}{S}\paren{1 - \frac{1}{H}}^{h-1} \tau \cdot \ell_{h,i}(\what{\pi}, \ba^\star) \ge \frac{\tau }{3S} L(\what{\pi}, \ba^\star).
  \end{align*}
\end{lemma}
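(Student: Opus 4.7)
The plan is to exploit the fact that in $M_{\ba^\star}$, both $s_g$ and $s_b$ are absorbing and reward is received only at $s_g$ during the last $H$ steps. This means that for any policy $\pi$ and any bandit start state $s_i$, the value equals $H$ times the probability of ever reaching $s_g$. Since the only non-trivial transitions out of $s_i$ happen at steps $h\le H$ via a geometric ``survival'' pattern with parameter $1-1/H$, the probability of first exit to $s_g$ at step $h$ has a clean closed form, and so will the value difference $V^\star - V^\pi$.

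The calculation I would carry out is: first, observe $\P_\pi(S_h=s_i\mid S_1=s_i)=(1-1/H)^{h-1}$ for $h\le H+1$, and condition on the step of first exit. The exit-to-$s_g$ transition probability at step $h\le H$ is $1/(2H)+(\tau/H)\indic{\pi_h(s_i)=a^\star_{h,i}}$, while at step $H+1$ it is the forced $1/2$. Multiplying by the absorption reward $H$ (one per step for $h=H+2,\ldots,2H+1$) yields
\begin{align*}
V_{1,M_{\ba^\star}}^\pi(s_i) = \sum_{h=1}^H (1-1/H)^{h-1}\paren{\tfrac{1}{2} + \tau\indic{\pi_h(s_i)=a^\star_{h,i}}} + (1-1/H)^H\cdot \tfrac{H}{2}.
\end{align*}
Since $\pi_\star$ always chooses $a^\star_{h,i}$ at bandit states, subtracting $\pi=\pi_\star$ cancels everything except the $\tau$-term, giving the pointwise identity $V_{1,M_{\ba^\star}}^\star(s_i)-V_{1,M_{\ba^\star}}^{\pi}(s_i) = \sum_{h=1}^H (1-1/H)^{h-1}\tau\indic{\pi_h(s_i)\neq a^\star_{h,i}}$.

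Averaging over the uniform initial state $S_1\sim\Unif\{s_i\}$ introduces the $1/S$ factor, and then taking expectation over the algorithm's randomness converts $\indic{\what{\pi}_h(s_i)\neq a^\star_{h,i}}$ into $\ell_{h,i}(\what{\pi},\ba^\star)$, yielding the claimed equality. For the inequality, I would argue $(1-1/H)^{h-1}\ge (1-1/H)^{H-1}\ge 1/e>1/3$ for all $h\le H$: the second bound holds since $(1-1/H)^{H-1}$ is monotonically decreasing in $H$ with limit $1/e$ (and equals $1$ at $H=1$), and $e<3$. Summing this lower bound over $(h,i)$ produces $\tfrac{\tau}{3S}L(\what{\pi},\ba^\star)$.

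I do not expect any serious obstacle; the exercise is essentially bookkeeping. The one spot that requires care is tracking the step $H+1$ boundary (where the transition probability is forced to $1/2$ regardless of action), and making sure this term cancels cleanly when we compute $V^\star-V^\pi$, which it does because it does not depend on $\ba^\star$. A second minor point is justifying $(1-1/H)^{H-1}\ge 1/3$ for the smallest permissible $H$ (the statement assumes $H\ge 3$, for which this is immediate since $(2/3)^2=4/9>1/3$).
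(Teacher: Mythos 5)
Your proposal is correct and follows essentially the same route as the paper's proof: both compute the value as $H$ times the probability of first exit to $s_g$, use that the bandit-state occupancy $\frac{1}{S}(1-\frac1H)^{h-1}$ is policy-independent so that the policy-independent terms (including the forced step-$(H+1)$ exit) cancel in $V^\star - V^{\hat\pi}$, and then lower-bound $(1-\frac1H)^{h-1}\ge(1-\frac1H)^{H-1}\ge e^{-1}\ge\frac13$. The only cosmetic difference is that you write out the per-state value in closed form before subtracting, whereas the paper differences the exit probabilities directly.
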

With Lemma~\ref{lemma:bandit-loss-to-suboptimality} at hand, establishing lower bound on the expected suboptimality reduces to establishing a lower bound on the total bandit loss $L(\what{\pi}, \ba^\star)$.

\paragraph{Lower bounding the total bandit loss}
Fix any $(h,i)$, and consider the individual bandit loss $\E_{\ba^\star\sim \nu}\brac{\ell_{h,i}(\what{\pi}, \ba^\star)}$ averaged over the prior $\ba^\star\sim \nu$. For any algorithm $\what{\pi}$, we decompose the algorithm into (1) the data collection algorithm $\mc{A}$ from which we collect the observed data $X\sim (\P_{\ba^\star}, \mc{A})$, and (2) the estimator $\what{\pi}_h(s_i)=f(X)$ for some measurable function $f:\mc{X}\to[A]$. We have
\begin{align*}
  \E_{\ba^\star\sim \nu}\brac{\ell_{h,i}(\what{\pi}, \ba^\star)} \ge \inf_{f} \E_{\ba^\star\sim \nu}\E_{X\sim (\P_{\ba^\star}, \mc{A})}\brac{ a^\star_{h,i} \neq f(X) }.
\end{align*}
Fixing $\mc{A}$, the $\inf_f$ is taken at the Bayes estimator of $a^\star_{h,i}$, which is a function of the posterior $a^\star_{h,i}|X$ (see, e.g.~\citep[Theorem 1.1 of Section 4]{lehmann2006theory}).

By construction of our MDP $M_{\ba^\star}$, the prior distribution of $a^\star_{h,i}$ is uniform in $[K]$, and the only data that reveal information about $a^\star_{h,i}$ (i.e. data whose likelihood is affected by $a^\star_{h,i}$) is the observed transitions from state $s_i$ at step $h$ (i.e. $\set{(A_h, S'_{h+1}): S_h = s_i}$). Therefore, the posterior distribution $a^\star_{h,i}|X$ depends only on $\set{(A_h, S'_{h+1}): S_h = s_i}$ as well. Further, a set of sufficient statistics for this posterior is the visitation counts $\set{N _h(s_i, a, s'):a\in[A],s'\in\mc{S}}$, where $N_h(s_i, a, s')$ denote the visitation count of $(s_i, a, s')$ at step $h$ within the data $X$.
Therefore, we can restrict the $\inf_f$ to the inf over functions of the vistation counts only (denoted as $g$), and obtain that
\begin{align*}
  & \quad \E_{\ba^\star\sim\nu}\brac{\ell_{h,i}(\what{\pi}, \ba^\star)} \ge \inf_{g} \E_{\ba^\star\sim \nu} \P_{X\sim (\P_{\ba^\star}, \mc{A})}\brac{a^{\star}_{h,i} \neq g \paren{ \set{N_h(s_i, a, s'):a\in[A], s'\in\mc{S}} } } \\
  & \ge \inf_{g} \E_{\ba^\star\sim \nu} \P_{X\sim (\P_{\bzero}, \mc{A})} \brac{a^{\star}_{h,i} \neq g\paren{ \set{N_h(s_i, a, s'):a\in[A], s'\in\mc{S}} } } \\
  & \qquad - \E_{\ba^\star\sim\nu} \TV\paren{ \set{N_h(s_i, a, s')}|_{\P_{\bzero}, \mc{A}},  \set{N_h(s_i, a, s')}|_{\P_{\ba^\star}, \mc{A}} } \\
  & \stackrel{(i)}{\ge} \inf_{g} \E_{\ba^\star_{-(h,i)}\sim \nu_{-(h,i)}} \frac{1}{K}\sum_{a^\star_{h,i}=1}^K \P_{X\sim (\P_{\bzero}, \mc{A})} \brac{a^{\star}_{h,i} \neq g\paren{ \set{N_h(s_i, a, s'):a\in[A], s'\in\mc{S}} } } \\
  & \qquad - \E_{\ba^\star\sim\nu} \sqrt{\frac{1}{2}\KL\paren{ \set{N_h(s_i, a, s')}|_{\P_{\bzero}, \mc{A}} \| \set{N_h(s_i, a, s')}|_{\P_{\ba^\star}, \mc{A}}} } \\
  & \stackrel{(ii)}{\ge} \frac{K-1}{K} - \E_{\ba^\star\sim\nu} \sqrt{\frac{1}{2}\sum_{a=1}^A \E_{\P_{\bzero},\mc{A}}\brac{N_h(s_i, a)} \cdot \KL\paren{ \P_{0,h}(\cdot|s_i, a) \| \P_{\ba^\star,h}(\cdot|s_i, a)}} \\
  & \stackrel{(iii)}{\ge} \frac{1}{2} - \E_{\ba^\star\sim\nu} \sqrt{\frac{1}{2} \E_{\P_{\bzero}, \mc{A}} \brac{N_h(s_i, a^\star_{h,i})} \cdot \KL\paren{  \P_{0,h}(\cdot|s_i, a^\star_{h,i})  \|  \P_{\ba^\star,h}(\cdot|s_i, a^\star_{h,i})  }} \\
  & \stackrel{(iv)}{\ge} \frac{1}{2}  -\E_{\ba^\star\sim\nu}  \sqrt{\E_{\P_{\bzero}, \mc{A}} \brac{N_h(s_i, a^\star_{h,i})} \cdot 2\tau^2/H } \\
  & \stackrel{(v)}{\ge} \frac{1}{2}  -\sqrt{\frac{1}{K} \sum_{a=1}^K \E_{\P_{\bzero}, \mc{A}} \brac{N_h(s_i, a)} \cdot 2\tau^2/H } \\
  & \ge \frac{1}{2} - \sqrt{\E_{\P_{\bzero}, \mc{A}} \brac{N_h(s_i)} \cdot 2\tau^2/HK}.
\end{align*}
Above, (i) used Pinsker's inequality; (ii) used the KL divergence decomposition ~\citep[Lemma 15.1]{lattimore2020bandit}, and the fact that $\frac{1}{K}\sum_{a^\star_{h,i}=1}^K \P_{X\sim (\P_{\bzero}, \mc{A})} \brac{a^{\star}_{h,i} \neq g\paren{ \set{N_h(s_i, a, s'):a\in[A], s'\in\mc{S}} } }$ is at least $(K-1)/K$ for any $g$ (it equals either $(K-1)/K$ or $1$ depending on whether $g\in[K]$); (iii) used our construction that $\P_{\bzero}$ and $\P_{\ba^\star}$ at step $h$ and state $s_i$ only differ in the arm $a^\star_{h,i}$; (iv) used the bound that
\begin{align*}
  \KL\paren{ \paren{1-\frac{1}{H}, \frac{1}{2H}, \frac{1}{2H}} \bigg\| \paren{1-\frac{1}{H}, \frac{1}{H}\paren{\frac{1}{2} + \tau}, \frac{1}{H}\paren{\frac{1}{2} - \tau}}} = \frac{1}{2H}\log\frac{1}{1-4\tau^2} \le 4\tau^2/H
\end{align*}
for $\tau\in[0, 0.4]$; and finally (v) used Jensen's inequality and the fact that $a^\star_{h,i}\sim\Unif([K])$ under $\nu$.

Summing the preceding bound over all $(h,i)$, we get that for any algorithm $\what{\pi}$,
\begin{align*}
  & \quad \E_{\ba^\star\sim\nu}\brac{ L(\what{\pi}, \ba^\star) } = \sum_{h=1}^H \sum_{i=1}^S \E_{\ba^\star\sim\nu}\brac{\ell_{h,i}(\what{\pi}, \ba^\star)} \\
  & \ge \frac{HS}{2} - \sum_{h=1}^H\sum_{i=1}^S \sqrt{ \E_{\P_{\bzero}, \mc{A}} \brac{N_h(s_i)} \cdot 2\tau^2/HK  } \\
  & \ge HS\Bigg\{ \frac{1}{2} - \sqrt{\frac{1}{HS}\underbrace{\sum_{h=1}^H\sum_{i=1}^S \E_{\P_{\bzero}, \mc{A}} \brac{N_h(s_i)}}_{nH} \cdot 2\tau^2/HK} \Bigg\} = HS\paren{\frac{1}{2} - \sqrt{ 2\tau^2n/HSK }}.
\end{align*}
Therefore, as long as $\sqrt{2\tau^2n/HSK}\le 1/4$, i.e. $n\le HSK/(32\tau^2)$, we have
\begin{align*}
  \E_{\ba^\star\sim\nu}\brac{L(\what{\pi}, \ba^\star)} \ge HS/4.
\end{align*}

\paragraph{Bandit loss to MDP suboptimality loss}
By the above lower bound on the average risk, for any algorithm $\what{\pi}$, there must exist some instance $\ba^\star\in[K]^{HS}$ for which $L(\what{\pi}, \ba^\star)\ge HS/4$. On this $\ba^\star$, by Lemma~\ref{lemma:bandit-loss-to-suboptimality}, we get
\begin{align*}
  \E_{\ba^\star}\brac{V_{1,M_{\ba^\star}}^\star - V_{1,M_{\ba^\star}}^{\what{\pi}}} \ge \frac{\tau}{3S} \cdot L(\what{\pi}, \ba^\star) \ge \tau H/12.
\end{align*}
Finally, for any $\eps\le 1/12$, take $\tau = 12\eps/H\le 1/3$, we have the following: as long as
\begin{align*}
  n \le \frac{HSK}{32\tau^2} = \frac{H^3SK}{32\cdot 12^2\eps^2},
\end{align*}
(which is satisfied if $n\le c_0\cdot H^3S\min\set{C^\star, A}/\eps^2$ for some absolute $c_0>0$), we have $\E_{\ba^\star}\brac{V_{1,M_{\ba^\star}}^\star - V_{1,M_{\ba^\star}}^{\what{\pi}}} \ge \eps$. This is the desired result.
\qed

\subsection{Proof of Lemma~\ref{lemma:ref-concentrability-online}}
\label{appendix:proof-ref-concentrability-online}
Fix any $\ba^\star$, we show that $d^{\pi_\star}_h(s,a)/d^\mu_h(s,a)\le K$ for all $(s,a)$. Note that it suffices to consider actions taken by $\pi_\star$ only.

\paragraph{Bandit states}
For any bandit state $s\in\set{s_i}_{i\in[S]}$ and any $h\le H$, by construction of our MDP, we have $d^{\pi_\star}_h(s_i) = d^\mu_h(s_i) = \paren{1 - \frac{1}{H}}^{h-1}$, $\pi_{\star, h}(a^\star_{h,i} | s_i)=1$, and $\mu_h(a^\star_{h,i} | s_i) = 1/K$, therefore
\begin{align*}
  \frac{d^{\pi_\star}_h(s_i, a^\star_{h,i})}{d^\mu_h(s_i, a^\star_{h,i})} = \frac{d^{\pi_\star}_h(s_i) \cdot \pi_{\star, h}(a^\star_{h,i}|s_i)}{d^{\mu}_h(s_i) \cdot \mu_{h}(a^\star_{h,i}|s_i)} = K.
\end{align*}
At $h= H+1$, we have $d^{\pi_\star}_h(s_i)=d^\mu_h(s_i)=(1-1/H)^H$, and both $\pi_\star$ and $\mu$ takes action $1$ deterministically, and thus $d^{\pi_\star}_h(s_i, 1)/d^\mu_h(s_i, 1)=1$. At $h\ge H+2$, we have $d^{\pi_\star}_h(s_i)=d^\mu_h(s_i)=0$. This verifies the $K$-concentrability for all $s_i$.

\paragraph{Good state and bad state}
For the good state $s_g$ and bad state $s_b$, since both $\pi_\star$ and $\mu$ takes deterministic action $1$ at all $h$, it suffices to bound distribution ratio $d^{\pi_\star}_h(s_{\{g, b\}})/d^\mu_h(s_{\{g, b\}})$ over the states only (instead of joint state-actions).

Recall that $\pi_\star$ always takes the optimal action $a^\star_{h,i}$ which leads to $1/H \cdot (1/2+\tau)$ transition probability to the ``good state'' $s_g$. Thus at any $h\le H+1$, we have
\begin{align*}
  d^{\pi_\star}_h(s_g) = \sum_{h'=1}^{h-1}\sum_{i=1}^S \frac{1}{S}\paren{1 - \frac{1}{H}}^{h'-1} \cdot \frac{1}{H}\paren{\frac{1}{2} + \tau} = \paren{\sum_{h'=1}^{h-1} \paren{1 - \frac{1}{H}}^{h'-1} \cdot \frac{1}{H}} \cdot \paren{\frac{1}{2} + \tau}.
\end{align*}
In contrast, $\mu$ only takes the optimal action $a^\star_{h,i}$ with probability $1/K$, and thus we have
\begin{align*}
  d^\mu_h(s_g) = \paren{\sum_{h'=1}^{h-1} \paren{1 - \frac{1}{H}}^{h'-1} \cdot \frac{1}{H}} \cdot \paren{\frac{1}{2} + \frac{\tau}{K}}.
\end{align*}
Therefore
\begin{align*}
  \frac{d^{\pi_\star}_h(s_g) }{d^{\mu}_h(s_g) } = \frac{1/2+\tau}{1/2+\tau/K} \le \frac{1/2+\tau}{1/2} \le 2\le K.
\end{align*}
Similarly, for the ``bad state'' $s_b$ we have
\begin{align*}
  \frac{d^{\pi_\star}_h(s_b) }{d^{\mu}_h(s_b) } = \frac{1/2-\tau}{1/2-\tau/K} \le 1\le K.
\end{align*}
For $h\ge H+2$, we have
\begin{align*}
  \frac{d^{\pi_\star}_h(s_g)}{d^\mu_h(s_g)} = \frac{d^{\pi_\star}_{H+1}(s_g) + \frac{1}{2} \cdot (1-1/H)^H}{d^\mu_{H+1}(s_g) + \frac{1}{2}\cdot (1-1/H)^H} \le 2 \le K,
\end{align*}
and similarly
\begin{align*}
  \frac{d^{\pi_\star}_h(s_b)}{d^\mu_h(s_b)} = \frac{d^{\pi_\star}_{H+1}(s_b) + \frac{1}{2} \cdot (1-1/H)^H}{d^\mu_{H+1}(s_b) + \frac{1}{2}\cdot (1-1/H)^H} \le 1 \le K,
\end{align*}
This verifies the $K$-concentrability for $(s_g, s_b)$ as well.
\qed

\subsection{Proof of Lemma~\ref{lemma:bandit-loss-to-suboptimality}}
\label{appendix:proof-bandit-loss-to-suboptimality}
Fix any $\ba^\star\in[A]^{HS}$, by construction of our MDP $M_{\ba^\star}$, only the good state $s_g$ receives a $+1$ reward starting at $h\in\set{H+2,\dots,2H+1}$. Along each trajectory, there will be exactly one transition from the bandit states $\set{s_i}$ to one of $(s_g, s_b)$. This transition can happen at step $h\le H$ and state $s_i$ with probability $1/H\cdot (1/2+\tau)$ if the optimal action $a^\star_{h,i}$ is taken, or probability $1/(2H)$ if any other action is taken. The transition can also happen at step $h=H+1$ but with the same transition probability regardless of the policy. Further, note that the state distribution $d^\pi_h(s_i)=1/S\cdot (1-1/H)^{h-1}\eqdef d_h(s_i)$ (for $h\le H$) does not depend on the policy $\pi$. Therefore, we have
\begin{align*}
  & \quad V_{1, M_{\ba^\star}}^\star - V_{1, M_{\ba^\star}}^{\what{\pi}} \\
  & = \sum_{h=1}^H \sum_{i=1}^S d_h(s_i) \cdot \brac{ \frac{1}{H}\paren{\frac{1}{2}+\tau} - \frac{1}{2H}} \cdot \indic{\what{\pi}_h(s_i)\neq a^\star_{h,i}} \cdot H \\
  & = \sum_{h=1}^H \sum_{i=1}^S \frac{1}{S}\paren{1 - \frac{1}{H}}^{h-1} \tau \cdot \indic{\what{\pi}_h(s_i)\neq a^\star_{h,i}}.
\end{align*}
Taking expectation with respect to the algorithm execution within the MDP $M_{\ba^\star}$, we get
\begin{align*}
  & \quad \E_{\ba^\star}\brac{  V_{1, M_{\ba^\star}}^\star - V_{1, M_{\ba^\star}}^{\what{\pi}} } = \sum_{h=1}^H \sum_{i=1}^S \frac{1}{S}\paren{1 - \frac{1}{H}}^{h-1}\tau \cdot \underbrace{\P_{\ba^\star}\paren{\what{\pi}_h(s_i)\neq a^\star_{h,i}}}_{\ell_{h,i}(\what{\pi}, \ba^\star)} \\
  & = \sum_{h=1}^H \sum_{i=1}^S \frac{1}{S}\underbrace{\paren{1 - \frac{1}{H}}^{h-1}}_{\ge (1-1/H)^{H-1}\ge e^{-1}\ge 1/3}\tau \cdot \ell_{h,i}(\what{\pi}, \ba^\star) \\
  & \ge \frac{\tau}{3S}\cdot \sum_{h=1}^H \sum_{i=1}^S \ell_{h,i}(\what{\pi}, \ba^\star) = \frac{\tau}{3S} \cdot L(\what{\pi}, \ba^\star).
\end{align*}
This proves the lemma.
\qed
\section{Proofs for Section~\ref{section:online-upper}}
\label{appendix:proof-online-upper}

\subsection{Algorithm \ucbviuplow}

\begin{algorithm}[t]
  \setstretch{1.1}
  \caption{UCB-VI with Upper and Lower Confidence Bounds (\ucbviuplow)}
  \label{algorithm:ucbviuplow}
  \begin{algorithmic}[1]
    \REQUIRE Starting time step $h_\star+1$, end time step $H$, number of episodes $n_{\rm UCB}$.
    \STATE {\bf Initialize:} For any $(s, a, h,s')$: $\up{Q}_h(s, a) \setto H-h_\star$, $\low{Q}_h(s, a)\setto 0$, $N_h(s)=N_h(s, a)=N_h(s,a,s')\setto 0$.
    \FOR{Episode $k=1,\dots,n_{\rm UCB}$}
    \FOR{step $h=H,\dots,h_\star+1$}
    \FOR{$(s,a)\in \mc{S}\times\mc{A}$}
    \STATE $t\setto N_h(s, a)$.
    \IF{$t>0$}
    \STATE $\beta\setto \Bonus(t, \hat{\V}_h[(\up{V}_{h+1} + \low{V}_{h+1})/2](s, a))$~\label{line:ucbviuplow-bonus}
    \STATE $\gamma\setto c/(H-h_\star) \cdot \hat{\P}_h(\up{V}_{h+1} - \low{V}_{h+1})(s, a)$.
    \STATE $\up{Q}_h(s,a)\setto \min\set{(r_h+\hat{\P}_h\up{V}_{h+1})(s, a) + \gamma + \beta, H-h_\star}$.
    \STATE $\low{Q}_h(s,a)\setto \max\set{(r_h+\hat{\P}_h\low{V}_{h+1})(s, a) - \gamma - \beta, 0}$    
    \ENDIF
    \ENDFOR
    \FOR{$s\in\mc{S}$}
    \STATE $\pi_h(s)\setto \argmax_{a\in\mc{A}} \up{Q}_h(s, a)$.
    \STATE $\up{V}_h(s)\setto \up{Q}_h(s, \pi_h(s))$; $\low{V}_h(s)\setto \low{Q}_h(s, \pi_h(s))$.
    \ENDFOR
    \ENDFOR
    \STATE Receive an initial state $s_{h_\star+1}$ from the MDP.
    \FOR{step $h=h_\star+1,\dots,H$}
    \STATE Take action $a_h= \pi_h(s_h)$, observe reward $r_h$ and next state $s_{h+1}$.
    \STATE Add 1 to $N_h(s_h)$, $N_h(s_h, a_h)$, and $N_h(s_h, a_h, s_{h+1})$. 
    \STATE $\hat{\P}_h(\cdot|s_h, a_h)\setto N_h(s_h, a_h, \cdot) / N_h(s_h, a_h)$. 
    \ENDFOR
    \ENDFOR
    \STATE Let $(\up{V}_h^k, \low{V}_h^k, \pi^k)$ denote the value estimates and policy at the beginning of episode $k$.
    \FOR{$s\in\mc{S}$}
    \STATE $\up{V}^{\rm out}_{h_\star+1}(s)\setto \frac{1}{N_{h_\star+1}(s)} \sum_{k:s_{h_\star+1}^k=s} \up{V}_{h_\star+1}^k(s)$. \label{line:vup-out}
    \STATE $\low{V}^{\rm out}_{h_\star+1}(s)\setto \frac{1}{N_{h_\star+1}(s)} \sum_{k:s_{h_\star+1}^k=s} \low{V}_{h_\star+1}^k(s)$. \label{line:vlow-out}
    \STATE Let policy $\pi^{(s)}_{h_\star+1:H}$ be the uniform mixture of $\pi^k_{h_\star+1:H}$ over all $\set{k:s_{h_\star+1}^k=s}$.
    \ENDFOR
    \RETURN Value estimates $(\up{V}^{\rm out}_{h_\star+1}, \low{V}^{\rm out}_{h_\star+1})$. \\
    \hspace{3em} Mixture policy $\pi^{\rm out}_{h_\star+1:H}$ defined as follows: Play policy $\pi^{(s)}_{h_\star+1:H}$ if $s_{h_\star+1}=s$. \label{line:pi-out}
  \end{algorithmic}
\end{algorithm}

We present the \ucbviuplow~algorithm in Algorithm~\ref{algorithm:ucbviuplow}. This algorithm is a specialization of the {\sc Nash-VI} algorithm of~\citep{liu2020sharp} (originally developed for two-player Markov games) into the case with a single player, and with additional modifications on the output value functions and policy that are similar to the certified policy technique of~\citep{bai2020near}. Above, the $\Bonus$ function on Line~\ref{line:ucbviuplow-bonus} is taken as the Bernstein bonus:
\begin{align*}
  \Bonus(t, \hat{\sigma}^2) \defeq c\paren{\sqrt{\hat{\sigma}^2\iota/t} + (H-h_\star)^2S\iota/t},
\end{align*}
where $\iota\defeq \log(HSAn/\delta)$ is a log factor and $c>0$ is some absolute constant.

We now present the main guarantee for \ucbviuplow, which is going to be used in proving the main theorem.
\begin{lemma}[Theoretical guarantee for \ucbviuplow]
  \label{lemma:ucbviuplow}
  Suppose Algorithm~\ref{algorithm:ucbviuplow} is run for $n_{\rm UCB}$ episodes. Then, the output lower value estimate $\low{V}^{\rm out}_{h_\star+1}$ and the policy $\pi^{\rm out}_{h_\star+1:H}$ satisfies the following with probability at least $1-\delta$:
  \begin{enumerate}[label=(\alph*),wide]
  \item Small error in lower value estimates: $0\le \low{V}^{\rm out}_{h_\star+1}(s)\le V^\star_{h_\star+1}(s)$ for all $s\in\mc{S}$, and
    \begin{align*}
      \sum_{s\in\mc{S}} d_{h_\star+1}(s) \paren{V^\star_{h_\star+1}(s) - \low{V}^{\rm out}_{h_\star+1}(s)} \le O\paren{ \sqrt{\frac{(H-h_\star)^3SA\iota^3}{n_{\rm UCB}}} + \frac{(H-h_\star)^3S^2A\iota^3}{n_{\rm UCB}}},
    \end{align*}
    where $d_{h_\star+1}(\cdot)$ is the distribution of the initial state $s_{h_\star+1}$.
  \item $\pi^{\rm out}$ achieves at least $\low{V}^{\rm out}$ value: we have $V_{h_\star+1}^{\pi^{\rm out}}(s) \ge \low{V}^{\rm out}_{h_\star+1}(s)$ for all $s\in\mc{S}$.
  \end{enumerate}
\end{lemma}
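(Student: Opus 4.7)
My plan is to adapt the Nash-VI analysis of~\citet{liu2020sharp} to the single-player, truncated-horizon setting with effective length $H' \defeq H - h_\star$, augmented with a certified-policy argument to secure the per-state lower-value guarantee in part~(b). First, I will set up a high-probability concentration event on which the empirical Bernstein inequality
\[
\abs{[(\hat{\P}_h - \P_h) V](s,a)} \lesssim \sqrt{[\hat{\V}_h V](s,a)\,\iota/N_h(s,a)} + H'\iota/N_h(s,a)
\]
holds uniformly over every value-function iterate $V$ generated by the algorithm (via a union bound over the polynomially-many realizations). On this event, a backward induction on $h$ will yield the four-way sandwich $0 \le \low{V}^k_h(s) \le V^{\pi^k}_h(s) \le V^\star_h(s) \le \up{V}^k_h(s)$ for every $k$, every $h \ge h_\star+1$, and every $s$, where the Bernstein bonus $\beta$ (applied to the reference $(\up{V}_{h+1}+\low{V}_{h+1})/2$) absorbs the additive error $(\hat{\P}_h - \P_h)V^\star_{h+1}$, while the slack term $\gamma = (c/H')\hat{\P}_h(\up{V}_{h+1} - \low{V}_{h+1})$ absorbs the multiplicative error $(\hat{\P}_h - \P_h)(\up{V}_{h+1} - \low{V}_{h+1})$ arising on the gap.

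Next, I will derive a regret-style bound on the per-episode gap $\Delta^k_h(s) \defeq \up{V}^k_h(s) - \low{V}^k_h(s)$. Subtracting the two $Q$-updates and using the $\gamma$ slack gives the recursion
\[
\Delta^k_h(s_h^k, a_h^k) \le \big(1 + \tfrac{c'}{H'}\big)[\P_h \Delta^k_{h+1}](s_h^k, a_h^k) + 2\beta^k_h(s_h^k, a_h^k) + (\text{martingale}),
\]
which unrolls with only an $O(1)$ geometric blow-up because $(1+c'/H')^{H'} = O(1)$. Summing over $k=1,\dots,n_{\rm UCB}$, the Bernstein contribution is controlled by the law of total variance applied to the reference $(\up{V}+\low{V})/2$ (whose total return has variance $O(H'^2)$) combined with the pigeonhole bound $\sum_{k,h} 1/\sqrt{N_h^k(s_h^k,a_h^k)} \lesssim \sqrt{H'SA\,n_{\rm UCB}}$, yielding $\sum_k \Delta^k_{h_\star+1}(s^k_{h_\star+1}) \lesssim \sqrt{H'^3 SA\,\iota^3\, n_{\rm UCB}} + H'^3 S^2 A\iota^3$. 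Dividing by $n_{\rm UCB}$ and passing from the empirical distribution of $\{s^k_{h_\star+1}\}$ to $d_{h_\star+1}$ via Azuma then delivers the error bound in part~(a), and the sandwich $0 \le \low{V}^{\rm out} \le V^\star \le \up{V}^{\rm out}$ follows by averaging the per-episode monotonicity over Lines~\ref{line:vup-out}--\ref{line:vlow-out}. I expect the main obstacle to be verifying that the $\gamma$ slack exactly compensates for the $(\hat{\P}_h - \P_h)\Delta_{h+1}$ fluctuation so as to prevent an exponential-in-$H'$ blow-up; this hinges on choosing the constant $c$ in $\gamma$ large enough relative to the Bernstein concentration constants, together with careful bookkeeping of the higher-order $H'^2 S\iota/N_h$ term.

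For part~(b), I will invoke the certified-policy argument of~\citet{bai2020near}. By the induction from the first paragraph, every deployed policy $\pi^k$ satisfies $V^{\pi^k}_{h_\star+1}(s) \ge \low{V}^k_{h_\star+1}(s)$ pointwise on the good event. The output policy $\pi^{\rm out}$ defined on Line~\ref{line:pi-out} is, conditional on $s_{h_\star+1} = s$, the uniform mixture of those $\pi^k_{h_\star+1:H}$ with $s^k_{h_\star+1} = s$, so by linearity of the value functional in the policy distribution,
\[
V^{\pi^{\rm out}}_{h_\star+1}(s) = \frac{1}{N_{h_\star+1}(s)}\sum_{k:\, s^k_{h_\star+1}=s} V^{\pi^k}_{h_\star+1}(s) \ge \frac{1}{N_{h_\star+1}(s)}\sum_{k:\, s^k_{h_\star+1}=s} \low{V}^k_{h_\star+1}(s) = \low{V}^{\rm out}_{h_\star+1}(s),
\]
which is exactly the claim.
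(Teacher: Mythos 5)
Your overall plan is sound and, at the level of the underlying mathematics, it is the same argument the paper relies on; the difference is that you propose to \emph{re-derive} the optimistic/pessimistic regret analysis from scratch (concentration event, four-way sandwich by backward induction, gap recursion with the $(1+c/(H-h_\star))$ multiplicative slack, pigeonhole plus law of total variance), whereas the paper's proof is a two-line reduction: it observes that \ucbviuplow~is exactly the {\sc Nash-VI} algorithm of~\citep{liu2020sharp} specialized to a single player on the truncated horizon, and then quotes their Theorem~4 for the regret bound $\sum_k \up{V}^k_{h_\star+1}(s^k_{h_\star+1}) - \low{V}^k_{h_\star+1}(s^k_{h_\star+1}) \le \sqrt{(H-h_\star)^3SAn_{\rm UCB}\iota} + (H-h_\star)^3S^2A\iota^2$ and their Lemma~22 for the sandwich $\up{V}^k \ge V^\star \ge V^{\pi^k}\ge \low{V}^k$. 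Your self-contained route buys independence from the two-player machinery at the cost of several pages of standard but delicate bookkeeping; the paper's route is essentially free. Part~(b) of your argument (linearity of the value in the policy mixture, conditioned on $s_{h_\star+1}=s$) is verbatim the paper's.

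There is one concrete step in your outline that does not go through as stated: the passage ``from the empirical distribution of $\{s^k_{h_\star+1}\}$ to $d_{h_\star+1}$ via Azuma.'' The regret bound controls $\sum_k \bigl(V^\star_{h_\star+1}(s^k_{h_\star+1}) - \low{V}^k_{h_\star+1}(s^k_{h_\star+1})\bigr) = \sum_s N_{h_\star+1}(s)\bigl(V^\star_{h_\star+1}(s) - \low{V}^{\rm out}_{h_\star+1}(s)\bigr)$, i.e.\ a sum weighted by the \emph{counts} $N_{h_\star+1}(s)$. The target in part~(a) is weighted by $d_{h_\star+1}(s)$, and $\low{V}^{\rm out}_{h_\star+1}(s)$ is a \emph{conditional} average over only the episodes with $s^k_{h_\star+1}=s$, not an average over all $n_{\rm UCB}$ episodes. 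An Azuma argument would compare $\frac{1}{n_{\rm UCB}}\sum_k \E_{s\sim d_{h_\star+1}}[V^\star_{h_\star+1}(s)-\low{V}^k_{h_\star+1}(s)]$ with $\frac{1}{n_{\rm UCB}}\sum_k \bigl(V^\star_{h_\star+1}(s^k_{h_\star+1})-\low{V}^k_{h_\star+1}(s^k_{h_\star+1})\bigr)$, but the former is not the quantity you need: averaging $\low{V}^k(s)$ over all $k$ is not the same as averaging over $\{k: s^k_{h_\star+1}=s\}$, and the selection of that index set is correlated with the iterates. The paper handles this by instead invoking the binomial concentration bound (Lemma~\ref{lemma:binomial-concentration}) to get $N_{h_\star+1}(s)\vee 1 \ge n_{\rm UCB}\, d_{h_\star+1}(s)/(C\iota)$ simultaneously for all $s$, so that $d_{h_\star+1}(s) \le C\iota\,(N_{h_\star+1}(s)\vee 1)/n_{\rm UCB}$ converts the count-weighted sum into the $d_{h_\star+1}$-weighted sum directly (the states with $N_{h_\star+1}(s)=0$ contribute an extra $(H-h_\star)S\iota/n_{\rm UCB}$, absorbed into the second term); this is also where the third power of $\iota$ in the stated bound comes from. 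You should replace the Azuma step with this argument.
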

\begin{proof}
  First notice that Algorithm~\ref{algorithm:ucbviuplow} is a special case of the \textsc{Nash-VI} algorithm for two-player Markov games~\citep[Algorithm 1]{liu2020sharp}, where the number of actions for the min-player is one so that the Markov game reduces to a single-player MDP, and the game starts at step $h_\star+1$ (so that the horizon length is $H-h_\star$ instead of $H$).

  Therefore, by~\citep[Theorem 4]{liu2020sharp}, with probability at least $1-\delta/2$ the following happens
  \begin{align*}
    \sum_{k=1}^{n_{\rm UCB}} \up{V}_{h_\star+1}^k(s_{h_\star+1}^k) - \low{V}_{h_\star+1}^k(s_{h_\star+1}^k) \le \sqrt{(H-h_\star)^3SAn_{\rm UCB}\iota} + (H-h_\star)^3S^2A\iota^2,
  \end{align*}
  where $\iota\defeq \log(HSA/\delta)$. Further,~\citep[Lemma 22]{liu2020sharp} implies that for any $(s,k)$,
  \begin{align*}
    \up{V}_{h_\star+1}^k(s) \ge V_{h_\star+1}^\star(s) \ge V_{h_\star+1}^{\pi^k}(s) \ge \low{V}_{h_\star+1}^k(s)
  \end{align*}
  on the same good event. Using the relation betwen $\up{V}^k_{h_\star+1}$ and $V_{h_\star+1}^\star$ yields
    \begin{align*}
      & \quad \sum_{s\in\mc{S}} N_{h_\star+1}(s) \paren{V^\star_{h_\star+1}(s) - \low{V}^{\rm out}_{h_\star+1}(s)} \\
      & \stackrel{(i)}{=} \sum_{k=1}^{n_{\rm UCB}} V^\star_{h_\star+1}(s_{h_\star+1}^k) - \low{V}_{h_\star+1}^k(s_{h_\star+1}^k) \le \sqrt{(H-h_\star)^3SAn_{\rm UCB}\iota} + (H-h_\star)^3S^2A\iota^2,
  \end{align*}
  where (i) used the definition of our $\low{V}^{\rm out}$ in Line~\ref{line:vlow-out}. Now, since $N_{h_\star+1}(s) \sim \Bin(n_{\rm UCB}, d_{h_\star+1}(s))$, by Lemma~\ref{lemma:binomial-concentration} and a union bound over $s\in\mc{S}$, we have with probability at least $1-\delta/2$ that
  \begin{align*}
    N_{h_\star+1}(s)\vee 1 \ge \frac{1}{C\iota} \cdot n_{\rm UCB}d_{h_\star+1}(s)
  \end{align*}
  simultaneously for all $s\in\mc{S}$. 
  
  Plugging this into the preceding bound further yields
  \begin{align*}
    & \quad \sum_{s\in\mc{S}} d_{h_\star+1}(s) \paren{V^\star_{h_\star+1}(s) - \low{V}^{\rm out}_{h_\star+1}(s)} \\
    & \le \frac{C\iota}{n_{\rm UCB}} \sum_{s\in\mc{S}} \paren{N_{h_\star+1}(s)\vee 1} \cdot \paren{V^\star_{h_\star+1}(s) - \low{V}^{\rm out}_{h_\star+1}(s)} \\
    & \le \frac{C\iota}{n_{\rm UCB}} \sum_{s\in\mc{S}} N_{h_\star+1}(s) \cdot \paren{V^\star_{h_\star+1}(s) - \low{V}^{\rm out}_{h_\star+1}(s)} + \frac{C\iota}{n_{\rm UCB}} \sum_{s\in\mc{S}} \indic{N_{h_\star+1}(s) = 0} \cdot \underbrace{\paren{V^\star_{h_\star+1}(s) - \low{V}^{\rm out}_{h_\star+1}(s)}}_{\le H-h_\star} \\
    & \le C\sqrt{\frac{(H-h_\star)^3SA\iota^3}{n_{\rm UCB}}} + C\cdot\frac{(H-h_\star)^3S^2A\iota^3 + (H-h_\star)S\iota}{n_{\rm UCB}} \\
    & \le C\paren{ \sqrt{\frac{(H-h_\star)^3SA\iota^3}{n_{\rm UCB}}} + \frac{(H-h_\star)^3S^2A\iota^3}{n_{\rm UCB}} }
  \end{align*}
  This shows part (a).
  
  For part (b), by our definition of $\pi^{\rm out}$ (Line~\ref{line:pi-out}), we have for any $s\in\mc{S}$ that
  \begin{align*}
    & \quad V^{\pi^{\rm out}}_{h_\star+1}(s) = V^{\pi^{(s)}}_{h_\star+1}(s) \\
    & = \frac{1}{N_{h_\star+1}(s)} \sum_{k:s_{h_\star+1}^k=s} V_{h_\star+1}^{\pi^k}(s) \\
    & \ge \frac{1}{N_{h_\star+1}(s)} \sum_{k:s_{h_\star+1}^k=s} \low{V}_{h_\star+1}^k(s) = \low{V}^{\rm out}_{h_\star+1}(s).
  \end{align*}
  This shows part (b). 
\end{proof}

\subsection{Algorithm \truncpevi}

The algorithm \truncpevi~is similar as the \pevi~algorithm (Algorithm~\ref{algorithm:pevi-adv}) except that the algorithm uses a plug-in estimate of $V^\star_{h_\star+1}$, and only performs (pessimistic) value iteration within step $1$ to $h_\star$. For completeness, we describe the algorithm in Algorithm~\ref{algorithm:trunc-pevi}.

\begin{algorithm}[t]
  \caption{Truncated-PEVI-ADV}
  \label{algorithm:trunc-pevi}
  \begin{algorithmic}[1]
    \REQUIRE Offline dataset $\mc{D}=\set{(s_1^{(i)}, a_1^{(i)}, r_1^{(i)}, \dots,s_H^{(i)}, a_H^{(i)}, r_H^{(i)})}_{i=1}^{n_{\rm offline}}$. End time step $h_\star$. Value function $V^{\rm init}_{h_\star+1}$.
    \STATE Split the dataset $\mc{D}$ into $\mc{D}_{\rf}$, $\mc{D}_{0}$ and $\set{\mc{D}_{h,1}}_{h=1}^{h_\star}$ uniformly at random:
    \begin{align*}
      & n_{\rf} \coloneqq \abs{\mc{D}_{\rf}} = n_{\rm offline}/3, ~~ n_0 \coloneqq \abs{\mc{D}_0} = n_{\rm offline}/3, ~~ n_{1,h} \coloneqq \abs{\mc{D}_{h,1}} \defeq n_{\rm offline}/(3h_\star).
    \end{align*}
    \STATE Learn a reference value function $\what{V}^{\rf}_{1:h_\star} \setto \vilcb(\mc{D}_{\rf})$ via of {\bf a truncated version of} \vilcb (Algorithm~\ref{algorithm:vilcb}), with the modification that only the datae from steps $1:h_\star$ are used, and {\bf $\hat{V}^{\rf}_{h_\star+1}\setto V^{\rm init}_{h_\star+1}$ and is not updated}.
    \STATE Let $N_{h,0}(s,a)$ and $N_{h,0}(s,a,s')$ denote the visitation count of $(s,a)$ and $(s,a,s')$ at step $h$ within dataset $\mc{D}_0$. Construct empirical model estimates:
    \begin{align*}
      & \what{r}_{h,0}(s,a) \setto r_h(s,a)\indic{N_{h,0}(s,a)\ge 1}, \\
      & \what{\P}_{h,0}(s' | s,a) \setto \frac{N_{h,0}(s,a,s')}{N_{h,0}(s,a)\vee 1}.
    \end{align*}
    Similarly define $N_{h,1}(s,a)$, $N_{h,1}(s,a,s')$, $(\what{r}_{h,1},\what{\P}_{h,1})$ for all $h\in[h_\star]$ based on dataset $\mc{D}_{h,1}$.
    \STATE For all $(h,s,a)$, set
      $ b_{h,0}(s,a) \setto c \cdot \paren{\sqrt{\frac{ [\hat{\V}_{h,0} \hat{V}^{\rf}_{h+1}](s,a) \iota}{N_{h,0}(s,a)\vee 1}} +
        \frac{H \iota}{N_{h,0}(s,a)\vee 1}}$,
    where $\iota\defeq \log(HSA/\delta)$. 
    \STATE Set $\hat{V}_{h_\star+1}(s)\setto V^{\rm init}_{h_\star+1}(s)$ for all $s\in\mc{S}$. (Note that $\hat{V}_{h_\star+1}(s)$ {\bf is not updated in the following}.)
    \FOR{$h=h_\star,\dots,1$}
    \STATE Set
    $
      b_{h,1}(s,a) \setto c \cdot \paren{\sqrt{\frac{ [\hat{\V}_{h,1} (\hat{V}_{h+1} - \hat{V}^{\rf}_{h+1})](s,a) \iota}{N_{h,1}(s,a)\vee 1}} + \frac{H \iota}{N_{h,1}(s,a)\vee 1}}$.
    \STATE Perform value update for all $(s,a)$:
    \begin{align*}
        & \hat{Q}_h(s,a) \setto \what{r}_{h,0}(s,a) + \brac{\what{\P}_{h,0} \hat{V}^{\rf}_{h+1}}(s,a) - b_{h,0}(s,a) + \brac{\what{\P}_{h,1} (\hat{V}_{h+1} - \hat{V}^{\rf}_{h+1})}(s,a) - b_{h,1}(s,a); \\
        & \hat{V}_h(s) \setto \brac{\max_a \hat{Q}_h(s, a)} \vee 0.
    \end{align*}
    \STATE Set $\what{\pi}_h(s)\setto \argmax_a \hat{Q}_h(s,a)$ for all $s\in\mc{S}$.
    \ENDFOR
    \RETURN Policy $\what{\pi}=\set{\what{\pi}_h}_{1\le h\le h_\star}$.
  \end{algorithmic}
\end{algorithm}

\subsection{Proof of Theorem~\ref{theorem:online-upper}}
\label{appendix:proof-online-upper-mainthm}
We are now ready to present the proof of Theorem~\ref{theorem:online-upper}.
Throughout this proof, $C$ denotes an absolute constant that can vary from line to line, and all ``good events'' happen with probability at least $1-\delta/10$, which combine to yield the $1-\delta$ high probability guarantee for the final bound.


\paragraph{Guarantees for learned values}
First, Stage 1 in our Algorithm~\ref{algorithm:hybrid} runs the \ucbviuplow~algorithm with $n_{\rm UCB}=n/2$ episodes with initial state $s_{h_\star+1}\sim d^\mu_{h_\star+1}$. Therefore by Lemma~\ref{lemma:ucbviuplow}, its output lower value estimate $\low{V}_{h_\star+1}$ satisfies $\low{V}_{h_\star+1}(s)\le V^ \star_{h_\star+1}(s)$ for all $s\in\mc{S}$, and
\begin{align*}
  \sum_{s\in\mc{S}} d^{\mu}_{h_\star+1}(s) \paren{ V^ \star_{h_\star+1}(s) - \low{V}_{h_\star+1}(s) } \le C\paren{ \sqrt{\frac{(H-h_\star)^3SA\iota^3}{n}} + \frac{(H-h_\star)^3S^2A\iota^3}{n} }.
\end{align*}
Since $\mu$ satisfies $C^{\rm partial}$ partial concentratbility for steps $1:h_\star$ (by Assumption~\ref{assumption:partial-c}), it also satisfies the \emph{state-wise} concentrability at step $h_\star+1$: For any $s'\in\mc{S}$ we have
\begin{align*}
  \frac{d^{\pi_\star}_{h_\star+1}(s')}{d^\mu_{h_\star+1}(s')} = \frac{\sum_{s,a} d^{\pi_\star}_{h_\star}(s,a) \P_{h_\star}(s'|s, a)}{\sum_{s,a} d^{\mu}_{h_\star}(s,a) \P_{h_\star}(s'|s, a)} \le C^{\rm partial}.
\end{align*}
Applying this in the preceding bound, we get
\begin{align*}
  \sum_{s\in\mc{S}} d^{\pi_\star}_{h_\star+1}(s) \paren{ V^ \star_{h_\star+1}(s) - \low{V}_{h_\star+1}(s) } \le C \cdot C^{\rm partial}\paren{ \sqrt{\frac{(H-h_\star)^3SA\iota^3}{n}} + \frac{(H-h_\star)^3S^2A\iota^3}{n} } \eqdef \eps_0.
\end{align*}

Next, in stage 2 we run the \truncpevi~algorithm. In its first (sub)-stage, we learn the reference value function $\what{V}^{\rf}$ via a truncated version of the \vilcb~algorithm (which runs for $(n-n_{\rm UCB})/3=n/6$ episodes). Since we set $\what{V}^{\rf}_{h_\star+1}\setto \low{V}_{h_\star+1}$ and do not update it, we can imitate the proof of Theorem~\ref{theorem:vilcb} for steps $1$ to $h_\star$ (which uses the $C^{\rm partial}$ partial-concentrability assumed in Assumption~\ref{assumption:partial-c}) and obtain $\what{V}^{\rf}_h(s)\le V^\star_h(s)$ for all $h\in[h_\star]$, $s\in\mc{S}$, and (by replacing $H$ with $h_\star$ at appropriate places in the proof in Section~\ref{appendix:proof-vilcb-main})
\begin{align*}
  & \quad \max_{1\le h\le h_\star} \sum_{s\in\mc{S}} d^{\pi_\star}_h(s) \paren{V^\star_h(s) - \what{V}^{\rf}_h(s)} \\
  & \le \sum_{h=1}^{h_\star} \sum_{s,a} d^{\pi_\star}_h(s,a) \cdot  b_h(s,a) + \sum_{s\in\mc{S}} d^{\pi_\star}_{h_\star+1}(s) \paren{ V^ \star_{h_\star+1}(s) - \low{V}_{h_\star+1}(s) } \\
  & \le C\sqrt{\frac{H^2h_\star^3SC^{\rm partial}\iota^2}{n}} + \sum_{s\in\mc{S}} d^{\pi_\star}_{h_\star+1}(s) \paren{ V^ \star_{h_\star+1}(s) - \low{V}_{h_\star+1}(s) } \\
  & \le C\sqrt{\frac{H^2h_\star^3SC^{\rm partial}\iota^2}{n}} + \eps_0 \eqdef \eps_1.
\end{align*}
In its second sub-stage, we learn the final value function $\what{V}$ in a similar fashion as the reference-advantage updates in Algorithm~\ref{algorithm:pevi-adv} (which runs for $n_0+n_1$ episodes where $n_0=n_1=n/6$). Notice again we set $\hat{V}_{h_\star+1}\setto \low{V}_{h_\star+1}$ and do not update it. Therefore we can imiate the proof of Theorem~\ref{theorem:main} (again, this uses the $C^{\rm partial}$ partial-concentrability assumed in Assumption~\ref{assumption:partial-c}) to obtain $\what{V}_h(s)\le V^\star_h(s)$ for all $h\in[h_\star]$, $s\in\mc{S}$ and (by replacing $H$ with $h_\star$ at appropriate places in Section~\ref{sec:provemainstep3})
\begin{align*}
  & \quad \max_{1\le h\le h_\star} \sum_{s\in\mc{S}} d^{\pi_\star}_h(s) \paren{V^\star_h(s) - \what{V}_h(s)} \\
  & \le \sum_{h=1}^{h_\star} \sum_{s,a} d^{\pi_\star}_h(s,a) \cdot  \paren{b_{h,0}(s,a) + b_{h,1}(s,a)} + \sum_{s\in\mc{S}} d^{\pi_\star}_{h_\star+1}(s) \paren{ V^ \star_{h_\star+1}(s) - \low{V}_{h_\star+1}(s) } \\
  & \le C\bigg(\sqrt{\frac{H^2h_\star SC^{\rm partial}\iota^2}{n}} + \frac{H^2h_\star SC^{\rm partial}\iota^{3/2} + Hh_\star^3 SC^{\rm partial}\iota^2}{n} \\
  & \qquad \quad + \sqrt{\frac{h_\star^4SC^{\rm partial}\iota^2}{n}} \cdot H \cdot \eps_1 + \frac{H^2h_\star^2 SC^{\rm partial}\iota^{3/2} + Hh_\star^2SC^{\rm partial}\iota^2}{n} \bigg) + \eps_0 \\
  & \le C\bigg(\sqrt{\frac{H^2h_\star SC^{\rm partial}\iota^2}{n}} + \frac{H^2h_\star SC^{\rm partial}\iota^{3/2} + Hh_\star^3 SC^{\rm partial}\iota^2}{n} \\
  & \qquad \quad + \sqrt{\frac{H^2h_\star^4SC^{\rm partial}\iota^2}{n}} \cdot \paren{\sqrt{\frac{H^2h_\star^3SC^{\rm partial}\iota^2}{n}} + \eps_0}+ \frac{H^2h_\star^2 SC^{\rm partial}\iota^{3/2} + Hh_\star^2SC^{\rm partial}\iota^2}{n} \bigg) + \eps_0 \\
  & \eqdef \eps_2.
\end{align*}
We first let $\eps_0\le \eps/2$ which requires
\begin{align*}
  n\ge N_0 \defeq C\brac{(H-h_\star)^3SA\iota^3(C^{\rm partial})^2 + (H-h_\star)^3S^2A\iota^3/\eps}.
\end{align*}
Then to let the rest of the terms above be also bounded by $\eps/2$, a sufficient condition is
\begin{align*}
  n\ge N_2 \defeq C\brac{\frac{H^2h_\star SC^{\rm partial}\iota^2}{\eps^2} + \frac{H^2h_\star^{3.5}SC^{\rm partial}\iota^2}{\eps} + H^2h_\star^4SC^{\rm partial}\iota^2}.
\end{align*}
Combined, this shows that $\max_{1\le h\le h_\star} \sum_{s\in\mc{S}} d^{\pi_\star}_h(s) \paren{V^\star_h(s) - \what{V}_h(s)}\le \eps$ if
\begin{align*}
  & \quad n\ge C\bigg( \frac{H^2h_\star SC^{\rm partial}\iota^2 + (H-h_\star)^3SA(C^{\rm partial})^2\iota^3}{\eps^2} \\
  & \qquad + \frac{H^2h_\star^{3.5}SC^{\rm partial}\iota^2 + (H-h_\star)^3S^2AC^{\rm partial}\iota^3}{\eps} + H^2h_\star^4SC^{\rm partial}\iota^2 \bigg) \\
  & \ge \max\set{N_0, N_2}.
\end{align*}
Further, when $\eps \le \min\set{h_\star^{-2.5}, C^{\rm partial}/S}$, the $\eps^{-2}$ term above dominates and thus a sufficient condition for the above is
\begin{align}
  \label{equation:n-requirement}
  n \ge \wt{O}\paren{ \frac{H^2h_\star SC^{\rm partial} + (H-h_\star)^3SA(C^{\rm partial})^2}{\eps^2} }.
\end{align}

\paragraph{Guarantees for output policy}
We also show that the final output policy $\what{\pi}$ of Algorithm~\ref{algorithm:hybrid} also satisfies $V_1^\star(s_1) - V_1^{\what{\pi}}(s_1)\le\eps$ building on the above guarantee for $\what{V}$. First, at step $h_\star+1$, as $\what{\pi}_{(h_\star+1):H}=\what{\pi}^{\rm UCB}_{(h_\star+1):H}$, by Lemma~\ref{lemma:ucbviuplow}(b) we have for all $s\in\mc{S}$ that
\begin{align*}
  V^{\what{\pi}}_{h_\star+1}(s) \ge \low{V}_{h_\star+1}(s) = \hat{V}_{h_\star+1}(s).
\end{align*}
Second, using this as a base step for the induction argument in~\ref{lem:algo1monot}, we get that $V^{\what{\pi}}_h(s) \ge \hat{V}_h(s)$ for all $h\in[h_\star]$ and $s$. In particular, at $h=1$ we have $V^{\what{\pi}}_1(s_1) \ge \hat{V}_1(s_1)$. Therefore
\begin{align*}
  V_1^\star(s_1) - V^{\what{\pi}}_1(s_1) \le V_1^\star(s_1) - \hat{V}_1(s_1) = \sum_{s\in\mc{S}} d^{\pi_\star}(s) \paren{V_1^\star(s) - \hat{V}_1(s)} \le \eps.
\end{align*}
This shows the desired near-optimality guarantee for $\what{\pi}$ whenever $\eps \le \min\set{h_\star^{-2.5}, C^{\rm partial}/S}$ and the number of episodes $n$ satisfies~\eqref{equation:n-requirement}. This proves Theorem~\ref{theorem:online-upper}.
\qed

\end{document}